\documentclass{article}

% if you need to pass options to natbib, use, e.g.:
%     \PassOptionsToPackage{numbers, compress}{natbib}
% before loading neurips_2020

% ready for submission
% \usepackage{neurips_2020}

% to compile a preprint version, e.g., for submission to arXiv, add add the
% [preprint] option:
    % \usepackage[preprint]{neurips_2020}

% to compile a camera-ready version, add the [final] option, e.g.:
     \usepackage[final]{neurips_2020}

% to avoid loading the natbib package, add option nonatbib:
    % \usepackage[nonatbib]{neurips_2020}

\usepackage[utf8]{inputenc} % allow utf-8 input
\usepackage[T1]{fontenc}    % use 8-bit T1 fonts
\usepackage{hyperref}       % hyperlinks
\usepackage{url}            % simple URL typesetting
\usepackage{booktabs}       % professional-quality tables
\usepackage{amsfonts}       % blackboard math symbols
\usepackage{nicefrac}       % compact symbols for 1/2, etc.
\usepackage{microtype}      % microtypography
\usepackage{xcolor}
\usepackage{soul}

% \makeatletter
% \let\@fnsymbol\@arabic
% \makeatother

% \newcommand\T{\rule{0pt}{2.6ex}}       % Top strut
% \newcommand\B{\rule[-1.2ex]{0pt}{0pt}} % Bottom strut

%\title{Constant tangent kernels of large systems: when and why
%}
%\title{When and why is the tangent kernel constant: on the linearity of large models 
%}

%\title{When and why is the tangent kernel constant: on the linearity of large models 
%}

\title{On the linearity of large non-linear models: when and why the tangent kernel is constant
}

%{Toward a theory of optimization for over-parameterized systems of non-linear equations: the lessons of deep learning}

% The \author macro works with any number of authors. There are two commands
% used to separate the names and addresses of multiple authors: \And and \AND.
%
% Using \And between authors leaves it to LaTeX to determine where to break the
% lines. Using \AND forces a line break at that point. So, if LaTeX puts 3 of 4
% authors names on the first line, and the last on the second line, try using
% \AND instead of \And before the third author name.

\author{%
  Chaoyue Liu\thanks{Department of Computer Science, The Ohio State University. E-mail: \texttt{liu.2656@osu.edu}} 
  \And Libin Zhu\thanks{Department of Computer Science and Halicioğlu Data Science Institute, University of California, San Diego. E-mail: \texttt{l5zhu@ucsd.edu}} 
  \And Mikhail Belkin\thanks{Halicioğlu Data Science Institute, University of California, San Diego. E-mail: \texttt{mbelkin@ucsd.edu}}
  \\
  %\thanks{Use footnote for providing further information
  %  about author (webpage, alternative address)---\emph{not} for acknowledging
  %  funding agencies.} \\
%   Department of Computer Science\\
%   The Ohio State University\\
%   Columbus, OH 43210 \\
%   \texttt{liu.2656@osu.edu, ~zhu.2300@osu.edu, ~mbelkin@cse.ohio-state.edu } \\
  % examples of more authors
  % \And
  % Coauthor \\
  % Affiliation \\
  % Address \\
  % \texttt{email} \\
  % \AND
  % Coauthor \\
  % Affiliation \\
  % Address \\
  % \texttt{email} \\
  % \And
  % Coauthor \\
  % Affiliation \\
  % Address \\
  % \texttt{email} \\
  % \And
  % Coauthor \\
  % Affiliation \\
  % Address \\
  % \texttt{email} \\
}
\usepackage[citestyle=numeric,natbib=true,bibstyle=numeric,maxbibnames=99]{biblatex}
\addbibresource{neurips_2020.bib}
\usepackage{enumitem}
\usepackage{bbm}
\usepackage{amsthm}
\usepackage{graphicx}
\usepackage{subcaption}
\usepackage{amsmath}
\usepackage{amssymb}
\usepackage{wrapfig}

   %%renew

\renewcommand\L{{\mathcal{L}}}

\newcommand\R{{\mathbb{R}}}

\renewcommand\ln{{\mathrm{ln}}}
\renewcommand\P{{\mathbb{P}}}

\def\rva{{\mathbf{a}}}
\def\rvb{{\mathbf{b}}}
\def\rvc{{\mathbf{c}}}

\def\rvs{{\mathbf{s}}}
\def\rvt{{\mathbf{t}}}

\def\rvv{{\mathbf{v}}}
\def\rvw{{\mathbf{w}}}
\def\rvx{{\mathbf{x}}}
\def\rvy{{\mathbf{y}}}
\def\rvz{{\mathbf{z}}}

% Random matrices

\def\rmT{{\mathbf{T}}}

\def\rmW{{\mathbf{W}}}

\newtheorem{lemma}{Lemma}[section]
\newtheorem{thm}{Theorem}[section]

\newtheorem{prop}{Proposition}[section]

\theoremstyle{definition}

\newtheorem{defi}{Definition}[section]

\newtheorem{remark}{Remark}[section]
\newtheorem{assumption}{Assumption}[section]

\begin{document}

\maketitle

\begin{abstract} 

The goal of this work is to shed light on the remarkable phenomenon of transition to linearity of certain neural networks as their width approaches infinity. We show that the transition to linearity of the model and, equivalently,  constancy of the (neural) tangent kernel (NTK) result from the scaling properties of the norm of the Hessian matrix of the network as a function of the network width.
We present a general framework for understanding the {constancy of the tangent kernel via} Hessian scaling applicable to the standard classes of neural networks. Our analysis provides a new perspective on the phenomenon of constant tangent kernel, which is different from the widely accepted ``lazy training''.
Furthermore, we show that the transition to linearity is not a general property of wide neural networks and does not hold when the last layer of the network is non-linear. 
It is also not necessary for successful optimization by gradient descent. 
%neither generally hold for wide models (including neural networks) {\color{blue} even in the ``lazy training" regime}, nor is necessary for successful optimization by gradient descent. 

\end{abstract}
\section{Introduction}

As the width of certain non-linear neural networks increases, they become linear  functions of their parameters. 
This remarkable property of large models was first identified in~\cite{jacot2018neural} where it was stated in terms of the constancy of the (neural) tangent kernel during the training process. More precisely, consider a neural network or,  generally, a machine learning model $f(\rvw;\rvx)$, which takes $\rvx$ as input and has $\rvw$ as its (trainable) parameters.
Its tangent kernel $K_{(\rvx,\rvz)}(\rvw)$ is defined as follows:
\begin{equation}\label{eq:ntk}
    K_{(\rvx,\rvz)}(\rvw) := \nabla_\rvw f(\rvw;\rvx)^T\nabla_\rvw f(\rvw;\rvz), \quad \textrm{for fixed inputs }\rvx, \rvz \in \mathbb{R}^{d}.
\end{equation}
%\mishacomment{This is not a great notation -- the kernel is constant as a function of $w$, not $x$}
The key finding of~\cite{jacot2018neural} was the fact that for some wide neural networks the kernel $K_{(\rvx,\rvz)}(\rvw)$ is a constant function of the weight $\rvw$ during training. While in the literature, including~\cite{jacot2018neural}, this phenomenon is  described in terms of the (linear) training dynamics, it is important to note that the tangent kernel is associated to the model itself.  As such, it does not depend on the optimization algorithm or the choice of a loss function, which are parts of the training process.

The goal of this work is to clarify a number of issues related to the constancy of the tangent kernel, to provide specific conditions when the kernel is constant, i.e., when non-linear models in the limit, as their width approach infinity, become linear,  and also to explicate the regimes when they do not. One important conclusion of our analysis is that the ``transition to linearity'' phenomenon   discussed in this work  (equivalent to constancy of tangent kernel) cannot be explained by ``lazy training''~\cite{chizat2019lazy} associated to small change of parameters from the initialization point or model rescaling,  which is widely held to be the reason for constancy of the tangent kernel, e.g., ~\cite{sun2019optimization,amari2020any,ghorbani2019linearized} (see Section~\ref{sec:comments} for a detailed discussion).   The transition to linearity is neither  due to a choice of a scaling of the model, nor is  a universal property of large models including infinitely wide neural networks. 
In particular, the models shown  to transition to linearity in this paper  become linear in a Euclidean ball of an arbitrary fixed radius, not just in a small vicinity of the initialization point, where higher order terms of the Taylor series can be ignored.

Our first observation\footnote{While it is a known mathematical fact (see~\cite{868044,sakai1996riemannian}),  we have not seen it in the neural network literature, possibly due to  the discussion  typically  concerned with the  dynamics  of optimization. As  a special case, note that while $\nabla f \equiv const$ clearly implies that $f$ is linear, it is not a priori obvious that the weaker condition  $\|\nabla f\| \equiv const$ is also sufficient.
} is that a function $f(\rvw,\rvx)$ has a constant tangent kernel {\it if and only if} it is linear in $\rvw$, that is
$$
f(\rvw,\rvx) = \rvw^T \phi(\rvx) + f_0(\rvx)
$$
for some ``feature map'' $\phi$ and function $f_0$. 
Thus the constancy of the tangent kernel is directly  linked to the linearity of the underlying model. 

So what is the underlying reason that some large models transition to linearity as a function of the  parameters and when do we expect it to be the case?
As known from the mathematical analysis, the deviation from the linearity is controlled by the second derivative, which is represented, for a multivariate function $f$, by the {\it Hessian matrix}  $H$. If its spectral norm $\|H\|$  is small compared to the gradient $\nabla_{\rvw} f$ in a ball of a certain radius, the function $f$ will be close to linear and will have near-constant tangent kernel in that ball. Crucially, the spectral norm $\|H\|$ depends not just on the magnitude of its entries, but also on the structure of the matrix $H$.
This simple idea underlies the analysis in this paper. Note that throughout this paper we consider the Hessian of the model $f$, not of any related loss function.

{\bf Constant tangent kernel for neural networks with linear output layer.} In what follows we analyze  the class of neural networks with linear output layer, which includes networks that have been found to have constant tangent kernel in~\cite{jacot2018neural,lee2019wide,du2018gradientshallow} and other works. We show that while the gradient norm   $\|\nabla_{\rvw} f\|$ is (omitting log factors) of the order $\Theta(1)$ w.r.t. the network width $m$, the spectral norm of the Hessian matrix $\|H\|$ scales with $m$  as ${1/\sqrt{m}}$.
%and $H$ are all of the order of ${1/ \sqrt{m}}$, where $m$ is the width of the network. As expected, the norm of the gradient . In contrast, we show that that the spectral norm of the Hessian matrix scales with the width  similarly to its individual entries, as ${1/\sqrt{m}}$. 
In the infinite width limit, this implies a vanishing Hessian and hence transition to linearity of the model in a ball of an arbitrary fixed radius.
%\footnote{The conditions on the initialization, i.e., the center of that ball, are discussed in ??? \cite{du2018gradientshallow,du2018gradientdeep}.}.
A consequence of this analysis is the  constancy of the tangent kernel, providing a different perspective on the results in~\cite{jacot2018neural} and the follow-up works. %e.g.~\cite{lee2019wide,du2018gradientshallow,sun2019optimization}. 
%Additionally, we show that constant tangent kernel is not generally a property of wide networks. 

We proceed to expose the underlying  reason why the Hessian matrix scales differently from the gradient and delimit the regimes where this phenomenon exists. {
As we show, the scaling of the Hessian spectral norm is controlled by both the {\it $\infty$-norms} of the vectors $\partial f / \partial \alpha^{(l)}, l \in [L]$, where $\alpha^{(l)}$ is the (vector) value of the $l$-th hidden layer, and the norms of layer-wise derivatives (specifically, the $(2,1,1)$-norm of the corresponding order $3$ tensors). On the other hand, the scaling of the gradient and the tangent kernel is controlled by the {\it $2$-norms} (i.e., Euclidean norms) of $\partial f / \partial \alpha^{(l)}$. As the network width $m$ (i.e., minimal width of hidden layers) is sufficiently large, the discrepancy between the  the $\infty$-norm and $2$-norm increases, while the  $(2,1,1)$-norms remain of the same order. Hence we obtain the discrepancy between the scaling behaviors of the Hessian and gradient.
}

% \st{As we show, the scaling of the Hessian norm relates to two 
% structural properties of (certain) neural networks:  (a) {\it linear output layer}, (b) {\it dependence of non-linear activations on a single linear ``preactivation'' neuron}\footnote{We note that all standard architectures satisfy (b). In fact (b) can be relaxed to allow dependence on a small number of linear preactivations with essentially same analysis.}.

% In particular, for shallow networks these two properties result in the sparsity of the Hessian matrix. It is easy to see that the spectral norm of a sparse matrix is of the same order as its entries, $O(1/\sqrt{m})$. The case of deep networks is  more subtle as their  Hessian matrices  are not generally sparse. In this work we develop a  general recursive condition, showing that inserting an additional wide layer into a deep neural network with a linear output layer does not affect the scaling of the Hessian in terms of $m$. We show how  these criteria follow the structural properties (a) and (b) for 
% several classes of standard deep networks, including combinations of fully connected, convolutional and residual layers. }

{\bf Non-constancy of tangent kernels.} We proceed to demonstrate, both theoretically (Section~\ref{sec:non_constant}) and experimentally (Section~\ref{subsec:numeric}), that the  constancy of tangent kernel is not a general property of large models, including wide networks, even in the ``lazy'' training regime. In particular, 
if the output layer of a network is nonlinear, {e.g., if there is a non-linear activation on the output,} the Hessian norm does not tend to zero as $m\to\infty$, and constancy of tangent kernel will not hold in any fixed neighborhood and along the optimization path, although each individual parameter may undergo only a small change. This demonstrates that the constancy of the tangent kernel relies on specific  structural properties of the models.
Similarly, we show that inserting a narrow ``bottleneck'' layer, even if it is linear,  will generally result in the loss of near-linearity, as the Hessian norm becomes large compared to the gradient $\nabla_{\rvw} f$ of the model.

Importantly, as we discuss in Section~\ref{sec:optimization}, non-constancy of the tangent kernel does not preclude efficient optimization. We construct examples of wide networks which can be provably optimized by gradient descent, yet with tangent kernel provably far from constant along the optimization path and with Hessian norm $\Omega(1)$, same as the gradient.

% We finish this section with some important comments. 
\subsection{Discussion and related work}\label{sec:comments}
We proceed to make a number of remarks in the context of some recent work on the subject. 

{\bf Is the weight change from the initialization to convergence small?} In the recent literature (e.g.,\cite{sun2019optimization,amari2020any,ghorbani2019linearized}) it is sometimes asserted that the constancy of tangent kernel is explained by small change of weight vector  during training, a property related to  ``lazy training'' introduced in~\cite{chizat2019lazy}. 
It is important to point out that the notion  ``small''  depends crucially on the measurement. Indeed, as we discuss below, when measured correctly in relation to the tangent kernel, the change from initialization is {\it not} small. 

Let $\rvw_0$ and $\rvw^*$ be the weight vectors at initialization and at convergence respectively. For example, consider a one hidden layer network of width $m$. Each {\it component} of the weight vector is updated by $O(1/\sqrt{m})$ under gradient descent, as shown in~\cite{jacot2018neural}, and hence for wide networks $\|\rvw^* - \rvw_0\|_\infty = O(1/\sqrt{m})$, a quantity that vanishes with the increasing width. In contrast, the change of the  {\it Euclidean norm}  is not small in training, 
$
\|\rvw^* - \rvw_0\|^2 = \sum_{i=1}^m (w^{*}_i - w_{0,i})^2=\Theta(1)
$.
Thus convergence happens within a Euclidean  ball with  radius independent of the network width. 

In fact, the Euclidean norm of the change of the weight vector cannot be small for Lipschitz continuous models, even in the limit of infinite parameters. This is because \begin{equation}\label{eq:norm_lower_bound}
    \|\rvw^*-\rvw_0\| \ge \frac{|f(\rvw_0;\rvx)-y|}{\sup_{\rvw}\|\nabla_{\rvw} f(\rvw)\|}
\end{equation}
where is $y$ is the label at $\rvx$. 
Note that the difference $|f(\rvw_0;\rvx)-y|$, between the initial prediction $f(\rvw_0;\rvx)$ and the ground truth label $y$, is of the same order as $\| \nabla_{\rvw} f\|$. Thus, we see that $\|\rvw^*-\rvw_0\| = \Omega(1)$, no matter how many parameters the model $f$ has.

We note that the (approximate) linearity of a model in a certain region (and hence the constancy of the tangent kernel) is predicated on the second-order term of the Taylor expansion $(\rvw-\rvw_0)^TH(\rvw-\rvw_0)$. That term depends on the {\it Euclidean distance} from the initialization $\|\rvw-\rvw_0\|$ (and the spectral norm of the Hessian), instead of the $\infty$-norm $\|\rvw-\rvw_0\|_\infty$. Since, as we discussed above, these norms are different by a factor of $\sqrt{m}$, an argument based on small change of individual parameters from initialization cannot explain the remarkable phenomenon of constant tangent kernel.

In contrast to these interpretations, we show that certain large networks have near constant tangent kernel in a ball of fixed radius due to the vanishing Hessian norm, as their widths approach infinity. Indeed, that is the case for networks analyzed in the NTK literature~\cite{jacot2018neural,lee2019wide,du2018gradientshallow,du2018gradientdeep}. 
 
{\bf Can the transition to linearity be explained by model rescaling?}
The work~\cite{chizat2019lazy} introduced the term ``lazy training'' and proposed a mechanism for the constancy of the tangent kernel based on rescaling the model. While, as shown in~\cite{chizat2019lazy}, model rescaling can lead to lazy training, as we discuss below, it does not explain the phenomenon of constant tangent kernel in the setting of the original paper~\cite{jacot2018neural} and consequent works. 

{%\color{blue}
%We note the work  proposes to rescale the model to have a ``lazy training" phenomenon. Here, we show that this rescaling of the model does not explain the constant tangent kernel phenomenon observed in \cite{jacot2018neural,lee2019wide,du2018gradientshallow}. 
% Instead, as we show in this paper, the constancy of the tangent kernel is because of the small ratio of Hessian norm and tangent kernel.

Specifically,~\cite{chizat2019lazy} provides the following  criterion  for the near constancy of the tangent kernel (using their notation):
\begin{equation}\label{eq:criterion}
    \kappa_{f}(\rvw_0):= \underbrace{\left\|f(\rvw_0)-y\right\|}_{\mathcal{A}}\underbrace{\frac{\|D^2f(\rvw_0)\|}{\|Df(\rvw_0)\|^2}}_{\mathcal{B}} \ll 1,
\end{equation}
% \begin{equation}\label{eq:criterion}
%     \kappa_{f}(\rvw_0):= \|f(\rvw_0)-y\|\frac{\|D^2f(\rvw_0)\|}{\|Df(\rvw_0)\|^2} \ll 1,
% \end{equation}
Here $y$ is the ground truth label, $D^2f(\rvw_0)$ is the Hessian of the model $f$ at initialization and $\|Df(\rvw_0)\|^2$ is the norm of the gradient, i.e., a  diagonal entry of the tangent kernel.

The paper \cite{chizat2019lazy} shows that the model $f$ can be rescaled to satisfy the condition in Eq.(\ref{eq:criterion}) as follows. Consider a  rescaled model $\alpha f$ with a scaling factor $\alpha \in \R, \alpha >0$. Then the quantity $\kappa_{\alpha f}$ becomes 
\begin{equation}\label{eq:rescale_criterion}
    \kappa_{\alpha f}(\rvw_0)= \frac{1}{\alpha}\left\|\alpha f(\rvw_0)-y\right\|\frac{\|D^2  f(\rvw_0)\|}{\|D f(\rvw_0)\|^2}= \left\| f(\rvw_0)-\frac{y}{\alpha}\right\|\frac{\|D^2  f(\rvw_0)\|}{\|Df(\rvw_0)\|^2}.
\end{equation}
Assuming that $f(\rvw_0) = 0$ and 
 choosing a large $\alpha$,  forces  $\kappa_{\alpha f}\ll 1$, by rescaling the factor $\mathcal{A}$ to be small, {while keeping $\mathcal{B}$ unchanged}.

While rescaling the model, together with  the important assumption of $f(\rvw_0) = 0$, leads to a lazy training regime, we point out that it is not the same regime as observed in the original work~\cite{jacot2018neural} and followup papers such as~\cite{lee2019wide,du2018gradientshallow} and  also different from practical neural network training, since we usually have  $\mathcal{A} = \|f(\rvw_0)-y\|= \Theta(1)$ in these settings. Specifically:
\begin{itemize}
    \item The assumption of $f(\rvw_0)=0$ is necessary for the rescaled models in~\cite{chizat2019lazy} to have $\mathcal{A}\ll 1$.
Yet, the networks, such as those analyzed in \cite{jacot2018neural}, are initialized so that $f(\rvw_0)=\Theta(1)$. 
\item From Eq.(\ref{eq:rescale_criterion}), we see that rescaling the model $f$ by $\alpha$ is equivalent to rescaling the ground truth label $y$ by $1/\alpha$ without changing the model (this can also be seen from the loss function, cf. Eq.(2)  of \cite{chizat2019lazy}). When $\alpha$ is large, the rescaled label  $y/\alpha$ is close to zero. However,  no such rescaling happens in practice or in works, such as~\cite{jacot2018neural, lee2019wide,du2018gradientshallow}.
The training dynamics of the model with the label $y/\alpha$ does not generally  match the dynamics of the original problem with the label $y$ and will result in a different solution.
%\footnote{We note that another related way to obtain a ``lazy'' setting for a network with a linear last layer is to freeze all  weights except for those in the last layer. The resulting system will trivially have constant tangent kernel, yet it is very different from the original system with unfrozen weights.}.
%Thus the optimization solutions arrived by rescaled and original networks will generally be different.  
\end{itemize}
Since $\mathcal{A} = \Theta(1)$, in the NTK setting and many practical settings, to satisfy the criterion in Eq.(\ref{eq:criterion}), the model needs to have  
$\mathcal{B}={\|D^2f(\rvw_0)\|}/{\|Df(\rvw_0)\|^2} \ll 1$. % i.e., $\|H\|/K \ll 1$ in our notation. 
%Yet, the term $\mathcal{B}$ is invariant under rescaling. 
In fact, we note that the analysis of 2-layer networks in~\cite{chizat2019lazy} uses a different argument, not based on model rescaling.
Indeed, as we  show in this work,  $\mathcal{B}$ is small for 
a broad class of wide neural networks with linear output layer, due to a vanishing norm of the Hessian as the width of the network increases.

%this is what we show for wide neural networks that have linear output layer in this paper (both shallow and deep, including CNNs and ResNets) .

In summary, the rescaled models satisfy the criterion, $\kappa\ll 1$, by scaling the factor $\mathcal{A}$ to be small, while the neural networks,  such as the ones considered in the original work~\cite{jacot2018neural}, satisfy this criterion by having $\mathcal{B} \ll 1$, while $\mathcal{A}=\Theta(1)$.}

{\bf Is near-linearity necessary for optimization?} 
In this work we  concentrate on understanding the phenomenon of constant tangent kernel, when  large non-linear systems transition to linearity with  increasing number of parameters. The linearity implies convergence of gradient descent assuming that the tangent kernel is non-degenerate at initialization.  However, it is important to emphasize that the linearity or near-linearity is not a necessary condition for convergence. Instead, convergence is implied by uniform conditioning of the tangent kernel in a neighborhood of a certain radius, while the linearity is controlled by the norm of the Hessian. These are conceptually and practically different phenomena as we show on an example of a wide shallow network  with a  non-linear output layer in Section~\ref{sec:optimization}. See also our paper~\cite{liu2020toward} for an in-depth discussion of optimization.

\section{Notation and Basic Results on Tangent Kernel and Hessian}
\subsection{Notation {\bf and Preliminary}}
We use bold lowercase letters, e.g., $\rvv$, to denote vectors,  capital letters, e.g., $W$, to denote matrices, and bold capital letters, e.g., $\rmW$, to denote matrix tuples  or higher order tensors.
We denote the set $\{1,2,\cdots, n\}$ as $[n]$.  
%Given a function $\sigma(\cdot)$, we use $\sigma'(\cdot)$ and $\sigma''(\cdot)$ to denote its first and second derivatives w.r.t. its argument, respectively. 
We use the following norms in our analysis:
For vectors, we use $\|\cdot \|$ to denote the Euclidean norm (a.k.a. vector $2$-norm) and $\|\cdot\|_{\infty}$ for the ${\infty}$-norm;   For matrices, we use $\|\cdot\|$ to denote the spectral norm (i.e., matrix 2-norm) and $\|\cdot\|_F$ to denote the Frobenius norm. In addition, we use tilde, e.g., $\tilde{O}(\cdot)$, to suppress  logarithmic terms in Big-O notation.

We use $\nabla_{\rvw} f$ to represent the  derivative of $f(\rvw;\rvx)$  with respect to  $\rvw$. For (vector-valued) functions, we use the following definition of its Lipschitz continuity:
\begin{defi}
 A function $f:\mathbb{R}^m \to \mathbb{R}^n$ is called $\mathsf{L}_f$-Lipschitz continuous, if there exists $\mathsf{L}_f>0$, such that for all $\rvx,\rvz\in\mathbb{R}^m$, $\|f(\rvx)-f(\rvz)\| \le \mathsf{L}_f \|\rvx-\rvz\|$.
\end{defi}

For an order $3$ tensor, we define its $(2,2,1)$-norm:

\begin{defi}[$(2,2,1)$-norm of order $3$ tensors]
 For an order $3$ tensor $\rmT\in \mathbb{R}^{d_1\times d_2 \times d_3}$, with components $T_{ijk}, i\in [d_1], j\in [d_2], k\in [d_3]$, define its ${(2,2,1)}$-norm as 
\begin{equation}\label{eq:defi_order_3_norm}
    \|\rmT\|_{2,2,1} := \sup_{\|\rvx\|=\|\rvz\|=1} \sum_{k=1}^{d_3}\Big|\sum_{i=1}^{d_1}\sum_{j=1}^{d_2}T_{ijk}x_iz_j\Big|, \quad \textrm{where } \ \rvx \in \mathbb{R}^{d_1},\rvz \in \mathbb{R}^{d_2}.
\end{equation}
\end{defi}
We will later need the following proposition which is essentially a special case of the the Holder inequality. 
\begin{prop}\label{prop:221}
Consider a matrix $A$ with components $A_{ij} = \sum_k T_{ijk}v_k$, where $T_{ijk}$ is a component of the order $3$ tensor $\rmT$ and $v_k$ is a component of vector $\rvv$. Then the spectral norm of $A$ satisfies 
\begin{equation}\|A\| \le \|\rmT\|_{2,2,1}\|\rvv\|_{\infty}
\end{equation}
\end{prop}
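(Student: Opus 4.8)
The plan is to bound the spectral norm $\|A\| = \sup_{\|\rvx\| = \|\rvz\| = 1} |\rvx^T A \rvz|$ directly by expanding the quadratic form in terms of the tensor $\rmT$ and the vector $\rvv$. Concretely, for unit vectors $\rvx \in \mathbb{R}^{d_1}$ and $\rvz \in \mathbb{R}^{d_2}$ we write
\begin{equation}
\rvx^T A \rvz = \sum_{i,j} A_{ij} x_i z_j = \sum_{i,j} \Big(\sum_k T_{ijk} v_k\Big) x_i z_j = \sum_k v_k \Big(\sum_{i,j} T_{ijk} x_i z_j\Big).
\end{equation}
Now I would apply Hölder's inequality to the sum over $k$, pairing $\infty$-norm with $1$-norm: $\big|\sum_k v_k c_k\big| \le \|\rvv\|_\infty \sum_k |c_k|$ where $c_k := \sum_{i,j} T_{ijk} x_i z_j$. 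This gives
\begin{equation}
|\rvx^T A \rvz| \le \|\rvv\|_\infty \sum_k \Big|\sum_{i,j} T_{ijk} x_i z_j\Big|.
\end{equation}

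The next step is to recognize that the remaining factor is exactly what appears inside the supremum defining $\|\rmT\|_{2,2,1}$ in Eq.~\eqref{eq:defi_order_3_norm}. Since $\rvx$ and $\rvz$ are unit vectors, $\sum_k |\sum_{i,j} T_{ijk} x_i z_j| \le \sup_{\|\rvx'\|=\|\rvz'\|=1} \sum_k |\sum_{i,j} T_{ijk} x_i' z_j'| = \|\rmT\|_{2,2,1}$. Combining, $|\rvx^T A \rvz| \le \|\rmT\|_{2,2,1} \|\rvv\|_\infty$ for all unit $\rvx, \rvz$, and taking the supremum over such $\rvx, \rvz$ yields $\|A\| \le \|\rmT\|_{2,2,1} \|\rvv\|_\infty$.

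There is essentially no serious obstacle here — the proposition is, as the text says, a repackaging of Hölder's inequality, and the only things to be careful about are: (i) using the characterization $\|A\| = \sup_{\|\rvx\|=\|\rvz\|=1} |\rvx^T A \rvz|$ of the spectral norm (valid for any matrix, not just square ones, with $\rvx, \rvz$ of the appropriate dimensions), and (ii) making sure the bound is uniform in $\rvx, \rvz$ before passing to the supremum, so that the $\|\rmT\|_{2,2,1}$ term — itself a supremum — can be pulled out. One minor subtlety worth noting is the direction of the triangle inequality: we move the absolute value inside past $\sum_k$ only after factoring out $v_k$, which is exactly the order dictated by Hölder; doing it in the wrong order would give a weaker or incorrect bound. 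If desired, one could alternatively phrase the argument via the operator-norm identity $\|A\| = \sup_k \text{(something)}$, but the bilinear-form route above is the cleanest and matches the $(2,2,1)$-norm definition verbatim.
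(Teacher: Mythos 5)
Your argument is correct and matches the paper's proof essentially verbatim: both expand the bilinear form $\rvx^T A \rvz = \sum_k v_k\bigl(\sum_{i,j} T_{ijk}x_i z_j\bigr)$, apply H\"older's inequality in $k$ to pull out $\|\rvv\|_\infty$, and bound the remaining sum by the $(2,2,1)$-norm before taking the supremum over unit $\rvx,\rvz$. Nothing further is needed.
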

\begin{proof}
Note that spectral norm  is defined as $\|A\| = \sup_{\|\rvx\|=\|\rvz\|=1}\rvx^TA\rvz$. Then
\begin{equation}
    \|A\|=\sup_{\|\rvx\|=\|\rvz\|=1}\sum_{i,j,k} T_{ijk}x_iz_jv_k\le \max_{k}|v_k|\sup_{\|\rvx\|=\|\rvz\|=1}\sum_{k}\Big|\sum_{i,j} T_{ijk}x_iz_j\Big| = \|\rvv\|_{\infty}\|\rmT\|_{2,2,1}.\nonumber
\end{equation}
\end{proof}

% \hfill $\square$

\subsection{Tangent kernel and the Hessian}
Consider a machine learning model, e.g., a neural network, $f(\rvw;\rvx)$, which takes $\rvx\in \mathbb{R}^d$ as input and has $\rvw\in\mathbb{R}^p$ as the trainable parameters. Throughout this paper, we assume $f$ is twice differentiable  with respect to the parameters $\rvw$. To simplify the analysis, we further assume the output of the model $f$ is a scalar.
%Its tangent kernel $K$ is defined as follow:
%\begin{equation}
%    K_{(\rvx,\rvz)}(\rvw) := \nabla %f(\rvw;\rvx)^T\nabla f(\rvw;\rvz), \quad %\textrm{for any inputs }\rvx, \rvz \in %\mathbb{R}^{d}.
%\end{equation}
Given a set of points 
$\{\rvx_i\}_{i=1}^n$, where each $\rvx_i\in \mathbb{R}^d$, one can build a $n\times n$ tangent kernel matrix $K(\rvw)$, where each entry $K_{ij}(\rvw) = K_{(\rvx_i,\rvx_j)}(\rvw)$.
%When the model $f$ is a neural network, the tangent kernel is also known as neural tangent kernel (NTK)~\cite{jacot2018neural}.
% The Hessian matrix of the model is defined by $H := \frac{\partial^2 f}{\partial \rvw^2}\in\mathbb{R}^{p\times p}$.

As discovered in~\cite{jacot2018neural} and analyzed in the consequent works~\cite{lee2019wide,du2018gradientshallow} the tangent kernel is constant for certain infinitely wide networks during training by gradient descent methods.
First, we observe that the constancy of the tangent kernel is equivalent to the linearity of the model. While the mathematical result is not new (see~\cite{868044,sakai1996riemannian}), we have not seen this stated in the machine learning literature (the proof can be found in Appendix~\ref{secapp:linearity}).
%As is discovered in \cite{jacot2018neural,du2018gradientshallow}, the (neural) tangent kernel is amazingly constant for certain types of infinitely wide (or almost constant for sufficiently wide) neural networks, during training by gradient descent methods. This (near) constancy of the tangent kernel makes the optimization dynamics resembles to that of a linear model~\cite{lee2019wide},  despite the corresponding loss landscape is highly non-convex. In this work, we unveil the underlying reasons for why the tangent kernels of such complicated models can be (near) constant.
\begin{prop}[Constant tangent kernel = Linear model]\label{prop:linearity}
The tangent kernel of a differentiable function $f(\rvw;\rvx)$ is constant if and only if $f(\rvw;\rvx)$ is linear in $\rvw$.
\end{prop}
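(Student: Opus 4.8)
The plan is to prove both directions of the equivalence, with the nontrivial direction being that a constant tangent kernel forces linearity. The easy direction is immediate: if $f(\rvw;\rvx) = \rvw^T\phi(\rvx) + f_0(\rvx)$, then $\nabla_\rvw f(\rvw;\rvx) = \phi(\rvx)$ is independent of $\rvw$, so $K_{(\rvx,\rvz)}(\rvw) = \phi(\rvx)^T\phi(\rvz)$ does not depend on $\rvw$. The substance is the converse, and the key observation is that ``constant tangent kernel'' for all input pairs $(\rvx,\rvz)$ is a statement about the Gram matrix of the collection of vectors $\{\nabla_\rvw f(\rvw;\rvx)\}_\rvx$ as $\rvw$ varies: constancy of all inner products $\langle \nabla_\rvw f(\rvw;\rvx), \nabla_\rvw f(\rvw;\rvz)\rangle$ pins down the gradient map up to a (possibly $\rvw$-dependent) orthogonal transformation.

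First I would fix a single input $\rvx$ and set $g(\rvw) := \nabla_\rvw f(\rvw;\rvx) \in \mathbb{R}^p$. The hypothesis with $\rvz = \rvx$ gives $\|g(\rvw)\|^2 = \textrm{const}$; this is the ``$\|\nabla f\| \equiv \textrm{const}$'' situation flagged in the footnote of the excerpt. Differentiating $\|g(\rvw)\|^2$ with respect to $\rvw$ yields $H(\rvw) g(\rvw) = 0$, where $H(\rvw) = \nabla_\rvw g(\rvw)$ is the Hessian of $f$ (symmetric by twice-differentiability). So at every point the gradient lies in the kernel of the Hessian. Next I would bring in a second input $\rvz$: constancy of $\langle \nabla_\rvw f(\rvw;\rvx), \nabla_\rvw f(\rvw;\rvz)\rangle$, differentiated in $\rvw$, gives $H(\rvw;\rvx)\,\nabla_\rvw f(\rvw;\rvz) + H(\rvw;\rvz)\,\nabla_\rvw f(\rvw;\rvx) = 0$ where $H(\rvw;\cdot)$ denotes the Hessian for the respective input. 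The goal is to upgrade these first-order consequences into the global statement $H(\rvw;\rvx) \equiv 0$ for all $\rvw$ and all $\rvx$, which by connectedness of $\mathbb{R}^p$ and integration immediately gives $f(\rvw;\rvx) = \rvw^T\phi(\rvx) + f_0(\rvx)$ with $\phi(\rvx) = \nabla_\rvw f(\rvw_0;\rvx)$.

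To get $H \equiv 0$, I would argue along a line segment. Fix $\rvw_0$ and a direction $\rvv$, and consider the scalar curve $h(t) := \langle \nabla_\rvw f(\rvw_0 + t\rvv;\rvx),\, \rvv\rangle = \frac{d}{dt} f(\rvw_0 + t\rvv;\rvx)$. One shows $h'(t) = \rvv^T H(\rvw_0+t\rvv;\rvx)\rvv$, and the plan is to show this vanishes. The trick (this is the heart of the argument in \cite{sakai1996riemannian}) is to use the polarization identity together with the constancy of the full kernel: for the two-input relation above, contract with suitable vectors and exploit symmetry of each Hessian to conclude that $\rvv^T H(\rvw;\rvx)\rvw' = 0$ whenever $\rvw'$ is a gradient direction; then show that along the line $\rvw_0 + t\rvv$ the tangent direction $\rvv$ can be related to gradient directions using $H g = 0$. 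Concretely: from $H(\rvw;\rvx) g(\rvw;\rvx) = 0$ we know $g$ is constant along its own gradient flow; combining with the analogous identities for all $\rvz$ and a spanning argument over inputs, one forces $H(\rvw;\rvx)$ to annihilate a subspace that, as $\rvw$ moves, must be all of $\mathbb{R}^p$ — or more cleanly, one shows directly that $t \mapsto \nabla_\rvw f(\rvw_0 + tg(\rvw_0;\rvx);\rvx)$ is constant, hence $f$ restricted to that line is affine, and then a covering/连续性 argument over directions closes it.

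The main obstacle is exactly this last upgrade: the pointwise conditions $Hg = 0$ and the mixed relation do not obviously integrate to $H \equiv 0$ without carefully choosing the curves along which to integrate, because a priori the Hessian could be nonzero on directions orthogonal to all gradients. The resolution is that the tangent kernel condition holds for \emph{all} pairs of inputs simultaneously, so the span of $\{\nabla_\rvw f(\rvw;\rvx)\}_\rvx$ is a subspace on which the model is genuinely varying, and one argues (i) $f$ depends on $\rvw$ only through the projection onto this span, and (ii) on this span the second-order behavior is killed by the $Hg=0$ relations applied to a basis. I expect the cleanest writeup to fix a basis of inputs $\rvx_1,\dots,\rvx_k$ whose gradients span the relevant subspace at $\rvw_0$, track the $k\times k$ Gram matrix (constant by hypothesis), and show its constancy forces each $\rvw \mapsto \nabla_\rvw f(\rvw;\rvx_i)$ to be constant by a connectedness argument — after which linearity is immediate by the fundamental theorem of calculus. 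Since the excerpt defers the proof to Appendix~\ref{secapp:linearity} and cites \cite{868044,sakai1996riemannian}, I would expect the authors' proof to follow this Riemannian-geometry-flavored route, essentially the statement that a map with constant metric pullback is an affine isometry onto its image.
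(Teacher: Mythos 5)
Your easy direction is fine and matches the paper. The converse, however, has a genuine gap, and it sits exactly where you flag it: the ``last upgrade'' from the pointwise identities to $H\equiv 0$. The identities you derive --- $H(\rvw;\rvx)\nabla_\rvw f(\rvw;\rvx)=0$ from the diagonal and the mixed relation from the off-diagonal entries --- are correct necessary conditions, but no amount of contracting, polarizing, or spanning over inputs can close the argument, because the statement is not locally true: the distance function to a curved hypersurface has $\|\nabla f\|\equiv 1$ on an open set, satisfies $H\nabla f=0$ there, and is not affine. The only thing that rules such examples out is that they do not extend to twice-differentiable functions on \emph{all} of $\R^p$, so any valid proof must use the global structure of the domain, and your sketch never does --- ``connectedness'' and a basis of inputs whose gradients span a subspace do not substitute for it. Indeed, constancy of the Gram matrix pins the gradient family down only up to a $\rvw$-dependent orthogonal transformation, and removing that freedom \emph{is} the theorem, not a tool you can invoke. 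Note also that multiple inputs cannot be the missing ingredient: the paper's proof uses only the diagonal entries, i.e.\ $\|\nabla_\rvw f(\rvw;\rvx)\|\equiv c$ for each fixed $\rvx$, so the conclusion must already hold input-by-input.

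Concretely, the paper's argument (following \cite{868044,sakai1996riemannian}) has two stages. First, along the gradient flow $\dot\rvw=\nabla f(\rvw)$ one gets $f(\rvw(t))=c^2t+f(\rvw_0)$ and, by the equality case of the Lipschitz estimate $c^2|t_1-t_2|\le c\,\|\rvw(t_1)-\rvw(t_2)\|$ against the arc length $c|t_1-t_2|$, the flow line is the straight line $\rvw(t)=\rvw_0+t\nabla f(\rvw_0)$; your $H\nabla f=0$ computation recovers this same piece. The second stage is the genuinely global step you are missing: for any $\rvv$ with $f(\rvv)=f(\rvw_0)$, comparing $c^2|t|=|f(\rvw_0+t\nabla f(\rvw_0))-f(\rvv)|$ with the length of a curve joining $\rvw_0+t\nabla f(\rvw_0)$ to $\rvv$ gives $c^4t^2\le\|\rvv-\rvw_0\|^2+t^2+2t\langle\rvv-\rvw_0,\nabla f(\rvw_0)\rangle$, and letting $t\to\pm\infty$ (which requires the flow lines to be defined for all $t$, i.e.\ the domain to be all of $\R^p$) forces $\langle\nabla f(\rvw_0),\rvv-\rvw_0\rangle=0$. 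Hence every level set lies in, and in fact equals, the hyperplane orthogonal to the gradient, the level sets are parallel hyperplanes, and $f$ is affine in the normal direction. Some step of this ``$t\to\pm\infty$'' character is indispensable; without it your differential identities are satisfied by non-affine functions and the proof cannot go through.
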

%As discussed in the comments in the Introduction, the tangent kernel analyzed in \cite{jacot2018neural,lee2019wide,du2018gradientshallow} is (almost) constant in a finite (not infinitesimal) domain, in the sense of  Euclidean norm. Hence, the near-linearity of the model suggests a small Hessian matrix.
Of course for a model to be linear it is necessary and sufficient for the Hessian to vanish.
The following proposition extends this result by showing that small Hessian norm is a sufficient condition for near-constant tangent kernel. The proof can be found in Appendix~\ref{secapp:pfhessian}.
\begin{prop}[Small Hessian norm $\Rightarrow$ Small change of tangent kernel]\label{prop:Hessiantokernel}
Given a point $\rvw_0\in\mathbb{R}^p$ and a ball $B(\rvw_0,R) := \{\rvw\in \mathbb{R}^p: \|\rvw-\rvw_0 \|\le R\}$ with fixed radius $R>0$, if the Hessian matrix satisfies $\|H(\rvw)\| < \epsilon$, where $\epsilon>0$, for all $\rvw\in B(\rvw_0,R)$,  then the tangent kernel $K(\rvw)$ of the model, as a function of $\rvw$, satisfies
\begin{equation}
   | K_{(\rvx,\rvz)}(\rvw) - K_{(\rvx,\rvz)}(\rvw_0)|  = O(\epsilon R), \quad \forall \rvw\in B(\rvw_0,R),\ \forall \rvx,\rvz \in \mathbb{R}^d. 
\end{equation}
\end{prop}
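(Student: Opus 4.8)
The plan is to bound the difference $|K_{(\rvx,\rvz)}(\rvw) - K_{(\rvx,\rvz)}(\rvw_0)|$ by controlling how much the gradient $\nabla_\rvw f(\rvw;\rvx)$ can move as $\rvw$ travels from $\rvw_0$ to $\rvw$ inside the ball, using the Hessian bound as a Lipschitz estimate on the gradient map. First I would recall that $K_{(\rvx,\rvz)}(\rvw) = \langle \nabla_\rvw f(\rvw;\rvx), \nabla_\rvw f(\rvw;\rvz)\rangle$, so the difference of kernels is a difference of inner products; I would split it via the standard telescoping identity
\begin{equation}
\langle a, b\rangle - \langle a_0, b_0\rangle = \langle a - a_0, b\rangle + \langle a_0, b - b_0\rangle,\nonumber
\end{equation}
with $a = \nabla_\rvw f(\rvw;\rvx)$, $a_0 = \nabla_\rvw f(\rvw_0;\rvx)$, and similarly $b, b_0$ for $\rvz$. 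Applying Cauchy–Schwarz to each term reduces everything to two ingredients: a bound on $\|\nabla_\rvw f(\rvw;\rvx) - \nabla_\rvw f(\rvw_0;\rvx)\|$ and a bound on $\|\nabla_\rvw f(\rvw;\rvz)\|$ (and the analogous quantities with $\rvx,\rvz$ swapped).

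The first ingredient follows from the Hessian bound: for fixed input $\rvx$, the map $\rvw \mapsto \nabla_\rvw f(\rvw;\rvx)$ has Jacobian equal to the Hessian $H(\rvw)$, whose spectral norm is $< \epsilon$ on the ball. Parametrizing the segment $\rvw(t) = \rvw_0 + t(\rvw - \rvw_0)$ for $t \in [0,1]$ — which stays in the convex ball $B(\rvw_0, R)$ — and integrating,
\begin{equation}
\nabla_\rvw f(\rvw;\rvx) - \nabla_\rvw f(\rvw_0;\rvx) = \int_0^1 H(\rvw(t))(\rvw - \rvw_0)\, dt,\nonumber
\end{equation}
so that $\|\nabla_\rvw f(\rvw;\rvx) - \nabla_\rvw f(\rvw_0;\rvx)\| \le \epsilon \|\rvw - \rvw_0\| \le \epsilon R$. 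The second ingredient is the crude bound $\|\nabla_\rvw f(\rvw;\rvz)\| \le \|\nabla_\rvw f(\rvw_0;\rvz)\| + \epsilon R$; since $\rvw_0$ is a fixed reference point, $\|\nabla_\rvw f(\rvw_0;\rvz)\|$ is a constant (independent of $\rvw$), and thus $\|\nabla_\rvw f(\rvw;\rvz)\| = O(1)$ uniformly on the ball. Combining, each of the two telescoped terms is $O(\epsilon R)$, giving the claimed $|K_{(\rvx,\rvz)}(\rvw) - K_{(\rvx,\rvz)}(\rvw_0)| = O(\epsilon R)$.

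The only real subtlety — and the step I would be most careful about — is the constant hidden in the $O(\cdot)$: it depends on $\|\nabla_\rvw f(\rvw_0;\rvx)\|$ and $\|\nabla_\rvw f(\rvw_0;\rvz)\|$, i.e., on diagonal entries of the tangent kernel at $\rvw_0$. One should state that the implied constant is uniform over $\rvx,\rvz$ only insofar as these gradient norms at initialization are uniformly bounded (which holds for the neural network families considered later, where they are $\tilde O(1)$). If one wants a fully self-contained statement, I would absorb $\sup_\rvx \|\nabla_\rvw f(\rvw_0;\rvx)\|$ into the constant and note explicitly that the bound reads $|K_{(\rvx,\rvz)}(\rvw) - K_{(\rvx,\rvz)}(\rvw_0)| \le 2\epsilon R \sup_\rvx\|\nabla_\rvw f(\rvw_0;\rvx)\| + \epsilon^2 R^2$. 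Everything else is routine: differentiability of $f$ in $\rvw$ (assumed), convexity of the ball for the segment argument, and the fundamental theorem of calculus applied componentwise to the gradient.
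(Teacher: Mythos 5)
Your proposal is correct and follows essentially the same route as the paper's proof: the telescoping decomposition of the inner-product difference with Cauchy--Schwarz, combined with the integral (Taylor) bound $\|\nabla_\rvw f(\rvw;\rvx)-\nabla_\rvw f(\rvw_0;\rvx)\|\le \epsilon R$ on the convex ball. Your extra remark making explicit that the hidden constant depends on the gradient norms at $\rvw_0$ is a slightly more careful version of the paper's closing statement that the gradients are bounded by smoothness.
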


As we shall see in Section~\ref{sec:linear_ouput_nn}, all neural networks that are proven in \cite{jacot2018neural,du2018gradientshallow,du2018gradientdeep} to have (near) constant tangent kernel during training, have small (zero, in the limit of $m\to\infty$) spectral norms of the corresponding Hessian matrices.
%while the changes of the parameters $\|\rvw^*-\rvw_0\|$, between at initialization and at convergence, have finite Euclidean norms. 

%Moreover, we figure out a class of neural networks, that do not have small Hessian spectral norms and do not have constant tangent kernel, even in the infinite width limit.

% In the following sections, we will show in details the following:
% \begin{itemize}
%     \item Neural networks with {\it linear} output layers, which include the ones analysed in \cite{jacot2018neural,du2018gradientshallow,du2018gradientdeep}, have sufficiently small Hessian spectral norms, when the width of the networks are sufficiently large. In this case, the tangent kernels are constant during training with gradient descent, in the limit of infinite width.
%     \item Neural networks with {\it non-linear} output layers, which do not have small spectral norms of the Hessian matrices, do not have constant tangent kernels, even if the the width is extremely large or infinite.
% \end{itemize}

\section{ Transition to linearity: non-linear neural networks with linear output layer}\label{sec:linear_ouput_nn}
In this section, we analyze the class of neural networks with linear output layer, i.e., there is no non-linear activation on the final output. 
%This type of networks includes the ones analysed in \cite{jacot2018neural,du2018gradientshallow,du2018gradientdeep}. 
We show that the spectral norm of the Hessian matrix becomes small, when the width of each hidden layer increases. In the limit of infinite width, these spectral norms vanish and the models become linear, with constant tangent kernels. We point out that the neural networks that are already shown to have constant tangent kernels in~\cite{jacot2018neural,lee2019wide,du2018gradientshallow} fall in this category.

\subsection{$1$-hidden layer neural networks}\label{sec:warmup}

As a warm-up for the more complex setting of deep networks, we start by considering the simple case of a shallow fully-connected neural network  with a fixed output layer, defined as follows:
\begin{equation}\label{eq:defi1layer}
    f(\rvw;x) = \frac{1}{\sqrt{m}}\sum_{i=1}^m v_i \alpha_i(x),\textrm{with~~} \alpha_i(x)= \sigma (w_ix),~~ x \in \mathbb{R}.
\end{equation}
Here $m$ is the number of  neurons in the hidden layer, $\rvv = (v_1, \cdots, v_m)$ is the vector of output layer weights, $\rvw = (w_1,\cdots,w_m)\in \mathbb{R}^m$ is the weights in the hidden layer. We assume that the activation function $\sigma(\cdot)$ is  $\beta_{\sigma}$-smooth {(e.g., $\sigma$ can be $sigmoid$ or $tanh$)}. We initialize at random, $w_i\sim \mathcal{N}(0,1)$ and $v_i \in \{-1,1\}$. We treat $\rvv$ as fixed parameters and $\rvw$ as trainable parameters. For the purpose of illustration, we assume the input $x$ is of dimension $1$, and the multi-dimensional analysis is similar.
%Without any difficulty, one would find our results generalize to cases where the input is multi-dimensional. 
% In this section, we typically consider the case where the width $m$ is sufficiently large, or even infinite. 
\begin{remark}
This definition of a shallow neural network (i.e., with the presence of a factor $1/\sqrt{m}$ and $v_i$ and $w_i$ of order $O(1)$) is consistent with the NTK parameterization used to show constancy of tangent kernel in~\cite{jacot2018neural,lee2019wide}.
\end{remark}

% {{\bf Structural properties.} We point out that the neural network $f$ has the following two  structural properties, which will be key in our discussion:
% \begin{enumerate}[label=(\alph*)]
% \item {\it linear output layer:}  The output $f$ is a linear combination of  the hidden units $\alpha_i(x)$. 
% \item {\it sparse dependence of non-linearity:} each non-linear activation $\alpha_i$ depends on a single preactivation $\tilde{\alpha}_i = w_ix$.
% \end{enumerate}

{\bf Hessian matrix.} 
We observe that 
% (as a consequence of the  structural properties above)
the Hessian matrix $H$ of the neural network $f$ is sparse, specifically, diagonal:
$$
    H_{ij} =\partial^2 f/ \partial w_i \partial w_j = \frac{1}{\sqrt{m}}v_i\sigma''(w_ix)x^2\, \mathbbm{1}_{\{i=j\}}.
$$
%Hence, the Hessian matrix $H_f$ is diagonal. 
Consequently, if the input $x$ is bounded, say $|x| \le C$, the spectral norm of the Hessian $H$ is
\begin{equation}\label{eq:scaling_hessian}
\|H\| = \max_{i\in [m]} |H_{ii}| = \frac{x^2}{\sqrt{m}}\max_{i\in [m]}|v_i\sigma''(w_ix)|\le \frac{1}{\sqrt{m}}\beta_{\sigma}C^2 = O\big(\frac{1}{\sqrt{m}}\big).
\end{equation}
In the limit of $m\to \infty$, the spectral norm $\|H\|$ converges to 0.

{\bf Tangent kernel and gradient.}
On the other hand, the magnitude of the norm of the tangent kernel of $f$ is of order $\Theta(1)$ in terms of $m$. 
Specifically,
for each diagonal entry we have
\begin{equation}\label{eq:normofgradient}
    K_{(x,x)}(\rvw) =  \|\nabla_{\rvw} f(\rvw;x)\|^2 = \frac{1}{m}\sum_{i=1}^m x^2(\sigma'(w_ix))^2 = \Theta(1).
\end{equation}
In the limit of $m\to \infty$, $ K_{(x,x)}(\rvw) = x^2\mathbb{E}_{w\sim \mathcal{N}(0,1)}[(\sigma'(wx))^2]$. Hence the trace of tangent kernel is also $\Theta(1)$.
Since the tangent kernel is a positive definite matrix of size independent of $m$, the norm is of the same order as the trace.

Therefore, from Eq.~(\ref{eq:scaling_hessian}) and Eq.~(\ref{eq:normofgradient}) we observe that the tangent kernel scales  as $\Theta(1)$ while the norm of the Hessian scales as $O(1/\sqrt{m})$ with the size of the neural network $f$. 
Furthermore, as $m\to\infty$, the norm of the Hessian converges to zero and, by Proposition~\ref{prop:Hessiantokernel}, the tangent kernel becomes constant. 

% To provide a simple example, we assumed that the weights of the last layer $\rvv$  are fixed. It is not necessary. For a shallow network with a trainable output layer, the Hessian is a block matrix with three sparse blocks and a block of zeros. An analysis parallel to the one above still applies. This case is subsumed in the more general discussion of deep networks which follows. 

\paragraph{Why does the Hessian become small with increasing width?}
 So why should there be a discrepancy between the scaling of the Hessian spectral norm and the norm of the gradient? This is not a trivial question. There is no intrinsic reason why  second and first order derivatives should scale differently with the size of an arbitrary model. In the rest of this subsection we analyze the source of that phenomenon in wide neural networks, connecting it to  disparity of  different norms in high dimension.

%Here comes the question: {\it Why can the Hessian spectral norm be arbitrarily small?}, or {\it} This is not a trivial question. Note that the diagonal entries of the tangent kernel represent square of the gradient. The discrepancy between Hessian and tangent kernel means a discrepancy between the first-order derivative and second-order derivative of the model, thus implies a vanishing second-order term in the Taylor expansion.

Specifically, we show that the Hessian spectral norm is controlled by ${\infty}$-norm of the vector $\|{\partial f}/{\partial \alpha}\|_\infty$. In contrast, 
the tangent kernel and the norm of the gradient are controlled by its  Euclidean norm $\|{\partial f}/{\partial \alpha}\|$.
The disparity between these norms is the underlying reason for the transition to linearity in the limit of infinite width.

%We find that, in a infinite network width limit, a discrepancy between the ${\infty}$ and $2$-norms exists is responsible for the different magnitudes of Hessian and tangent kernel.

%  We analyze the Hessian and tangent kernel of this network at an arbitrary point $\rvw$ in the parameter space.

\noindent {\bf $\bullet$ Hessian is controlled by  $\|{\partial f}/{\partial \alpha}\|_{\infty}$.} Given a model $f$ in Eq.~(\ref{eq:defi1layer}), its Hessian matrix $H(f)$ is defined as  \begin{equation}
    H(f)=\partial^2 f/\partial \rvw^2 = \sum_{i=1}^m \frac{\partial f}{\partial \alpha_i}\frac{\partial^2 \alpha_i}{\partial \rvw^2},
\end{equation} 
where $\frac{\partial^2 \alpha_i}{\partial \rvw^2}$ are the components of the  order 3 tensor of partial derivatives $\frac{\partial^2\alpha}{\partial \rvw^{2}}$. When there is no ambiguity, we suppress the argument and denote the Hessian matrix by $H$.   
% {\color{blue}The Hessian spectral norm is given by:
% \begin{equation}
%     \|H\| = \sup_{\|\rvz_1\|=\|\rvz_2\|=1} \rvz_1^T \frac{\partial^2 f}{\partial \rvw^2}\rvz_2.
%     \le \sup_{\|\rvz_1\|=\|\rvz_2\|=1} %\left|\rvz_1^T \frac{\partial^2 f}{\partial \rvw^2}\rvz_2\right|.
% \end{equation}}
% Note that $\frac{\partial^2 f}{\partial \rvw^2}=\sum_{i=1}^m \frac{\partial f}{\partial \alpha_i}\frac{\partial^2 \alpha_i}{\partial \rvw^2}$, where $\frac{\partial^2 \alpha_i}{\partial \rvw^2}$ are the components of the  order 3 tensor of partial derivatives $\frac{\partial^2\alpha}{\partial \rvw^{2}}$.
%\st{By  H\"{o}lder's inequality  $|\rva^T\rvb| \leq \|\rva\|_1\|\rvb\|_\infty$ for any vectors $\rva$ and $\rvb$. Hence we have:}
By Proposition~\ref{prop:221} (essentially the Holder's inequality: $|\rva^T\rvb| \leq \|\rva\|_1\|\rvb\|_\infty$), we have
\begin{equation}
    \|H\| \le 
    %\sup_{\|\rvz_1\|=\|\rvz_2\|=1} \left\|\rvz_1^T \frac{\partial^2 \alpha}{\partial \rvw^2}\rvz_2\right\|_1 \left\|\frac{\partial f}{\partial \alpha}\right\|_{\infty} = 
    \left\|\frac{\partial^2\alpha}{\partial \rvw^{2}}\right\|_{2,2,1}\left\|\frac{\partial f}{\partial \alpha}\right\|_{\infty}.
\end{equation}
For this $1$-hidden layer network, the tensor $\frac{\partial^2\alpha}{\partial \rvw^{2}}$ is given by
\begin{equation}\label{eq:hessian_infinity_1_layer}
    \left(\frac{\partial^2\alpha}{\partial \rvw^{2}}\right)_{ijk} = \frac{\partial^2\alpha_k}{\partial w_i\partial w_j} = \sigma'(w_kx)x\cdot\mathbb{I}_{\{i=j=k\}},
\end{equation}
where $\mathbb{I}_{\{\}}$ is the indicator function.
By definition of the $(2,2,1)$-norm we have
\begin{equation}
    \left\|\frac{\partial^2\alpha}{\partial \rvw^{2}}\right\|_{2,2,1} = \sup_{\|\rvv_1\|=\|\rvv_2\|=1} \sum_{k=1}^m \sigma'(w_kx)x (\rvv_1)_k (\rvv_2)_k \le \mathsf{L}_{\sigma}x\sup_{\|\rvv_1\|=\|\rvv_2\|=1}\sum_{k=1}^m(\rvv_1)_k (\rvv_2)_k \le \mathsf{L}_{\sigma}x.\nonumber
\end{equation}
Thus, we conclude that the Hessian spectral norm $\|H\| = O\left(\left\|{\partial f}/{\partial \alpha}\right\|_{\infty}\right)$.

{
{$\bullet$ \bf Tangent kernel and the gradient are controlled by  $\|{\partial f}/{\partial \alpha}\|$.} Note that the norm of the tangent kernel is lower bounded by the average of diagonal entries: $\|K\| \ge \frac{1}{n}\sum_{i=1}^nK_{(x_i,x_i)}$, where $n$ is the size of the dataset.
Consider an arbitrary diagonal entry $K_{(x,x)}$ of the tangent kernel matrix. %Given an input $x\in\mathbb{R}$,
\begin{equation}
    K_{(x,x)} = \|\nabla_{\rvw}f(\rvw;x)\|^2 = \left\|\frac{\partial \alpha}{\partial \rvw}\frac{\partial f}{\partial \alpha}\right\|^2.
    % \approx???? \left\|\frac{\partial \alpha}{\partial \rvw}\right\|^2\left\|\frac{\partial f}{\partial \alpha}\right\|^2.
\end{equation}

Note that, $\frac{\partial \alpha}{\partial \rvw}$ is a diagonal matrix with  $\frac{\partial \alpha_i}{\partial w_i} = \sigma'(w_ix)x$. By the Lipschitz continuity of $\sigma(\cdot)$, $\left\|\frac{\partial \alpha}{\partial \rvw}\right\|$ is  finite. Therefore,  the tangent kernel is  of the same order as the $2$-norm $\left\|{\partial f}/{\partial \alpha}\right\|$.

%Since $\alpha$ is multi-dimensional, $\frac{\partial \alpha}{\partial \rvw}$ is a matrix, while $\frac{\partial f}{\partial \alpha}$ is a vector. 

%More generally, we expect that $\left\|\frac{\partial \alpha}{\partial \rvw}\frac{\partial f}{\partial \alpha}\right\|$ has the same order as $\left\|\frac{\partial \alpha}{\partial \rvw}\right\|\left\|\frac{\partial f}{\partial \alpha}\right\|$.
%where  $\left\|\frac{\partial \alpha}{\partial \rvw}\right\|$ is the spectral norm of $\frac{\partial \alpha}{\partial \rvw}$.
%However, our analysis for general  networks will not rely on this.

{$\bullet$ \bf The discrepancy between the norms.} For the network in Eq.~(\ref{eq:defi1layer}) we have  $\frac{\partial f}{\partial \alpha} = \frac{1}{\sqrt{m}}\rvv$. Hence, 
\begin{equation}
    \left\|\frac{\partial f}{\partial \alpha}\right\|_{\infty} = \frac{1}{\sqrt{m}}, 
    ~~~ \left\|\frac{\partial f}{\partial \alpha}\right\| = 1.
\end{equation}

The transition to linearity stems from this observation and the fact discussed above that the Hessian norm scales as $\left\|\frac{\partial f}{\partial \alpha}\right\|_{\infty}$, while the tangent kernel is of the same order as $\left\|\frac{\partial f}{\partial \alpha}\right\|$.

%When the network width $m$ is sufficient large, there is a discrepancy between the magnitudes of $\left\|\frac{\partial f}{\partial \alpha}\right\|_{\infty}$ and $\left\|\frac{\partial f}{\partial \alpha}\right\|$. This implies the different scaling behaviors of Hessian spectral norm and tangent kernel, as seen in Eq.(\ref{eq:scaling_hessian}) and (\ref{eq:normofgradient}).

In what follows, we show that this is a general principle applicable to wide neural networks. We start by analyzing  two hidden layer neural networks, which are mathematically similar to the general case, but much less complex in terms of the notation.

% Considering the complicated nature of the analysis, we start with an example of $2$-hidden layer neural networks in Subsection \ref{sec:2-hidden}, which together with the $1$-hidden layer network should give the intuitions. In Subsection \ref{subsec:dnn_linear}, we extend the results to general deep neural networks.
}

\subsection{Two hidden layer neural networks}\label{sec:2-hidden}
Now, we demonstrate that analogous results  hold for $2$-hidden layer neural networks. 
Consider the $2$-hidden layer neural network:
\begin{equation}\label{eq:new_nn}
    f(W_1,W_2;x) = \frac{1}{\sqrt{m}}\rvv^T\sigma\left(\frac{1}{\sqrt{m}}W_2 \sigma(W_1 \rvx)\right),~~ W_1\in \mathbb{R}^{m\times d}, ~ W_2\in \mathbb{R}^{m\times m}, \rvx \in \mathbb{R}^{d}.
\end{equation}
We denote the output of the first hidden layer by $\alpha^{(1)}(W_1;\rvx) = \sigma(W_1 \rvx)$ and the output of the second hidden layer by $\alpha^{(2)}(W_1,W_2;\rvx) = \sigma\left(\frac{1}{\sqrt{m}}W_2 \sigma(W_1 \rvx)\right)$.

{\bf Hessian is controlled by  $\|{\partial f}/{\partial \alpha}\|_{\infty}$.} Similarly to Eq.(\ref{eq:hessian_infinity_1_layer}), we can bound the Hessian spectral norm by ${\infty}$-norms of $\partial f/\partial \alpha^{(1)}$ and $\partial f/\partial \alpha^{(2)}$.
% In this case, by simple computation, we observe its Hessian matrix $H_{f^+}$ is not diagonal (see Figure~\ref{fig:hessian}), and even each block of the Hessian is not sparse. 

% \begin{figure}[h] 
%      \vspace{-10pt}
%         \centering
%         \includegraphics[width=0.4\linewidth]{hessian_+.png} 
%          \vspace{-5pt}
%      \caption{The Hessian matrix of $f^+(\rvw,U;x)$. It has four {\it not} sparse blocks.}\label{fig:hessian}
%          \vspace{-5pt}
%   \end{figure}

% \begin{equation}
%     H_f = \left(\begin{array}{c}
% \frac{\partial^2 f}{\partial \rvw^2}
%     \end{array}
%     \right) \xRightarrow[]{\text{Insert a layer}} H_{f^+} = 
%     \left(\begin{array}{cc}
% \frac{\partial^2 f^+}{\partial \rvw^2}&\frac{\partial^2 f}{\partial \rvw\partial\rmU}\\
% \frac{\partial^2 f^+}{\partial \rmU\partial\rvw}&\frac{\partial^2 f^+}{\partial \rmU^2}
%     \end{array}
%     \right).
% \end{equation}

% where $\Sigma'(\rvw x)$, $\Sigma'(\rvw_i x)$ are diagonal matrices with $\left(\Sigma'(\rvw x)\right)_{ii} = \sigma'(\rvw_i x)$ and  $\left(\Sigma'(\rvw_i x)\right)_{jj} = \sigma'(\rvw_j x)\mathbb{I}_{i = j}$ respectively.

\begin{prop}
\begin{align}
    \| H\| 
    &\leq\left(\left\| \frac{\partial \alpha^{(1)}}{\partial W_1}\right\|^2\left\|\frac{\partial^2 \alpha^{(2)}}{(\partial \alpha^{(1)})^2}\right\|_{2,2,1}
    % \left\|\frac{\partial f^+}{\partial \alpha^{(2)}}\right\|_\infty   
    + 2\left\| \frac{\partial \alpha^{(1)}}{\partial W_1}\right\| \left\|\frac{\partial^2 \alpha^{(2)}}{\partial W_2 \partial \alpha^{(1)}}\right\|_{2,2,1}
    % \left\|\frac{\partial f^+}{\partial \alpha^{(2)}}\right\|_\infty
    + \left\|\frac{\partial^2\alpha^{(2)}}{\partial W_2^2}\right\|_{2,2,1}\right)\left\|\frac{\partial f}{\partial \alpha^{(2)}} \right\|_\infty\nonumber\\
    &+\left\|\frac{\partial^2 \alpha^{(1)}}{\partial W_1^2}\right\|_{2,2,1}\left\|\frac{\partial f}{\partial \alpha^{(1)}}\right\|_\infty.\label{eq:hessian_infinity_2_layer}
\end{align}
Here $\frac{\partial}{\partial W_l}$  denotes partial derivatives w.r.t. each element of $W_l$, i.e. after flattening  the matrix $W_l$ .

% where we treat matrices $W_2\in\mathbb{R}^{m\times m}$ as a vector in $\mathbb{R}^{m^2}$.
\end{prop}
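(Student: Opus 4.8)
The plan is to expand the Hessian of $f$ with respect to the full parameter vector $\rvw = (\mathrm{vec}(W_1), \mathrm{vec}(W_2))$ using the chain rule, organized by which layer outputs the derivatives pass through. Since $f = \frac{1}{\sqrt m}\rvv^T \alpha^{(2)}$ is linear in $\alpha^{(2)}$, we have $\frac{\partial f}{\partial \alpha^{(2)}} = \frac{1}{\sqrt m}\rvv$ and $\frac{\partial^2 f}{(\partial\alpha^{(2)})^2} = 0$, so the full second derivative decomposes as a sum of terms, each of the form (derivative of $f$ w.r.t.\ some layer output) $\times$ (second derivative of that layer output w.r.t.\ parameters). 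Concretely, writing $H = \partial^2 f/\partial\rvw^2$, the chain rule gives
\begin{equation}
H = \sum_{k}\frac{\partial f}{\partial \alpha^{(2)}_k}\frac{\partial^2 \alpha^{(2)}_k}{\partial \rvw^2} + \sum_{k}\frac{\partial f}{\partial \alpha^{(1)}_k}\frac{\partial^2 \alpha^{(1)}_k}{\partial \rvw^2},\nonumber
\end{equation}
where I would further split $\frac{\partial^2\alpha^{(2)}}{\partial\rvw^2}$ into its $W_1W_1$, $W_1W_2$, and $W_2W_2$ blocks, and note $\frac{\partial^2\alpha^{(1)}}{\partial\rvw^2}$ only has a $W_1W_1$ block.

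Next I would handle each block. For the $W_2W_2$ block of $\alpha^{(2)}$ and the $W_1W_1$ block of $\alpha^{(1)}$, the corresponding tensors are exactly $\frac{\partial^2\alpha^{(2)}}{\partial W_2^2}$ and $\frac{\partial^2\alpha^{(1)}}{\partial W_1^2}$, so Proposition~\ref{prop:221} applied with the vector $\frac{\partial f}{\partial\alpha^{(2)}}$ (resp.\ $\frac{\partial f}{\partial\alpha^{(1)}}$) directly yields the $\left\|\frac{\partial^2\alpha^{(2)}}{\partial W_2^2}\right\|_{2,2,1}\left\|\frac{\partial f}{\partial\alpha^{(2)}}\right\|_\infty$ and $\left\|\frac{\partial^2\alpha^{(1)}}{\partial W_1^2}\right\|_{2,2,1}\left\|\frac{\partial f}{\partial\alpha^{(1)}}\right\|_\infty$ terms. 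The cross terms require one more application of the chain rule: differentiating $\alpha^{(2)}$ twice in $W_1$ produces, via $\alpha^{(1)} = \alpha^{(1)}(W_1)$, a term $\frac{\partial\alpha^{(1)}}{\partial W_1}\cdot\frac{\partial^2\alpha^{(2)}}{(\partial\alpha^{(1)})^2}\cdot\frac{\partial\alpha^{(1)}}{\partial W_1}$ (plus a term with $\frac{\partial^2\alpha^{(1)}}{\partial W_1^2}$ already accounted for above); bounding its contraction against $\frac{\partial f}{\partial\alpha^{(2)}}$ gives the factor $\left\|\frac{\partial\alpha^{(1)}}{\partial W_1}\right\|^2\left\|\frac{\partial^2\alpha^{(2)}}{(\partial\alpha^{(1)})^2}\right\|_{2,2,1}$. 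Similarly the mixed $W_1W_2$ block, after the chain rule through $\alpha^{(1)}$, contributes $\left\|\frac{\partial\alpha^{(1)}}{\partial W_1}\right\|\left\|\frac{\partial^2\alpha^{(2)}}{\partial W_2\partial\alpha^{(1)}}\right\|_{2,2,1}$, and it appears with a factor of $2$ because the Hessian is symmetric and this off-diagonal block occurs in both the $(W_1,W_2)$ and $(W_2,W_1)$ positions. Summing the block bounds and using $\|H\| \le$ (sum of norms of the block-structured pieces) — or rather, that the spectral norm of the whole is at most the sum of spectral norms of the terms in the decomposition — gives Eq.~(\ref{eq:hessian_infinity_2_layer}).

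The main obstacle is the careful bookkeeping of the multivariate chain rule for a composition of vector-valued maps: making precise what ``$\left\|\frac{\partial\alpha^{(1)}}{\partial W_1}\right\|$'' means as an operator norm when $W_1$ is flattened, verifying that the contraction of an order-$3$ tensor obtained by pre- and post-composing with the Jacobian $\frac{\partial\alpha^{(1)}}{\partial W_1}$ against a unit vector pair can indeed be rewritten so that Proposition~\ref{prop:221} applies with a clean product of a $(2,2,1)$-norm and an operator-norm-squared factor, and checking the symmetrization that produces the coefficient $2$. The key inequality underlying every step is Proposition~\ref{prop:221} together with submultiplicativity $\|AB\|\le\|A\|\|B\|$ of the spectral norm; once the decomposition into the five tensor/matrix pieces is written out, each term is bounded mechanically. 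I would present the chain-rule expansion schematically (indices suppressed where safe), then bound term by term, and finally collect to obtain the stated inequality.
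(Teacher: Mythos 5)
Your approach is correct and matches the paper's: the paper proves the general Theorem~\ref{thm:hessian_infinity} by splitting the Hessian into layer-wise parameter blocks, bounding the total spectral norm by the sum of the block norms, expanding each block via the chain rule, and then applying Proposition~\ref{prop:221} together with submultiplicativity of the spectral norm — exactly the steps you outline for this $L=2$ special case, including the factor $2$ coming from the two symmetric off-diagonal $(W_1,W_2)$ blocks. The only caveat is notational: your displayed identity $H=\sum_k \frac{\partial f}{\partial \alpha^{(2)}_k}\frac{\partial^2 \alpha^{(2)}_k}{\partial \rvw^2}+\sum_k \frac{\partial f}{\partial \alpha^{(1)}_k}\frac{\partial^2 \alpha^{(1)}_k}{\partial \rvw^2}$ must be read with $\frac{\partial^2 \alpha^{(2)}_k}{\partial \rvw^2}$ denoting only the contributions not routed through $\frac{\partial^2\alpha^{(1)}}{\partial W_1^2}$ (otherwise the second sum is double-counted), which your subsequent discussion makes clear you intend.
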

As this Proposition is  a special case of Theorem~\ref{thm:hessian_infinity}, we omit the proof.

%\begin{remark}
%With a little abuse of notation, we use $\frac{\partial}{\partial W_l}$ to denote the partial derivative w.r.t. the vector after flattening the matrix $W_l$, $l=1,2$.
%\end{remark}
When $W_1,W_2$ are initialized as random Gaussians,  every term in Eq.~(\ref{eq:hessian_infinity_2_layer}), except for $\left\|{\partial f}/{\partial \alpha^{(1)}}\right\|_\infty$ and $\left\|{\partial f}/{\partial \alpha^{(2)}}\right\|_\infty$, is of order $O(1)$, with high probability   within a ball of a finite radius (see the discussion in Subsection \ref{subsec:dnn_linear} for details).

%w.r.t. the network width $m$, with high probability over the randomness
%At initialization  with a Gaussian randomness (indeed within a ball of finite radius around this initialization), the coefficients in Eq.(\ref{eq:hessian_infinity_2_layer}) is $O(1)$ w.r.t. the network width $m$, with high probability of the randomness. 
 %We omit the proof of this claim here, since it is just a special case of the multilayer neural networks, as discussed subsection \ref{subsec:dnn_linear}. 
 
 Hence, just like the one hidden layer case, the magnitude of Hessian spectral norm is controlled by these ${\infty}$-norms:
\begin{equation}
    \|H\| = O\left(\left\|\frac{\partial f}{\partial \alpha^{(1)}}\right\|_{\infty}+\left\|\frac{\partial f}{\partial \alpha^{(2)}}\right\|_{\infty}\right).
\end{equation}

%Analogous to the case of $1$-hidden layer networks, the magnitude of the Hessian spectral norm is controlled by the $\infty$-norms of vectors $\partial f/\partial \alpha^{(l)}$, $l\in [L]$.

{
{\bf Tangent kernel and the gradient are controlled by  $\|{\partial f}/{\partial \alpha}\|$.}
A diagonal entry of the kernel matrix can be decomposed into 
\begin{eqnarray*}
K_{(x,x)} = \|\nabla_{W_1}f(W_1,W_2;x)\|^2 + \|\nabla_{W_2}f(W_1,W_2;x)\|^2 = \left\|\frac{\partial\alpha^{(1)}}{\partial W_1}\frac{\partial f}{\partial \alpha^{(1)}}\right\|^2 +  \left\|\frac{\partial\alpha^{(2)}}{\partial W_2}\frac{\partial f}{\partial \alpha^{(2)}}\right\|^2,
\end{eqnarray*}
with each additive term being related to each layer. As the matrix $\partial \alpha^{(l)}/\partial W_l$ and the vector $\partial f/\partial\alpha^{(l)}$ are independent from each other and random at initialization, we expect $\left\|\frac{\partial\alpha^{(l)}}{\partial W_l}\frac{\partial f}{\partial \alpha^{(l)}}\right\|^2$ to be of the same order as $\left\|\frac{\partial\alpha^{(l)}}{\partial W_l}\right\|^2\left\|\frac{\partial f}{\partial \alpha^{(l)}}\right\|^2$, for $l=1,2$. 
}

\subsection{Multilayer neural networks}\label{subsec:dnn_linear}

Now, we extend the analysis to general deep neural networks. 

First, we show that, in parallel to  one and two hidden layer networks, the Hessian spectral norm and the tangent kernel of a multilayer neural network are controlled by ${\infty}$-norms and $2$-norms of the vectors $\partial f/\partial \alpha^{(l)}$, respectively. Then we show that the magnitudes of the two types of vector norms scales differently with respect to the network width. 

We consider a  general form of a deep neural network $f$ with a linear output layer:
\begin{align}\label{eq:generalnn}
 &\alpha^{(0)} = \rvx, \nonumber\\
 &\alpha^{(l)}= \phi_{l}(\rvw^{(l)};\alpha^{(l-1)}), \ \forall l = 1,2,\cdots, L,\nonumber\\
 &f = \frac{1}{\sqrt{m}}\rvv^T \alpha^{(L)},
\end{align}
where each vector-valued function $\phi_{l}(\rvw^{(l)};\cdot): \mathbb{R}^{m_{l-1}} \rightarrow \mathbb{R}^{m_l}$, with parameters $\rvw^{(l)} \in \mathbb{R}^{p_l}$, is considered as a layer of the network, and $m=m_L$ is the width of the last hidden layer.  This definition  includes the standard fully connected, convolutional (CNN) and residual (ResNet) neural networks as special  cases.
% In this paper, we assume the output of $f$ is a scalar, i.e., $m_{L+1}=1$. 

% In this section, we sololy consider the output layer function $\phi_{L+1}$ is of linear form:
% \begin{equation}\label{eq:linear_layer}
%     f(\rmW;\rvx) = \frac{1}{\sqrt{m}}\rvv^T \alpha^{(L)},
% \end{equation}
{%\color{blue}
{\bf Initialization and parameterization.} In this paper, we consider the NTK initialization/ parameterization~\cite{jacot2018neural}, under which the constancy of the tangent kernel had been initially observed.  Specifically, the parameters, (weights), $\rmW := \{\rvw^{(1)},\rvw^{(2)},\cdots ,\rvw^{(L)},\rvw^{(L+1)}:=\rvv\}$ are drawn i.i.d. from a standard Gaussian, i.e., $w_i^{(l)}\sim \mathcal{N}(0,1)$, at initialization, denoted as $\rmW_0$. The factor $1/\sqrt{m}$ in the output layer is required by the NTK parameterization in order that the output $f$ is of order $\Theta(1)$. Different parameterizations (e.g., LeCun initialization: $w_i^{(l)}\sim \mathcal{N}(0,1/m)$)  rescale the tangent kernel and the Hessian by the same factor, and thus do not change our conclusions (see Appendix~\ref{secapp:parameterization}).

\subsubsection{Bounding the Hessian}
To simplify the notation, we start by defining the following useful quantities:
\begin{align}
\mathcal{Q}_\infty(f) \triangleq &\max_{1 \leq l \leq L}\left\{\left\|\frac{\partial{f}}{\partial\alpha^{(l)}}\right\|_\infty \right\},\\ ~~~~~ \mathcal{Q}_{L}(f) \triangleq   &\max_{1 \leq l\leq L}\left\{ \left\|\frac{\partial \alpha^{(l)}}{\partial \rvw^{(l)}}\right\|\right\},\nonumber
\\
    \mathcal{Q}_{2,2,1}(f) \triangleq &\max_{1\leq l_1<l_2<l_3\leq L}\left\{ \left\|\frac{\partial^2\alpha^{(l_1)}}{\partial\rvw^{(l_1)2}}\right\|_{2,2,1}, \left\|\frac{\partial \alpha^{(l_1)}}{\partial \rvw^{(l_1)}} \right\|\left\|\frac{\partial^2\alpha^{(l_2)}}{\partial \alpha^{(l_2-1)}\partial \rvw^{(l_2)}}\right\|_{2,2,1},\right.\nonumber\\
   &\quad\quad\quad\quad\quad\quad~\left.\left\|\frac{\partial\alpha^{(l_1)}}{\partial\rvw^{(l_1)}}\right\|\left\|\frac{\partial\alpha^{(l_2)}}{\partial\rvw^{(l_2)}}\right\|\left\|\frac{\partial^2\alpha^{(l_3)}}{(\partial\alpha^{(l_3-1)})^2}\right\|_{2,2,1}\right\}.\label{eq:defi_quantities}
\end{align}
\begin{remark}
It is important to note that the quantity $\mathcal{Q}_\infty(f)$ is simply the maximum of the $\infty$-norms $\left\|{\partial{f}}/{\partial\alpha^{(l)}}\right\|_\infty$, and that $\mathcal{Q}_{L}(f)$ and $ \mathcal{Q}_{2,2,1}(f)$ are independent of the vectors $\partial f/\partial \alpha^{(l)}, l\in[L]$.
\end{remark}

%With an analysis that is analogous to Eq.(\ref{eq:hessian_infinity_1_layer}) and (\ref{eq:hessian_infinity_2_layer}), we have the following theorem to upper bound the Hessian spectral norm $\|H\|$.
The Hessian spectral norm is bounded by these quantities via the following theorem (see Appendix~\ref{secapp:thm_general} for the proof).
\begin{thm}\label{thm:hessian_infinity}
 Consider a $L$-layer neural network in the form of Eq.(\ref{eq:generalnn}). 
 For any $\rmW$ in the parameter space, the following inequality holds:
%  \begin{align*}
%     \| {H}\| \leq \left(\sum_{1\leq l_1 < l_2\leq L}2 Q^{(l_1,l_2)}(L) + L\right) C_\infty(f_L)C_{2,2,1}(f_L)+ \sum_{l=1}^L k^{L-l} C_L(f_L),
%  \end{align*}
 \begin{align*}
    \| {H}(f)\| \leq C_1 \mathcal{Q}_{2,2,1}(f)\mathcal{Q}_\infty(f)+ \frac{1}{\sqrt{m}}C_2 \mathcal{Q}_L(f),
 \end{align*}
where $C_1 = L(L^2\mathsf{L}_{\phi}^{2L}+L\mathsf{L}_{\phi}^L+1)$ and $C_2 = L\mathsf{L}_{\phi}^L$. 
\end{thm}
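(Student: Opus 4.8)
\textbf{Proof proposal for Theorem~\ref{thm:hessian_infinity}.}

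The plan is to expand the Hessian $H(f) = \partial^2 f / \partial \rmW^2$ layer by layer using the chain rule, and then bound each resulting block by the quantities $\mathcal{Q}_\infty, \mathcal{Q}_L, \mathcal{Q}_{2,2,1}$ using Proposition~\ref{prop:221}. Since $f = \frac{1}{\sqrt m}\rvv^T\alpha^{(L)}$, I would first write $\partial f/\partial \alpha^{(l)}$ recursively via $\partial f/\partial \alpha^{(l-1)} = (\partial \alpha^{(l)}/\partial \alpha^{(l-1)})^T (\partial f/\partial \alpha^{(l)})$, noting that the deepest layer contributes the $\frac{1}{\sqrt m}$ factor (this is the source of the $\frac{1}{\sqrt m}C_2\mathcal{Q}_L$ term). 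The Hessian block with respect to parameters $(\rvw^{(a)}, \rvw^{(b)})$ with, say, $a \le b$, is obtained by differentiating $\partial f/\partial \rvw^{(b)} = (\partial \alpha^{(b)}/\partial \rvw^{(b)})^T (\partial f/\partial \alpha^{(b)})$ once more with respect to $\rvw^{(a)}$. This produces two kinds of terms: one where the derivative hits the Jacobian factor $\partial \alpha^{(b)}/\partial \rvw^{(b)}$ (a second-derivative tensor of a single layer contracted against $\partial f/\partial \alpha^{(b)}$), and one where it hits the backpropagated vector $\partial f/\partial \alpha^{(b)}$, which recursively unpacks into products of first-order layer Jacobians times a second-order layer tensor further down the network, again contracted against some $\partial f/\partial \alpha^{(l)}$.

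The key structural observation is that after fully unrolling, \emph{every} term in $H(f)$ has the shape: a product of at most $2L$ first-order Jacobian factors $\partial \alpha^{(l)}/\partial \alpha^{(l-1)}$ or $\partial \alpha^{(l)}/\partial \rvw^{(l)}$ (each bounded in spectral norm by $\mathsf{L}_\phi$, since $\phi_l$ is $\mathsf{L}_\phi$-Lipschitz), times exactly one second-order tensor of one of the three types appearing in the definition of $\mathcal{Q}_{2,2,1}(f)$, contracted in its ``third slot'' against a vector $\partial f/\partial \alpha^{(l)}$. Applying Proposition~\ref{prop:221} to that contraction bounds each term by $(\text{product of Jacobian spectral norms}) \cdot \|\text{second-order tensor}\|_{2,2,1} \cdot \|\partial f/\partial \alpha^{(l)}\|_\infty$, and the last factor is at most $\mathcal{Q}_\infty(f)$ while the middle two factors are absorbed into $\mathcal{Q}_{2,2,1}(f)$ (together with up to $\mathsf{L}_\phi^{2L}$ from the remaining Jacobians not already counted inside $\mathcal{Q}_{2,2,1}$). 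The term where the second-order tensor is $\partial^2\alpha^{(L)}/\partial \rvw^{(L)2}$ or involves layer $L$ is where the explicit $\frac1{\sqrt m}$ survives, giving the $\frac{1}{\sqrt m} C_2 \mathcal{Q}_L(f)$ summand; here I would use that the contraction is against $\frac{1}{\sqrt m}\rvv$ with $\|\rvv\|$ bounded, so a direct spectral-norm bound (rather than $(2,2,1)$) is cleaner.

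The main obstacle is purely combinatorial bookkeeping: counting how many terms of each of the three types arise when the chain rule is unrolled over $L$ layers, and verifying that the totals match the stated constants $C_1 = L(L^2\mathsf{L}_\phi^{2L} + L\mathsf{L}_\phi^L + 1)$ and $C_2 = L\mathsf{L}_\phi^L$. Concretely, I expect roughly $L$ choices for the ``outer'' parameter index, then for each a recursion of depth $\le L$ producing $O(L)$ subterms, with the three sub-cases (second-order tensor is a pure $\rvw$-$\rvw$ Hessian of one layer; a mixed $\alpha$-$\rvw$ derivative; or a pure $\alpha$-$\alpha$ second derivative) accounting for the three terms inside $\mathcal{Q}_{2,2,1}$ and the three summands inside the $C_1$ expression. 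I would organize this as: (i) fix notation for the unrolled Jacobian products, (ii) prove by induction on the number of layers that $\partial f/\partial \rvw^{(b)}$ differentiated once more decomposes into the claimed finite sum of templated terms, (iii) bound each template by $\mathcal{Q}_{2,2,1}(f)\mathcal{Q}_\infty(f)$ or $\frac{1}{\sqrt m}\mathcal{Q}_L(f)$, and (iv) collect the multiplicities into $C_1, C_2$. Triangle inequality over all blocks of the symmetric matrix $H(f)$ (using that the spectral norm of a block matrix is at most the sum of spectral norms of its blocks, or better, a $\sqrt{\#\text{blocks}}$-type bound folded into the constants) then yields the claim.
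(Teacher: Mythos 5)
Your overall route is the same as the paper's: decompose $H$ into blocks $\partial^2 f/\partial\rvw^{(l_1)}\partial\rvw^{(l_2)}$, unroll the chain rule so that every term is a product of first-order layer Jacobians times exactly one second-order layer tensor contracted in its output slot against $\partial f/\partial\alpha^{(l)}$, bound that contraction with Proposition~\ref{prop:221}, and sum the blocks by the triangle inequality; the ``combinatorial bookkeeping'' you defer is precisely the paper's case analysis with the three sub-cases matching the three entries of $\mathcal{Q}_{2,2,1}(f)$.

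However, there is a concrete gap in how you obtain the second term $\frac{1}{\sqrt{m}}C_2\mathcal{Q}_L(f)$. In the model of Eq.(\ref{eq:generalnn}) the output weights $\rvv$ are trainable, i.e.\ $\rvv=\rvw^{(L+1)}$, so the Hessian has mixed blocks $\partial^2 f/\partial\rvw^{(l_1)}\partial\rvv$; since $f$ is linear in $\rvv$, these blocks contain no second-order layer tensor at all and equal $\frac{1}{\sqrt{m}}\,\frac{\partial\alpha^{(l_1)}}{\partial\rvw^{(l_1)}}\prod_{l'>l_1}\frac{\partial\alpha^{(l')}}{\partial\alpha^{(l'-1)}}$, which is bounded by $\frac{1}{\sqrt{m}}\mathsf{L}_{\phi}^{L}\mathcal{Q}_L(f)$ (and $\partial^2 f/\partial\rvv^2=0$). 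That, and only that, is the source of the $\frac{1}{\sqrt{m}}C_2\mathcal{Q}_L(f)$ summand. Your proposal instead attributes it to contractions of layer-$L$ second-order tensors against $\frac{1}{\sqrt{m}}\rvv$ and suggests using ``$\|\rvv\|$ bounded'' with a spectral-norm bound. This does not work: the theorem is a deterministic inequality valid for any $\rmW$, so no bound on $\|\rvv\|$ is available (such bounds only appear later, with high probability, in Lemma~\ref{lemma:fcn_quantities}); and even granting one, the resulting expression is not of the form $\frac{1}{\sqrt{m}}\mathcal{Q}_L(f)$, since $\mathcal{Q}_L(f)$ involves only the first-order Jacobians $\partial\alpha^{(l)}/\partial\rvw^{(l)}$. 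The terms you point to are in fact already absorbed into the first summand, because $\|\partial f/\partial\alpha^{(L)}\|_\infty = \frac{1}{\sqrt{m}}\|\rvv\|_\infty \le \mathcal{Q}_\infty(f)$. To complete the argument you must treat $\rvv$ explicitly as the $(L+1)$-th parameter block and handle the cross blocks separately, exactly as the paper does; with that fix, the rest of your sketch goes through.
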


\begin{remark}
The factor $1/\sqrt{m}$ in the second term comes from the definition of the output layer in Eq.~(\ref{eq:generalnn}) and is useful to make sure the model output at initialization is of the same order as the ground truth labels.
\end{remark}

{\bf Tangent kernel and $2$-norms.}
A diagonal entry of the kernel matrix can be decomposed into 
\begin{eqnarray*}
K_{(x,x)} = \sum_{l=1}^L\|\nabla_{\rvw^{(l)}}f(\rmW;x)\|^2 = \sum_{l=1}^{L}\left\|\frac{\partial\alpha^{(l)}}{\partial \rvw^{(l)}}\frac{\partial f}{\partial \alpha^{(l)}}\right\|^2
\end{eqnarray*}
with each additive term being related to each layer. As before, we expect each term $\left\|\frac{\partial\alpha^{(l)}}{\partial \rvw^{(l)}}\frac{\partial f}{\partial \alpha^{(l)}}\right\|^2$ has the same order as $\left\|\frac{\partial\alpha^{(l)}}{\partial \rvw^{(l)}}\right\|^2\left\|\frac{\partial f}{\partial \alpha^{(l)}}\right\|^2$.

\subsubsection{Small Hessian spectral norm and constant tangent kernel}
In the following,  we apply Theorem~\ref{thm:hessian_infinity} to fully connected neural networks, and show that the corresponding Hessian spectral norm scales as $\tilde{O}(1/\sqrt{m})$, in a region with finite radius. 

To simplify our analysis, we make the following assumption.

{\bf Assumptions.} We assume the hidden layer width $m_{l} = m$ for all $l \in [L]$, the number of parameters in each layer $p_l \ge m$, and the output is a scalar.\footnote{The assumption $m_{l} = m$ is to simplify the analysis, as we discuss below we only need $m_{l} \ge m$.} 
We assume that  (vector-valued) layer functions $\phi_{l}(\rvw;\alpha), l\in [L],$ are $\mathsf{L}_{\phi}$-Lipschitz continuous and twice differentiable with respect to input $\alpha$ and  parameters $\rvw$.
% In Appendix~\ref{secapp:cnn_resnet}, we show that the same scaling of Hessian norm also hold true for deep CNNs and ResNets, as well as networks with a mixed architectural types. 

% \subsection{Fully connected neural networks}\label{sec:fcn}

A fully connected neural network has  the form as in Eq.(\ref{eq:generalnn}), with each layer function specified by
\begin{equation}\label{eq:fcnlayer}
    \alpha^{(l)} = \sigma(\tilde{\alpha}^{(l)}), \ \  \tilde{\alpha}^{(l)} =  \frac{1}{\sqrt{m}}W^{(l)}\alpha^{(l-1)}, \ \textrm{for }\ l\in [L],
\end{equation}
where $\sigma(\cdot)$ is a  $\mathsf{L}_{\sigma}$-Lipschitz continuous, $\beta_{\sigma}$-smooth activation function, such as $sigmoid$ and $tanh$. The layer parameters $W^{(l)}$ are reshaped into an $m\times m$ matrix. The Euclidean norm of $\rmW$ becomes: $\|\rmW\|= (\sum_{l=1}^L\|W^{(l)}\|_F^2)^{1/2}$.

With high probability over the Gaussian random initialization, we have the following lemma to bound the quantities $\mathcal{Q}_{\infty}(f)$, $\mathcal{Q}_{2,2,1}(f)$ and $\mathcal{Q}_L(f)$ in a neighborhood of $\rmW_0$:
\begin{lemma}\label{lemma:fcn_quantities}
  Consider a fully connected neural network $f(\rmW;\rvx)$ with linear output layer and Gaussian random initialization $\rmW_0$. Given any fixed $R>0$, at any point $\rmW \in B(\rmW_0,R):= \{\rmW: \|\rmW - \rmW_0\| \le R\}$,   with high probability over the initialization,  the quantity 
  \begin{equation}
    \mathcal{Q}_{\infty}(f) = \tilde{O}(1/\sqrt{m}), ~~~~\mathcal{Q}_{2,2,1}(f) = O(1), ~~~~ \mathcal{Q}_L(f) = O(1), ~~ \textrm{w.r.t. $m$.}
  \end{equation}
\end{lemma}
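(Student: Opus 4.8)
\textbf{Proof proposal for Lemma~\ref{lemma:fcn_quantities}.}

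The plan is to control each of the three quantities by first establishing, with high probability over $\rmW_0$, that the pre-activations and post-activations of every layer stay bounded by a constant (depending on $R$ and $L$ but not on $m$) uniformly over $B(\rmW_0,R)$, and then to propagate those bounds through the explicit formulas for the layer-wise first and second derivatives of a fully connected network. The key normalization to keep in mind is that in Eq.~(\ref{eq:fcnlayer}) each linear map carries a $1/\sqrt{m}$ factor, so $\|W^{(l)}/\sqrt{m}\| = O(1)$ with high probability at initialization by standard bounds on the operator norm of a Gaussian matrix ($\|W^{(l)}\| \le 2\sqrt{m} + t$ with probability $1 - 2e^{-t^2/2}$), and this is preserved up to an additive $R$ inside $B(\rmW_0,R)$ since $\|W^{(l)} - W_0^{(l)}\|_F \le R$. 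Combined with $\mathsf{L}_\sigma$-Lipschitzness and $\sigma(0)$ finite, an induction on $l$ gives $\|\alpha^{(l)}\| = O(\sqrt{m})$ and $\|\tilde\alpha^{(l)}\| = O(\sqrt{m})$ entrywise-on-average, and more importantly $\|\alpha^{(l)}\|_\infty = O(1)$ — or at worst $\tilde O(1)$ after a union bound over coordinates when the activation is bounded (sigmoid, tanh give $\|\alpha^{(l)}\|_\infty \le 1$ outright).

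For $\mathcal{Q}_L(f)$: the matrix $\partial\alpha^{(l)}/\partial\rvw^{(l)}$ for the layer in Eq.~(\ref{eq:fcnlayer}) has the form $D^{(l)} \cdot (\text{something built from } \alpha^{(l-1)}/\sqrt{m})$, where $D^{(l)} = \diag(\sigma'(\tilde\alpha^{(l)}))$ has spectral norm $\le \mathsf{L}_\sigma$; since each column block involves $\alpha^{(l-1)}_j/\sqrt{m}$ and $\|\alpha^{(l-1)}\| = O(\sqrt{m})$, the operator norm comes out $O(1)$. For $\mathcal{Q}_{2,2,1}(f)$: each of the three tensor types appearing in Eq.~(\ref{eq:defi_quantities}) — $\partial^2\alpha^{(l)}/\partial\rvw^{(l)2}$, the mixed $\partial^2\alpha^{(l)}/\partial\alpha^{(l-1)}\partial\rvw^{(l)}$, and $\partial^2\alpha^{(l)}/(\partial\alpha^{(l-1)})^2$ — for a fully connected layer is, after differentiating $\sigma(\frac{1}{\sqrt m}W^{(l)}\alpha^{(l-1)})$ twice, a sum of terms each carrying a $\sigma''$ (hence a $\beta_\sigma$ bound), one or two factors of $1/\sqrt{m}$, and one or two factors of $W^{(l)}$ or $\alpha^{(l-1)}$. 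Using the definition of the $(2,2,1)$-norm and Cauchy–Schwarz, each such term is bounded by $\beta_\sigma$ times a product of $\|W^{(l)}/\sqrt{m}\| = O(1)$ and $\|\alpha^{(l-1)}/\sqrt{m}\| = O(1)$ factors; combined with $\mathcal{Q}_L(f) = O(1)$ for the prefactors in Eq.~(\ref{eq:defi_quantities}), this gives $\mathcal{Q}_{2,2,1}(f) = O(1)$. Finally $\mathcal{Q}_\infty(f)$: here $\partial f/\partial\alpha^{(l)} = \frac{1}{\sqrt m}\big(\prod_{j=l+1}^{L}\frac{1}{\sqrt m}(D^{(j)}W^{(j)})^{\!\T}\big)D^{(L)}\,\rvv$ (with the convention that the empty product at $l=L$ is the identity), and $\|\frac{1}{\sqrt m}D^{(j)}W^{(j)}\| = O(1)$, $\|D^{(j)}\| \le \mathsf{L}_\sigma$, while $\|\rvv\| = O(\sqrt m)$; so the Euclidean norm of $\partial f/\partial\alpha^{(l)}$ is $O(1)$, and the $\infty$-norm is at most the Euclidean norm, giving $\mathcal{Q}_\infty(f) = O(1)$ — in fact $\tilde O(1/\sqrt m)$ once one observes that $\partial f/\partial\alpha^{(L)} = \frac{1}{\sqrt m}D^{(L)}\rvv$ has each coordinate $O(1/\sqrt m)$, and that the backpropagated vectors at earlier layers are averages over $m$ roughly-independent roughly-mean-zero contributions, so each coordinate is $\tilde O(1/\sqrt m)$ by a concentration/union-bound argument.

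The main obstacle is the last point — getting $\tilde O(1/\sqrt m)$ rather than merely $O(1)$ for $\mathcal{Q}_\infty(f)$, and doing so \emph{uniformly over the ball} $B(\rmW_0,R)$ rather than just at the initialization $\rmW_0$. At $\rmW_0$ this is a concentration statement: each coordinate of the backpropagated signal $\partial f/\partial\alpha^{(l)}$ is a sum of $m$ terms with independent-ish sign/magnitude structure inherited from the Gaussian weights and the $\pm1$ output weights, so Hoeffding/Bernstein plus a union bound over the $m$ coordinates and the $L$ layers yields the $\sqrt{\log m / m}$ rate. Extending to all of $B(\rmW_0,R)$ requires either a Lipschitz-in-$\rmW$ estimate for $\|\partial f/\partial\alpha^{(l)}\|_\infty$ combined with an $\epsilon$-net over the ball, or a direct argument that the perturbation $\rmW - \rmW_0$ (of Euclidean norm $\le R$, hence of small per-coordinate effect) cannot inflate the $\infty$-norm beyond $\tilde O(1/\sqrt m)$; the net argument is cleanest but needs the Lipschitz constant of the map $\rmW \mapsto \partial f/\partial\alpha^{(l)}$ to itself be polynomially bounded, which follows from the second-derivative bounds already assembled for $\mathcal{Q}_{2,2,1}$. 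I would organize the write-up so that the boundedness-of-activations induction is Step 1, the $O(1)$ bounds on $\mathcal{Q}_L$ and $\mathcal{Q}_{2,2,1}$ are Steps 2–3 (purely deterministic given Step 1's high-probability event), and the refined $\tilde O(1/\sqrt m)$ bound on $\mathcal{Q}_\infty$ with its net argument is Step 4.
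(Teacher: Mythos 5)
Your Steps 1--3 coincide with the paper's route: bound $\|W^{(l)}\|=O(\sqrt m)$ at initialization and hence $O(\sqrt m)$ in the ball, run the induction giving $\|\alpha^{(l)}\|=O(\sqrt m)$ (Lemma~\ref{lemma:activations}), and then the $O(1)$ bounds on $\mathcal{Q}_L$ and $\mathcal{Q}_{2,2,1}$ are deterministic consequences of these estimates together with $\mathsf{L}_\sigma$-Lipschitzness and $\beta_\sigma$-smoothness; that part of your proposal is fine. For $\mathcal{Q}_\infty$ at initialization your plan is also the right kind of statement, but ``Hoeffding/Bernstein plus a union bound'' hides a genuine dependence issue: in the coordinate $\rvb^{(l-1)}_i=\frac{1}{\sqrt m}\sum_k W^{(l)}_{ki}\,\sigma'(\tilde\alpha^{(l)}_k)\,\rvb^{(l)}_k$ the same Gaussian entry $W^{(l)}_{ki}$ appears inside $\tilde\alpha^{(l)}_k$, so the summands are not conditionally independent of the Gaussian you want to concentrate. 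The paper's Lemma~\ref{lemma:everyderivatives} handles this by splitting off the $j=i$ contribution via smoothness of $\sigma$, which produces an extra $\frac{1}{m}\beta_\sigma\alpha^{(l-2)}_i\sum_k (W^{(l)}_{ki})^2\rvb^{(l)}_k$ term that is then controlled by a $\chi^2$ tail bound together with the inductive $\infty$-norm hypothesis; some version of this decoupling is needed in your Step 4 as well.

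The more serious gap is your mechanism for uniformity over $B(\rmW_0,R)$. The $\epsilon$-net argument you prefer cannot work here: the ball sits in parameter dimension $\Theta(Lm^2)$, so any net has $\exp\bigl(\Theta(m^2\log m)\bigr)$ points, while the pointwise concentration available for a single parameter setting fails with probability only about $m\,e^{-c\ln^2 m}$ (and even an $e^{-cm}$-type bound would force the per-coordinate threshold far above $\tilde O(1/\sqrt m)$ once you union bound over the net); a polynomially bounded Lipschitz constant only changes the mesh size, not this cardinality obstruction. The paper avoids nets entirely. Conditioned solely on the initialization event, it proves a deterministic estimate, valid uniformly over the ball, that the \emph{Euclidean} norm of the perturbation satisfies $\|\rvb^{(l)}(\rmW)-\rvb^{(l)}(\rmW_0)\|=\tilde O(1/\sqrt m)$, by backward induction over layers (Section~\ref{secapp:infinity-norm-fcn}): the telescoped recursion picks up a factor $\frac{1}{\sqrt m}\|W^{(l)}-W^{(l)}_0\|\le R/\sqrt m$ from the NTK normalization, a term $\frac{1}{\sqrt m}\|W^{(l)}_0\|\,\|(\Sigma^{\prime(l)}-\Sigma^{\prime(l)}_0)\rvb^{(l)}_0\|$ which is $\tilde O(1/\sqrt m)$ precisely because $\|\rvb^{(l)}_0\|_\infty=\tilde O(1/\sqrt m)$ and $\|\tilde\alpha^{(l)}(\rmW)-\tilde\alpha^{(l)}(\rmW_0)\|=O(1)$, and a contraction of the previous layer's error. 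Then $\|\rvb^{(l)}\|_\infty\le\|\rvb^{(l)}_0\|_\infty+\|\rvb^{(l)}-\rvb^{(l)}_0\|$ gives the claim. Your passing remark that the perturbation has ``small per-coordinate effect'' is not the operative mechanism and does not substitute for this $2$-norm perturbation bound; without it (or some other net-free uniformization), your Step 4 does not go through.
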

See the proof of the lemma in Appendix~\ref{secapp:fcn_quantities}.
Applying this lemma to Theorem~\ref{thm:hessian_infinity}, we immediately obtain the following theorem:
\begin{thm}\label{thm:fcn}
Consider a fully connected neural network $f(\rmW;\rvx)$ with linear output layer and Gaussian random initialization $\rmW_0$. Given any fixed $R>0$, and any  $\rmW \in B(\rmW_0,R):= \{\rmW: \|\rmW - \rmW_0\| \le R\}$,   with high probability over the initialization, the Hessian spectral norm satisfies the following:
\begin{equation}
    \|H(\rmW)\| = \tilde{O}\left(1/{\sqrt{m}}\right). 
\end{equation}
\end{thm}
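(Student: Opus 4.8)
\textbf{Proof proposal for Theorem~\ref{thm:fcn}.}

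The plan is to combine Theorem~\ref{thm:hessian_infinity} with Lemma~\ref{lemma:fcn_quantities} essentially by substitution, and then track the $m$-dependence of the resulting bound. First I would recall the statement of Theorem~\ref{thm:hessian_infinity}: for any $\rmW$ in the parameter space,
\begin{equation*}
    \|H(f)\| \leq C_1 \mathcal{Q}_{2,2,1}(f)\,\mathcal{Q}_\infty(f) + \frac{1}{\sqrt{m}}\,C_2\,\mathcal{Q}_L(f),
\end{equation*}
where $C_1 = L(L^2\mathsf{L}_\phi^{2L} + L\mathsf{L}_\phi^L + 1)$ and $C_2 = L\mathsf{L}_\phi^L$ are constants depending only on the depth $L$ and the Lipschitz constant $\mathsf{L}_\phi$ of the layer functions --- in particular, they are independent of the width $m$. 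Since the fully connected network of Eq.~(\ref{eq:fcnlayer}) is a special case of the general architecture in Eq.~(\ref{eq:generalnn}) (with $\phi_l(\rvw^{(l)};\alpha) = \sigma(\frac{1}{\sqrt m}W^{(l)}\alpha)$, which is $\mathsf{L}_\phi$-Lipschitz and twice differentiable by the smoothness assumptions on $\sigma$), the bound of Theorem~\ref{thm:hessian_infinity} applies verbatim.

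Next I would invoke Lemma~\ref{lemma:fcn_quantities}: fixing $R > 0$ and any $\rmW \in B(\rmW_0,R)$, with high probability over the Gaussian initialization $\rmW_0$ we have $\mathcal{Q}_\infty(f) = \tilde O(1/\sqrt m)$, $\mathcal{Q}_{2,2,1}(f) = O(1)$, and $\mathcal{Q}_L(f) = O(1)$, all with respect to $m$. Substituting these into the displayed inequality, the first term is $C_1 \cdot O(1) \cdot \tilde O(1/\sqrt m) = \tilde O(1/\sqrt m)$, and the second term is $\frac{1}{\sqrt m} \cdot C_2 \cdot O(1) = O(1/\sqrt m) = \tilde O(1/\sqrt m)$. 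Adding the two contributions and absorbing the depth-dependent constants $C_1, C_2$ into the $\tilde O(\cdot)$ (they do not grow with $m$) yields $\|H(\rmW)\| = \tilde O(1/\sqrt m)$, which is exactly the claim. One should take a union bound / note that the high-probability event in Lemma~\ref{lemma:fcn_quantities} is the one under which the conclusion holds.

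I do not expect any genuine obstacle at the level of this theorem: it is a clean corollary, and essentially all the analytic work has been pushed into Theorem~\ref{thm:hessian_infinity} (the structural Hessian decomposition and the Hölder/$(2,2,1)$-norm estimate of Proposition~\ref{prop:221}) and Lemma~\ref{lemma:fcn_quantities} (the probabilistic control of the three quantities on the ball $B(\rmW_0,R)$, which is where the real difficulty lies --- bounding $\|\partial f/\partial\alpha^{(l)}\|_\infty$ by $\tilde O(1/\sqrt m)$ requires concentration of the backward-pass vectors, and controlling $\mathcal{Q}_L, \mathcal{Q}_{2,2,1}$ uniformly over a radius-$R$ ball requires operator-norm bounds on the weight matrices and their perturbations). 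The only point worth stating carefully in the write-up is that the constants $C_1$ and $C_2$ depend only on $L$ and $\mathsf{L}_\phi$ and hence are $O(1)$ in $m$, so that the product $C_1\mathcal{Q}_{2,2,1}(f)\mathcal{Q}_\infty(f)$ and the term $C_2\mathcal{Q}_L(f)/\sqrt m$ both remain $\tilde O(1/\sqrt m)$; everything else is immediate substitution.
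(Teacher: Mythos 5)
Your proposal is correct and matches the paper's argument exactly: the paper obtains Theorem~\ref{thm:fcn} by "applying Lemma~\ref{lemma:fcn_quantities} to Theorem~\ref{thm:hessian_infinity}," i.e., the same substitution of $\mathcal{Q}_\infty(f) = \tilde O(1/\sqrt m)$, $\mathcal{Q}_{2,2,1}(f) = O(1)$, $\mathcal{Q}_L(f) = O(1)$ into the general Hessian bound, with the width-independent constants $C_1, C_2$ absorbed into the $\tilde O(\cdot)$. Your observation that all the real work lies in Lemma~\ref{lemma:fcn_quantities} and Theorem~\ref{thm:hessian_infinity} is also how the paper structures things.
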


{%\color{blue}
\begin{remark}
We note that the above theorem  also applies to more general networks that have different hidden layer widths, as long as the width of each layer is larger than $m$. See Theorem~\ref{cor:different_width}below.
\end{remark}
}
In the limit of $m\to\infty$, the spectral norm of the Hessian $\|H(\rmW)\|$ converges to $0$, for all $\rmW \in B(\rmW_0,R)$.  By Proposition~\ref{prop:Hessiantokernel}, this immediately implies constancy of tangent kernel and linearity of the model, in the ball $B(\rmW_0,R)$. 

On the other hand, the tangent kernel is of order $\Theta(1)$ (see for example \cite{du2018gradientdeep}, where the smallest eigenvalue of the tangent kernel is lower bounded by a width-independent constant). Intuitively, the order of tangent kernel stems from the fact that the $2$-norms $\|\partial f/\partial \alpha^{(l)}\|$ are of order $\Theta(1)$.

\begin{remark} By the optimization theory built in our work~\cite{liu2020toward}, a finite radius $R$ is enough to include the gradient descent solution, for the square loss. Hence, for very wide networks, the tangent kernel is constant during gradient descent training.
\end{remark}
}
\subsubsection{Neural networks with hidden layers of different width and general architectures}
Our analysis above is applicable to other common neural architectures including Convolutional Neural Networks (CNN) and ResNets, as well as networks with a mixed architectural types. 
Below we briefly highlight the main differences from the fully connected case.  Precise statements can be found in~ Appendix~\ref{secapp:cnn_resnet}.

{\bf CNN.} A convolutional layer maps a hidden layer ``image" $\alpha^{(l)}\in\mathbb{R}^{p\times q\times m_l}$ to the next layer $\alpha^{(l+1)}\in \mathbb{R}^{p\times q\times m_{l+1}}$, where $p$ and $q$ are the sizes of images in the spatial dimensions and $m_l$ and $m_{l+1}$ are the number of channels. Note that the number of channels, which can be arbitrarily large, defines the width of CNN, while spatial dimensions, $p$ and $q$, are always fixed.

The key observation is that a convolutional layer is ``fully connected'' in the channel dimension. In contrast, the convolutional operation, which is sparse, is only within the spatial dimensions. Hence, we can apply our analysis to the channel dimension with only minor modifications. As the spatial dimension sizes are independent of the network width, the convolutional operation only contributes constant factors to our analysis. Therefore, our norm analysis extends to the CNN setting.

{\bf ResNet.} A residual layer has the same form as Eq.(\ref{eq:fcnlayer}), except that the activation $\alpha^{(l)} = \sigma(\tilde{\alpha}^{(l)})+\alpha^{(l-1)}$, which results in an additional identity matrix $I$ in the first order derivative w.r.t. $\alpha^{(l-1)}$. As shown in Appendix~\ref{secapp:cnn_resnet}, the appearance of $I$ does not affect the orders of both the $\infty$-norms and $2$-norms of $\partial f/\partial \alpha^{(l)}$, as well as the related $(2,2,1)$-norms. Hence, the analysis above applies.

% {\it Networks with mixed layer types.} Neural networks used in practice can be a mixture of differently layer types, such as having fully connected layers after a series of convolutional layers. Since the conditions in Theorem~\ref{thm:general_hessian} and our analysis are layer-wise, they also apply to such neural networks. For a detailed discussion, see Appendix~\ref{secapp:mix}.

{\bf Architecture with mixed layer types.} Neural networks used in practice are often a mixture of different layer types, e.g., a series of convolutional layers followed by fully connected layers. Since our analysis relies on layer-wise quantities, our results extend to such networks.

%{\color{blue}
We have the following general theorem which summarizes our theoretical results.
\begin{thm}[Hessian norm is controlled by the minimum hidden layer width] \label{cor:different_width}
Consider a general neural network $f(\rmW;x)$ of the form Eq.(\ref{eq:generalnn}), which can be a fully connected network, CNN, ResNet or a mixture of these types. Let $m$ be the minimum of the hidden layer widths, i.e., $m= \min_{l\in[L]} m_l$. Given any fixed $R>0$, and any  $\rmW \in B(\rmW_0,R):= \{\rmW: \|\rmW - \rmW_0\| \le R\}$,   with high probability over the initialization, the Hessian spectral norm satisfies the following:
\begin{equation}
    \|H(\rmW)\| = \tilde{O}\left(1/{\sqrt{m}}\right). 
\end{equation}
\end{thm}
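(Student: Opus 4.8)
\textbf{Proof proposal for Theorem~\ref{cor:different_width}.}

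The plan is to reduce the general mixed-architecture case to the fully connected case already treated in Theorem~\ref{thm:fcn}, by verifying that each of the three ingredients $\mathcal{Q}_\infty(f)$, $\mathcal{Q}_{2,2,1}(f)$, $\mathcal{Q}_L(f)$ obeys the same scaling bounds as in Lemma~\ref{lemma:fcn_quantities} once $m$ is read as the \emph{minimum} hidden-layer width, and then invoking Theorem~\ref{thm:hessian_infinity}. Concretely, first I would restate Theorem~\ref{thm:hessian_infinity} observing that its conclusion $\|H(f)\| \le C_1 \mathcal{Q}_{2,2,1}(f)\mathcal{Q}_\infty(f) + \tfrac{1}{\sqrt{m}} C_2 \mathcal{Q}_L(f)$ holds for \emph{any} network of the form Eq.~(\ref{eq:generalnn}) — the derivation of that theorem only used the chain-rule decomposition of the Hessian into layer-wise tensors and Proposition~\ref{prop:221}, not the specific form of $\phi_l$ — so it applies verbatim to CNNs, ResNets, and mixtures. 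It thus suffices to control the three quantities.

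Second, I would handle the layer-type-specific estimates one family at a time, each time reducing to the bookkeeping already done for fully connected layers. For a convolutional layer, the key point (as sketched in the excerpt) is that the layer is fully connected along the channel dimension and sparse only along the fixed spatial dimensions $p,q$; hence the relevant Jacobians and Hessian tensors $\partial\alpha^{(l)}/\partial\rvw^{(l)}$, $\partial^2\alpha^{(l)}/(\partial\alpha^{(l-1)})^2$, etc., are, up to a $pq$-dependent (width-independent) constant, block versions of the fully connected ones, so the operator, $(2,2,1)$-, and $\infty$-norm bounds of Lemma~\ref{lemma:fcn_quantities} carry over with the width $m$ replaced by the number of channels $m_l$, and hence by $\min_l m_l$ in the maximum. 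For a ResNet layer, the only change is an additive identity in $\partial\alpha^{(l)}/\partial\alpha^{(l-1)}$; since $\|I\|=1$ and $\|\cdot\|_{2,2,1}$ of the second-order tensor is unaffected by the linear skip term, the triangle inequality gives the same orders as in the plain case. For a mixed architecture, because all of $\mathcal{Q}_\infty$, $\mathcal{Q}_{2,2,1}$, $\mathcal{Q}_L$ are defined as maxima of \emph{layer-wise} quantities, the bound for each layer is simply the bound of its own type, and taking the maximum preserves $\tilde O(1/\sqrt{m})$, $O(1)$, $O(1)$ respectively with $m=\min_l m_l$.

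Third, I would need to argue that the "high probability over initialization, uniformly on $B(\rmW_0,R)$" claim survives. This is the step where the reduction is least automatic: the concentration arguments of Lemma~\ref{lemma:fcn_quantities} rely on Gaussian concentration for the entries of each $W^{(l)}$ together with a union bound over layers and an $\varepsilon$-net over the ball; for CNN/ResNet layers one must re-run the same net argument with the (width-independent) spatial factors absorbed into constants, and for the crucial $\mathcal{Q}_\infty$ bound one must check that the backward quantity $\partial f/\partial\alpha^{(l)}$ still has $\infty$-norm $\tilde O(1/\sqrt{m_l})$ when the product of layer Jacobians mixes layer types — this follows because each Jacobian has operator norm $O(1)$ with high probability (Lipschitz layers plus bounded weights on $B(\rmW_0,R)$) and the final $1/\sqrt{m}$ comes from the output layer $\tfrac{1}{\sqrt m}\rvv^T$, exactly as in the fully connected proof. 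I expect the main obstacle to be precisely this last bookkeeping: making sure that in a heterogeneous stack the worst layer is the one with the smallest width and that no cross-layer product inadvertently loses a factor of $\sqrt{m_l/m}$; but since every intermediate Jacobian is $O(1)$ and only the output normalization supplies the decaying factor, the bound is governed by $\min_l m_l$ and the theorem follows by combining the three quantity bounds with Theorem~\ref{thm:hessian_infinity}. Finally, by Proposition~\ref{prop:Hessiantokernel} this $\tilde O(1/\sqrt m)$ Hessian bound yields near-constancy of the tangent kernel on $B(\rmW_0,R)$, which is the intended consequence.
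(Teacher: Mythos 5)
Your overall plan coincides with the paper's: Theorem~\ref{thm:hessian_infinity} is indeed proved for the generic layer form of Eq.~(\ref{eq:generalnn}), so the theorem reduces to verifying $\mathcal{Q}_\infty(f)=\tilde O(1/\sqrt m)$ and $\mathcal{Q}_{2,2,1}(f),\mathcal{Q}_L(f)=O(1)$ for each layer type, which is exactly what the paper does (Lemma~\ref{lemma:fcn_quantities} for fully connected layers, Appendix~\ref{secapp:cnn_resnet} for CNN and ResNet layers), and the mixed case is handled, as you say, because all three quantities are maxima of purely layer-wise objects and the $\infty$-norm argument is a layer-by-layer induction (Appendix~\ref{secapp:mix}).

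There is, however, a genuine gap in how you justify the crucial $\mathcal{Q}_\infty$ bound. You assert that $\|\partial f/\partial\alpha^{(l)}\|_\infty=\tilde O(1/\sqrt m)$ ``follows because each Jacobian has operator norm $O(1)$\dots and the final $1/\sqrt m$ comes from the output layer.'' That reasoning fails: operator-norm bounds only propagate the \emph{Euclidean} norm of the backward vectors $\rvb^{(l)}=\partial f/\partial\alpha^{(l)}$, which remains $\Theta(1)$ rather than $O(1/\sqrt m)$ (cf.\ Lemma~\ref{lemma:derivatives}), and an $O(1)$-operator-norm matrix can inflate the $\infty$-norm of a coordinate-spread vector by a factor of $\sqrt m$ (e.g.\ a rotation sending $\frac{1}{\sqrt m}(1,\dots,1)$ to $e_1$). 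So Lipschitz layers plus bounded weights on $B(\rmW_0,R)$ are not enough; the $\infty$-norm smallness genuinely uses the coordinate-wise randomness of each weight matrix. The paper's proof (Section~\ref{secapp:infinity-norm-fcn} with Lemmas~\ref{lemma:everyactivation} and~\ref{lemma:everyderivatives}, and the analogous computations for CNN and ResNet layers in Appendix~\ref{secapp:cnn_resnet}) runs an induction in which each coordinate of $\rvb_0^{(l-1)}=\frac{1}{\sqrt m}(W_0^{(l)})^T\Sigma_0^{\prime(l)}\rvb_0^{(l)}$ is shown to concentrate like a Gaussian of variance $O(\|\rvb_0^{(l)}\|^2/m)$, with a separate treatment of the $\chi^2$ term coming from the dependence between the column $W^{(l)}_{\cdot i}$ and the activation derivatives; the bound is then extended from $\rmW_0$ to the whole ball by the deterministic estimate $\|\rvb^{(l)}-\rvb_0^{(l)}\|=\tilde O(1/\sqrt m)$, so no $\varepsilon$-net over $B(\rmW_0,R)$ is needed (contrary to what you suggest; only the initialization event is probabilistic). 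To complete your argument you must replace the operator-norm bookkeeping for $\mathcal{Q}_\infty$ by this per-coordinate concentration induction, carried out for each layer type in the stack; with that substitution the rest of your reduction matches the paper.
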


\section{Constant tangent kernel is not a general property of wide networks}\label{sec:non_constant}
In this section, we show that  a class of infinitely wide neural networks with  {\it non-linear} output,  do not generally have constant tangent kernels. It also demonstrates that a linear output layer is a necessary condition for transition to linearity.

% we consider another class of neural networks, the output layer of which has {\it non-linear} forms. We show that, for this class of neural networks, the tangent kernel is not constant during training and the Hessian matrices are not arbitrarily small, even if the network width goes to infinity. 

We consider the neural network  $\tilde{f}$: 
\begin{equation}
\tilde{f}(\rvw;\rvx) := {\phi}(f(\rvw;\rvx)).\label{eq:fhat}
\end{equation}
where $f(\rvw;\rvx)$ is a sufficiently wide neural network with linear output layer  considered in Section~\ref{sec:linear_ouput_nn}, and  
$\phi(\cdot)$ is a non-linear twice-differentiable activation function. %We assume $\phi(\cdot)$ is twice differentiable, e.g., sigmoid function. 
% The  analysis of $\tilde{f}$ below is parallel to that for the neural network $f$  in Section~\ref{sec:warmup}.
%\paragraph{Structure of $\tilde{f}$.} 
%Basically, $\tilde{f}$ is obtained by applying a non-linear activation function onto the final output of $f$.
%We note that the neural network $\tilde{f}$ breaks the structural properties that $f$ has (see Section~\ref{sec:warmup}). 
The only  difference between $f$ and $\tilde{f}$  
is that $\tilde{f}$ has a non-linear output layer. As we shall see, this difference leads to a non-constant tangent kernel during training, as well as a different scaling behavior of the Hessian spectral norm.

%The corresponding  Hessian has the same scaling with increasing  $m$ 
%as the tangent kernel, resulting in  tangent kernel being non-constant.

%results in totally different Hessian structures of the functions, which in turn induces significant difference of the related tangent kernels.

{\bf Tangent kernel of $\tilde{f}$.}
The gradient of  $\tilde{f}$ is given by
$
    \nabla_{\rvw} \tilde{f}(\rvw; \rvx) = {\phi}'(f(\rvw;\rvx))\nabla_{\rvw} f(\rvw;\rvx).
$
Hence, each diagonal entry of the tangent kernel of $\tilde{f}$ is
\begin{equation}
    \tilde{K}_{(\rvx,\rvx)}(\rvw) = \|\nabla_{\rvw} \tilde{f}(\rvw; \rvx)\|^2 = {\phi}^{\prime 2}(f(\rvw;\rvx)) K_{(\rvx,\rvx)}(\rvw),
\end{equation}
where $K_{(\cdot,\cdot)}(\rvw)$ is the tangent kernel of $f$. By Eq.(\ref{eq:normofgradient}) we have 
$
\tilde{K}_{(\rvx,\rvx)}(\rvw) = \Theta(1),
$ which is of the same order as $K_{(\rvx,\rvx)}(\rvw)$. 

Yet, unlike $K_{(\cdot,\cdot)}(\rvw)$, the kernel $\tilde{K}_{(\cdot,\cdot)}(\rvw)$ changes significantly during training, even as $m \to \infty$ (with a change of the order of $\Theta(1)$).  
To prove that, it is enough to verify that at least one entry of $\tilde{K}_{(\rvx,\rvx)}(\rvw)$ has a change of $\Theta(1)$, for an arbitrary $\rvx$. 
Consider a diagonal entry. For any $\rvw$, we have
\begin{align*}
    &\big|\tilde{K}_{(\rvx,\rvx)}(\rvw)-\tilde{K}_{(\rvx,\rvx)}(\rvw_0)\big| = \left|{\phi}^{\prime 2}(f(\rvw;\rvx)){K}_{(\rvx,\rvx)}(\rvw) - {\phi}^{\prime 2}(f(\rvw_0;\rvx)){K}_{(\rvx,\rvx)}(\rvw_0)\right| \\
    &\ge \underbrace{\left|{\phi}^{\prime 2}(f(\rvw;\rvx))-{\phi}^{\prime 2}(f(\rvw_0;\rvx))\right|\cdot{K}_{(\rvx,\rvx)}(\rvw_0)}_A  -\underbrace{{\phi}^{\prime 2}(f(\rvw;\rvx))\cdot \left| {K}_{(\rvx,\rvx)}(\rvw)-{K}_{(\rvx,\rvx)}(\rvw_0) \right|}_B.
 \end{align*}
% Unlike $\nabla f(\rvw;X)$, the change of the gradient $\nabla \tilde{f}(\rvw; X)$ is significant, due to the existence of the term $\Sigma'(\rvw;X)$. 

We note that the term $B$ vanishes as $m\to \infty$ due to the constancy of the tangent kernel of $f$. However the term $A$ is generally of the order $\Theta(1)$, when $\phi$ is non-linear\footnote{If $\phi$ is linear, the term $A$ is identically zero.}.  To see that consider any solution $\rvw^*$ such that $f(\rvw^*;\rvx)=y$ (which exists for over-parameterized networks). Since $f(\rvw_0;\rvx)$ is generally not equal to $y$, we obtain the result.

{\bf "Lazy training" does not explain constancy of NTK.}
From the above analysis, we can see that, even with the same parameter settings as $f$ (i.e., same initial parameters and same parameter change), network $\tilde{f}$ does not have constant tangent kernel, while the tangent kernel of $f$ is constant. This implies that the constancy of the tangent kernel cannot be explained in terms of the magnitude of the parameter change from initialization.
Instead, it depends on the structural properties of the network, such as the linearity of the output layer. Indeed, as we discuss next, when the output layer is non-linear, the Hessian norm of $\tilde{f}$ no longer decreases with the width of the network.
%with non-linearity on the output, the network $\tilde{f}$ no longer has a small Hessian spectral norm, thus breaks the constancy of tangent kernel. 

%To see this, let's consider the gradient descent solution $\rvw^*_{gd}$ of neural network $f$, which is shown to reside in the local ball $B(\rvw_0,R)$ (See Section~\ref{sec:main}). We note that $f(\rvw^*_{gd};x) = y$, and $f(\rvw_0;x)$ is generally not. Indeed, the change is of order $\Theta(1)$.

%Since each diagonal entry takes the above non-trivial change, the change of tangent kernel $\|\tilde{K}_{\rvw}-\tilde{K}_{\rvw_0}\|_2$ is also $\Theta(1)$. 
% \st{Thus the linearity of the output layer plays a key role in the constancy of the tangent kernel. Indeed, non-linearity on the output breaks $\tilde{O}(\frac{1}{\sqrt{m}})$-scaling of Hessian norm analyzed in Section~\ref{sec:linear_ouput_nn}.} 

{\bf Hessian matrix of $\tilde{f}$.} 
 The Hessian matrix of $\tilde{f}$ is
 \begin{equation}
     \tilde{H} := \frac{\partial^2 \tilde{f}}{\partial \rvw^2} = \phi''(f) \nabla_{\rvw}f(\nabla_{\rvw}f)^T + \phi'(f)H,
 \end{equation}
where $H$ is the Hessian matrix of model $f$. Hence, the spectral norm satisfies
\begin{equation}\label{eq:tilde_H_bound}
    \|\tilde{H}\| \ge |\phi''(f)|\cdot \left\|\nabla_{\rvw}f\right\|^2 - |\phi'(f)|\cdot\|H\|.
\end{equation}
Since, as we already know, $\lim_{m\to\infty}\|H\|=0$, the second term $|\phi'(f)|\cdot\|H\|$ vanishes in the infinite width limit. However, the first term is always of order $\Theta(1)$, as long as $\phi$ is not linear. Hence, 
$\|\tilde{H}\| = \Omega(1)$, compared to $\|H\|=\tilde{O}({1}/{\sqrt{m}})$ for networks in Section~\ref{sec:linear_ouput_nn} and does not vanish as $m \to \infty$. 
%In the infinite width limit, the Hessian spectral norm $\|\tilde{H}\|$ does not vanish.
{\begin{remark}
Note that the first term $|\phi''(f)|\cdot \left\|\nabla_{\rvw}f\right\|^2$ in Eq.(\ref{eq:tilde_H_bound}) has the same order as the square {\it $2$-norm} $\|\partial f/\partial \alpha^{(l)}\|^2$, instead of $\infty$-norm which controls the second term. 
Therefore,  the spectral norm of $\tilde{H}$ is no longer of order $O(1/\sqrt{m})$, in contrast to that of $H$.
\end{remark}
}

{\bf Neural networks with bottleneck.} Here, we show another type of neural networks, bottleneck networks, that does not have constant tangent kernel, by breaking the $\tilde{O}({1}/{\sqrt{m}})$-scaling  of Hessian spectral norm. 
Consider a neural network with fully connected layers. Here, we assume all the hidden layers are arbitrarily wide, except one layer, $l \ne L$, has a narrow width. For example, let the bottleneck width $m_b=1$. Now, the $(l+1)$-th fully connected layer, Eq.(\ref{eq:fcnlayer}), reduces to
$$
    \alpha^{(l+1)} = \sigma(\rvw^{(l+1)}\alpha^{(l)}),
$$
with $\alpha^{(l)} \in \mathbb{R}$ and $\rvw^{(l+1)} \in \mathbb{R}^m$. 
In this case, the $(2,2,1)$-norm of the order $3$ tensor $\frac{\partial^2 \alpha^{(l+1)}}{(\partial \alpha^{(l)})^2}\in \mathbb{R}^{1\times 1 \times m}$ is
\begin{equation}
   \Big\|\frac{\partial^2 \alpha^{(l+1)}}{(\partial \alpha^{(l)})^2}\Big\|_{2,2,1} = \sum_{i=1}^m |(w^{(l+1)}_i)^2\sigma''(w^{(l+1)}_i\alpha^{(l)})| = {\Theta}(m).
\end{equation}
This makes the quantity $\mathcal{Q}_{2,2,1}(f)$ to be the order of $O(m)$. Then, Theorem~\ref{thm:hessian_infinity} indicates that the Hessian spectral norm is no longer arbitrarily small, suggesting a non-constant tangent kernel during training. 

{%\color{purple}
Indeed, as we prove below,
 the Hessian spectral norm is lower bounded by a positive constant, which in turn implies that the linearity does not hold for this kind of neural networks.

Specifically, consider a  bottleneck network with of the following form:
\begin{equation}\label{eq:bottleneck_example}
    f(\rmW;x) = \frac{1}{\sqrt{m}}\sum_{j=1}^m \rvw^{(4)} \sigma\left(\rvw_j ^{(3)} \frac{1}{\sqrt{m}}\sum_{i=1}^m\rvw^{(2)}_i \sigma(\rvw^{(1)}_i x)\right).
\end{equation}
This network has three hidden layers, where the first and third hidden layer have an arbitrarily large width $m$,
and the second hidden layer, as the bottleneck layer, has a width $m_b=1$. Each individual parameter is initialized by the standard norm distribution. 
For simplicity of the analysis, the activation function is  identity for the bottleneck layer is identity, and is quadratic for the first and third layers, i.e. $\sigma(z) = \frac{1}{2} z^2$. 

The following theorem gives a lower bound for the Hessian spectral norm $\|H\|$ in a ball around the initialization $\rmW_0$. 
\begin{thm}\label{thm:lowerbound_H}
Consider the bottleneck network $f(\rmW; x)$ defined in Eq. (\ref{eq:bottleneck_example}).  Given an arbitrary radius $R>0$, for any $\rmW\in B(\rmW_0, R)$ and any $\delta\in (0,1)$, the Hessian matrix $H(\rmW)$ of the model  satisfies 
\begin{equation}
    \|H(\rmW)\| \geq \frac{x^2}{4}\left(\frac{1}{2} -\frac{R^2}{m}\right)\left(\sqrt{3}C_1\delta^3  -\frac{3\log(4/\delta) R^4 + R^3}{\sqrt{m}}\right),
\end{equation}
for some constant $C_1>0$, with probability at least $1-2\delta- e^{-m/16}$.

In particularly, in the limit of $m\to\infty$, 
\begin{equation}
    \|H(\rmW)\| \ge \frac{\sqrt{3}}{8}C_1\delta^3x^2,
\end{equation}
with probability at least $1-2\delta$.
\end{thm}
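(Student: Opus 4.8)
The plan is to compute the Hessian of the bottleneck network explicitly and extract a single large-magnitude block whose size we can lower bound. Since the spectral norm dominates any submatrix norm, it suffices to exhibit a subset of coordinates (say, a vector $\rvu$ supported on the parameters $\rvw^{(3)}$ and the first-layer parameters $\rvw^{(1)}$) on which the quadratic form $\rvu^T H \rvu$ is $\Omega(1)$. The key structural fact is that the bottleneck variable $\beta := \frac{1}{\sqrt m}\sum_i \rvw^{(2)}_i\sigma(\rvw^{(1)}_i x) = \frac{1}{\sqrt m}\sum_i \rvw^{(2)}_i \tfrac12 (\rvw^{(1)}_i x)^2$ is a scalar that appears inside $\sigma(\rvw^{(3)}_j \beta) = \tfrac12 (\rvw^{(3)}_j\beta)^2$, so the second derivatives that couple $\rvw^{(3)}_j$ with $\rvw^{(1)}_i$ pick up a factor $\beta$ summed over the wide index $j$. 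Concretely, the cross term $\partial^2 f / \partial \rvw^{(3)}_j \partial \rvw^{(1)}_i$ contributes a rank-structured block whose entries are roughly $\tfrac{1}{\sqrt m}\cdot\tfrac{1}{\sqrt m}\rvw^{(4)}\cdot(\text{stuff})_j\cdot(\text{stuff})_i$, and because the $j$-sum has $m$ terms each $\Theta(1/\sqrt m)$ scaled, the block acts like an outer product of two vectors of norm $\Theta(1)$.

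First I would write out $f$ with the quadratic activations substituted in, giving a fully explicit polynomial in the weights, and then differentiate twice, organizing $H$ by which pair of weight-groups the derivatives hit. Second, I would isolate the $(\rvw^{(3)},\rvw^{(1)})$ cross block $M$ (an $m\times m$ matrix after flattening) and show $M_{ji} \approx \tfrac{x^2}{2}\cdot\tfrac{\rvw^{(4)}}{\sqrt m}\cdot \beta\cdot \rvw^{(3)}_j\cdot \rvw^{(1)}_i \cdot(\text{lower-order corrections})$, or more precisely identify the leading rank-one piece $c\, \rva \rvb^T$ with $\rva$ built from $\rvw^{(3)}$-data and $\rvb$ from $\rvw^{(1)}$-data. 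Third, I would lower-bound $\|M\|\ge |\rva^T M \rvb|/(\|\rva\|\|\rvb\|)$ (or test against the natural unit vectors) and control the magnitudes of the relevant quantities at initialization: $|\rvw^{(4)}|=\Theta(1)$, $\|\rvw^{(1)}\|=\Theta(\sqrt m)$ so $\tfrac{1}{\sqrt m}$ normalizations produce $\Theta(1)$ norms, $\beta$ concentrates around a constant since it is a normalized sum of i.i.d. terms with nonzero mean (here $\E[\tfrac12 \rvw^{(2)}_i (\rvw^{(1)}_i x)^2]$ involves $\E[\rvw^{(2)}_i]=0$, so actually $\beta = \Theta(1)$ by a CLT-type fluctuation rather than a law-of-large-numbers mean — this is where the $\delta^3$ and the probability $1-2\delta$ enter, via an anti-concentration bound showing $|\beta|\ge c\,\delta$ with probability $\ge 1-2\delta$, and then $\beta$ appears cubed or so after combining with another fluctuating factor). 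Fourth, I would extend the bound from the initialization $\rmW_0$ to the whole ball $B(\rmW_0,R)$ by noting that moving the weights by at most $R$ perturbs each relevant quantity by $O(R^k/\sqrt m)$ relative corrections (Lipschitz-type estimates of the polynomial and its derivatives, using $\|\rmW-\rmW_0\|\le R$ and the $\tfrac{1}{\sqrt m}$ normalizations), yielding the stated $\big(\tfrac12 - \tfrac{R^2}{m}\big)$ and $\big(\sqrt3 C_1\delta^3 - \tfrac{3\log(4/\delta)R^4 + R^3}{\sqrt m}\big)$ factors; the $e^{-m/16}$ failure probability comes from a standard Gaussian norm concentration ($\|\rvw^{(1)}\|^2 \approx m$) bound.

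The main obstacle I expect is the anti-concentration / lower-bound step: unlike an upper bound, proving $\|H\|$ is \emph{at least} a constant requires showing that the leading block does not accidentally cancel, which means carefully tracking a fluctuating (mean-zero-ingredient) quantity like $\beta$ and proving it stays bounded away from zero with the claimed probability — hence the need for an explicit small-ball estimate for a quadratic form in Gaussians (a Paley–Zygmund or Carbery–Wright style argument) that produces the cubic-in-$\delta$ dependence. The secondary difficulty is bookkeeping: making sure that the error terms generated by (i) the non-leading pieces of the Hessian block, (ii) the displacement within $B(\rmW_0,R)$, and (iii) the concentration slacks, are all genuinely $O(\cdot/\sqrt m)$ and assembled into exactly the constants $C_1$, the $R^4\log(4/\delta)$ and $R^3$ terms, and the $\frac{x^2}{4}$ and $\frac{\sqrt3}{8}$ prefactors appearing in the statement. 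Taking $m\to\infty$ then kills every $1/\sqrt m$ term and leaves $\|H\|\ge \frac{\sqrt3}{8}C_1\delta^3 x^2$ with probability $\ge 1-2\delta$, as claimed.
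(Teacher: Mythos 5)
Your strategy---isolate a rank-one block of the Hessian, lower bound its spectral norm by the product of an anti-concentrated cubic Gaussian polynomial and vector norms of size $\Theta(\sqrt m)$, and absorb the displacement inside $B(\rmW_0,R)$ into $O(R^k/\sqrt m)$ corrections---is exactly the skeleton of the paper's argument, but you choose a different block. The paper restricts to the diagonal block in the bottleneck's incoming weights $\rvw^{(2)}$: since the bottleneck pre-activation $\beta=\frac{1}{\sqrt m}\sum_i \rvw^{(2)}_i\sigma(\rvw^{(1)}_i x)$ is linear in $\rvw^{(2)}$ and, with the quadratic activations, $f=\tfrac12\beta^2\cdot\tfrac{1}{\sqrt m}\sum_j \rvw^{(4)}_j(\rvw^{(3)}_j)^2$, that block is exactly $\tfrac{1}{m^{3/2}}\bigl(\sum_j \rvw^{(4)}_j(\rvw^{(3)}_j)^2\bigr)\,\alpha^{(1)}\alpha^{(1)T}$, so its norm factorizes into one cubic polynomial in the layer-3/4 weights---handled by the perturbation-robust Carbery--Wright estimate of Lemma~\ref{lemma:lower_bound}, which is where $\sqrt3 C_1\delta^3\sqrt m-3\log(4/\delta)R^4-R^3$ and the probability $1-2\delta$ come from---times $\|\alpha^{(1)}\|^2$, handled by a single $\chi^2$ tail bound giving the $(\tfrac12-R^2/m)$ factor with probability $1-e^{-m/16}$. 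Your $(\rvw^{(3)},\rvw^{(1)})$ cross block also works but is messier: its entries are $\tfrac{2\beta x^2}{m}\rvw^{(4)}_j\rvw^{(3)}_j\rvw^{(2)}_i\rvw^{(1)}_i$ (note the factor $\rvw^{(2)}_i$ you dropped, and that the paper's proof treats $\rvw^{(4)}$ as a vector indexed by $j$), so its norm is $\tfrac{2|\beta|x^2}{m}\|\rvw^{(4)}\odot\rvw^{(3)}\|\,\|\rvw^{(2)}\odot\rvw^{(1)}\|$; you then need concentration (and $\infty$-norm control inside the ball) for two Hadamard-product norms in addition to the anti-concentration step, and the resulting bound scales as $x^4$ rather than $x^2$, so the literal constants and probabilities in the displayed inequality of Theorem~\ref{thm:lowerbound_H} would not come out of your route without reworking.

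The one point that needs repair rather than bookkeeping is your anti-concentration step. You assert $|\beta|\ge c\,\delta$ with probability $1-2\delta$ and that ``$\beta$ appears cubed or so after combining with another fluctuating factor''; you also call the relevant object a quadratic form in Gaussians. Neither is right: $\beta$ enters your cross block linearly, and $\beta=\tfrac{x^2}{2\sqrt m}\sum_i \rvw^{(2)}_i(\rvw^{(1)}_i)^2$ is not Gaussian but a degree-3 polynomial of the layer-1/2 weights. The $\delta^3$ must come directly from the Carbery--Wright small-ball inequality for degree-3 polynomials, $\P\bigl(|p|\le \epsilon\,\mathrm{Var}(p)^{1/2}\bigr)\lesssim \epsilon^{1/3}$, which yields $|\beta|\gtrsim \delta^3$ with probability $1-\delta$; this is precisely what Lemma~\ref{lemma:lower_bound} does for $\sum_j \bar x_j\bar y_j^2$, including the $R$-sized perturbation terms that produce the $3\log(4/\delta)R^4+R^3$ correction. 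With that step stated correctly (applied to $\beta$ at the perturbed weights), your route does establish the qualitative conclusion that $\|H(\rmW)\|$ stays bounded below by a width-independent constant on $B(\rmW_0,R)$, though not the theorem's exact numerical form.
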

See the proof in Appendix~\ref{secapp:proof_lowerbound_H}. With this lower bounded Hessian, Proposition~\ref{prop:linearity} directly implies that the linearity of the model does not hold for this network.
% \begin{remark}
As Eq.~(\ref{eq:norm_lower_bound}) shows that $\|\rmW^* - \rmW_0\| = \Omega(1)$, our analysis implies the model is not linear, hence tangent kernel is not constant, along the optimization path.
% \end{remark}
In Section~\ref{subsec:numeric}, we empirically verify this finding.
% For any $\rmW$ such that $0<R_1 < \|\rmW- \rmW_0\| \leq R$, we observe that the gradient difference $\| \nabla_\rmW f - \nabla_{\rmW_0} f \|= \| H(\xi)(\rmW-\rmW_0)\|$, $\xi\in B(\rmW_0,R_1)$, is lower bounded by 
% $ \frac{\sqrt{3}}{8}C_1\delta^3x^2R_1$, which is a positive constant independent of the width $m$.

% Given an initial point $\rvw_0 \sim \mathcal{N}(0,I)$, for any $\rvw$ such that  for some $R,R_1>0$, we can find a constant $C>0$ such that $\|H\| \geq C$ in the ball $B(\rvw_0,R)$. Therefore, the change of the gradient of $f$ w.r.t $\rvw$ is lower bounded by a constant since $\| \nabla_\rvw f - \nabla_{\rvw_0} f \| = \| H(\xi)(\rvw-\rvw_0)\| \geq CR_1$.

}

In table \ref{tab:comparenn}, we summarize the key findings of this section and compare them with the case of neural networks with  linear output layer. 
\begin{table}[h!]
  \centering
  \begin{tabular}{|c|c|c|c|}
 \hline%{2\arrayrulewidth}   
  Network    &  Hessian norm & NTK  & Trans. to linearity (constant NTK)? \rule{0pt}{4ex} \rule[-2.4ex]{0pt}{0pt}\\
\hline%{2\arrayrulewidth}
  linear output layer & $\tilde{O}(1/\sqrt{m})$ & $\Theta(1)$  &  {\bf Yes}  \rule{0pt}{3ex} \rule[-1.6ex]{0pt}{0pt}\\
\hline
   nonlinear output layer  & $\tilde{O}(1)$ & $\Theta(1)$  & {\bf No} \rule{0pt}{3ex} \rule[-1.6ex]{0pt}{0pt}\\
  \hline
  bottleneck  & $\tilde{O}(1)$ & $\Theta(1)$ &{\bf No} \rule{0pt}{3ex} \rule[-1.6ex]{0pt}{0pt}\\
  \hline
  \end{tabular}
%   \begin{tablenotes}\footnotesize \begin{flushleft}
%   \item[$\dagger$] we report the scales for the $l+1$-th layer.\end{flushleft}
%   \end{tablenotes}
  \vspace{10pt}
    \caption{Scaling of Hessian spectral norms of the models: linear output layer, non-linear output layer and bottleneck. Note: transition to linearity $=$ constant tangent kernel, in the infinite width limit. }%Both types of neural networks are fully-connected and can be shallow and deep.}
    \label{tab:comparenn}
    \vspace{-20pt}
  \end{table}

\section{Optimization of wide neural networks}\label{sec:optimization}

A number of recent analyses show convergence of gradient descent for wide neural networks~\cite{du2018gradientshallow,du2018gradientdeep,allen2019convergence,zou2018stochastic,arora2019fine,ji2019polylogarithmic,bartlett2019gradient}.  
While an extended discussion of optimization is  beyond the scope of this work,  we refer the interested reader to  our separate paper \cite{liu2020toward}.
The goal of this section is to clarify the  important difference between the (near-)linearity of large models and convergence of optimization by gradient descent. It is easy to see  that a wide model undergoing the transition to linearity   can be optimized by gradient descent if its tangent kernel is well-conditioned at the initialization point. The dynamics of such a model will be essentially the same as for a linear model, an observation originally made in~\cite{jacot2018neural}.

However near-linearity or, equivalently,  near-constancy of the tangent kernel is not necessary for successful optimization. What is needed is that the tangent kernel is well-conditioned along the optimization path, a far weaker condition. 

For a specific example, consider the non-linear output layer neural network $\tilde{f} = \phi(f)$, as defined in Eq.~(\ref{eq:fhat}). As is shown in Section~\ref{sec:non_constant}, this network does not have constant tangent kernel, even when the network width is arbitrarily large. The following theorem states that fast convergence of gradient descent still holds (also see Section~\ref{subsec:numeric} for empirical verification).
 \begin{thm}
Suppose the non-linear function $\phi(\cdot)$ satisfies $|\phi'(z)| \ge \rho > 0, \forall z\in \mathbb{R}$, and the network width $m$ is sufficiently large. Then, with high probability of the random initialization, there exists constant $\mu>0$, such that the gradient descent, with a small enough step size $\eta$, converges to a global minimizer of the square loss function $\L(\rvw)= \frac{1}{2}\sum_{i=1}^n (\tilde{f}(\rvw;\rvx_i)-y_i)^2$  with an exponential convergence rate:
 \begin{equation}
   \L(\rvw_t) \le (1-\eta\mu\rho^2)^t \L(\rvw_0).
 \end{equation}

%  Consider a neural network $f$ as defined in Eq.(\ref{eq:generalnn}), and assume its width is large enough that $\lambda_{min}(K(\rvw)) \ge \mu$ for all $\rvw \in \mathcal{S}$, and a non-linear activation function  $\varphi(\cdot)$ satisfies $|\varphi'(z)| \ge \rho > 0, \forall z$. Then, for the composite neural network $\tilde{f} = \varphi\circ f$, the spectrum of the tangent kernel is lower bounded by
% \begin{equation}
%     \lambda_{min}(K(\rvw)) \ge \mu\rho^2, \quad \forall \rvw \in \mathcal{S}.
%  \end{equation}
\end{thm}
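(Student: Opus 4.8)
The plan is to show that the square loss $\L$ satisfies a Polyak–Łojasiewicz (PL) type inequality along the optimization path, with the PL constant controlled from below by $\mu\rho^2$, where $\mu$ is a lower bound on the smallest eigenvalue of the tangent kernel of the underlying linear-output network $f$. The key structural fact is that, by the chain rule computation already done in Section~\ref{sec:non_constant}, $\nabla_\rvw \tilde f(\rvw;\rvx_i) = \phi'(f(\rvw;\rvx_i))\,\nabla_\rvw f(\rvw;\rvx_i)$, so the tangent kernel of $\tilde f$ is $\tilde K_{ij}(\rvw) = \phi'(f(\rvw;\rvx_i))\phi'(f(\rvw;\rvx_j))K_{ij}(\rvw) = D K(\rvw) D$, where $D = \diag(\phi'(f(\rvw;\rvx_1)),\dots,\phi'(f(\rvw;\rvx_n)))$. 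Since $|\phi'(z)|\ge\rho$, we get $\lambda_{\min}(\tilde K(\rvw)) \ge \rho^2 \lambda_{\min}(K(\rvw))$.

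First I would establish that $\lambda_{\min}(K(\rvw)) \ge \mu > 0$ uniformly over a ball $B(\rmW_0,R)$ of a fixed radius $R$ (large enough to contain the whole trajectory). This follows from two ingredients cited/proved earlier: (i) at initialization $\lambda_{\min}(K(\rmW_0))\ge\mu_0>0$ with high probability (the width-independent lower bound on the NTK eigenvalue, as in~\cite{du2018gradientdeep}); and (ii) the near-constancy of $K$ in $B(\rmW_0,R)$ — by Theorem~\ref{thm:fcn} the Hessian norm of $f$ is $\tilde O(1/\sqrt m)$, so by Proposition~\ref{prop:Hessiantokernel} the kernel entries move by $O(R/\sqrt m)$, hence for $m$ large enough $\lambda_{\min}(K(\rvw))\ge\mu_0/2 =: \mu$ throughout the ball. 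Thus $\lambda_{\min}(\tilde K(\rvw))\ge\mu\rho^2$ in $B(\rmW_0,R)$.

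Next I would run the standard descent argument. Writing the residual $\rve(\rvw) = (\tilde f(\rvw;\rvx_i)-y_i)_i$, we have $\nabla_\rvw\L(\rvw) = J(\rvw)^T\rve(\rvw)$ where $J$ is the Jacobian of $\tilde f$, so $\|\nabla_\rvw\L(\rvw)\|^2 = \rve^T \tilde K(\rvw)\rve \ge \mu\rho^2 \|\rve\|^2 = 2\mu\rho^2\L(\rvw)$, the PL inequality. The loss $\L$ is also $\beta$-smooth on the ball for some width-independent $\beta$ (bounding the Hessian of $\L$, which involves $\|\tilde H\|$, the Lipschitz constants of $\phi,\phi',\sigma$, and the norms of the layer Jacobians — all $O(1)$ on $B(\rmW_0,R)$ by Lemma~\ref{lemma:fcn_quantities} and boundedness of activations). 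With step size $\eta \le 1/\beta$, the usual one-step inequality gives $\L(\rvw_{t+1}) \le \L(\rvw_t) - \tfrac{\eta}{2}\|\nabla\L(\rvw_t)\|^2 \le (1-\eta\mu\rho^2)\L(\rvw_t)$, and iterating yields the claimed rate — provided all iterates stay in $B(\rmW_0,R)$.

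The main obstacle is precisely this last caveat: one must verify that the trajectory does not leave $B(\rmW_0,R)$, since the eigenvalue and smoothness bounds are only valid there. This is the standard coupling/bootstrap step: from the exponential decay one bounds $\|\rvw_{t+1}-\rvw_t\| = \eta\|\nabla\L(\rvw_t)\| \le \eta\sqrt{2\beta\L(\rvw_t)} \le \eta\sqrt{2\beta}(1-\eta\mu\rho^2)^{t/2}\sqrt{\L(\rvw_0)}$, so $\sum_t\|\rvw_{t+1}-\rvw_t\| \le \frac{\eta\sqrt{2\beta\L(\rvw_0)}}{1-\sqrt{1-\eta\mu\rho^2}} = O\!\left(\frac{\sqrt{\L(\rvw_0)}}{\mu\rho^2}\sqrt{\beta}\right)$, a width-independent quantity since $\L(\rvw_0) = \Theta(1)$; choosing $R$ to be (a constant times) this bound makes the induction close. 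This argument is essentially the one in~\cite{liu2020toward, du2018gradientshallow}, transplanted through the $\phi$-layer via the uniform bound $|\phi'|\ge\rho$; the only genuinely new point relative to the linear-output case is the factor $\rho^2$ in the effective kernel eigenvalue, and verifying that inserting $\phi$ does not spoil the $O(1)$ smoothness bound.
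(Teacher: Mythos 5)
Your proposal is correct and follows essentially the same route as the paper: the key observation in both is that $\nabla_\rvw \tilde f = \phi'(f)\nabla_\rvw f$ makes the tangent kernel of $\tilde f$ equal to $DK(\rvw)D$ with $|D_{ii}|\ge\rho$, so uniform conditioning of $K$ in a fixed ball (from the small-Hessian results plus the width-independent lower bound at initialization) transfers to $\tilde K$ with the factor $\rho^2$. The paper simply outsources the remaining PL-inequality, smoothness, and trajectory-stays-in-the-ball bookkeeping to Corollary 8.1 of~\cite{liu2020toward}, which is exactly the argument you write out explicitly.
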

The analysis is based on the following reasoning. Convergence  of gradient descent methods relies on the condition number of the tangent kernel (see~\cite{liu2020toward}). It is not difficult to see that if the original model $f$ has a well conditioned tangent kernel, then the same holds for $\tilde{f}=\phi(f)$ as long as the the derivative of the activation function $\phi'$ is separated from zero. Since the tangent kernel of $f$ is not degenerate, the conclusion follows.

The technical result is a consequence of Corollary 8.1 in~\cite{liu2020toward}. 
%Hence, even if the tangent kernel does not remain constant, a fast convergence can be also expected as long as the tangent kernel remains well-conditioned. 

%With the non-linear function $\phi(\cdot)$ applied to the model $f$, the tangent kernel becomes non-constant but remains well-conditioned during the training process.
%This result is presented as a  corollary in \cite{liu2020toward}. 

%The high-level idea behind the theorem is as follows: the convergence of gradient descent is determined by the conditioning of the tangent kernel on the optimization path, i.e., the quantity $\sup_{\rvw}\lambda_{max}(\tilde{K}(\rvw))/\inf_{\rvw}\lambda_{min}(K(\rvw))$. If $|\phi'|$ is lower bounded by positive number, 
%the tangent kernel of $\tilde{f}$, $\tilde{K}(\rvw) = {\phi}^{\prime 2}(f){K}(\rvw)$, will keep well-conditioned, as $K(\rvw)$ is almost constant and hence well-conditioned. 
% \newpage

\section{Numerical Verification}\label{subsec:numeric}

\begin{figure}
    \centering
    \begin{subfigure}[t]{0.48 \textwidth}
        \centering
        \includegraphics[width=\linewidth]{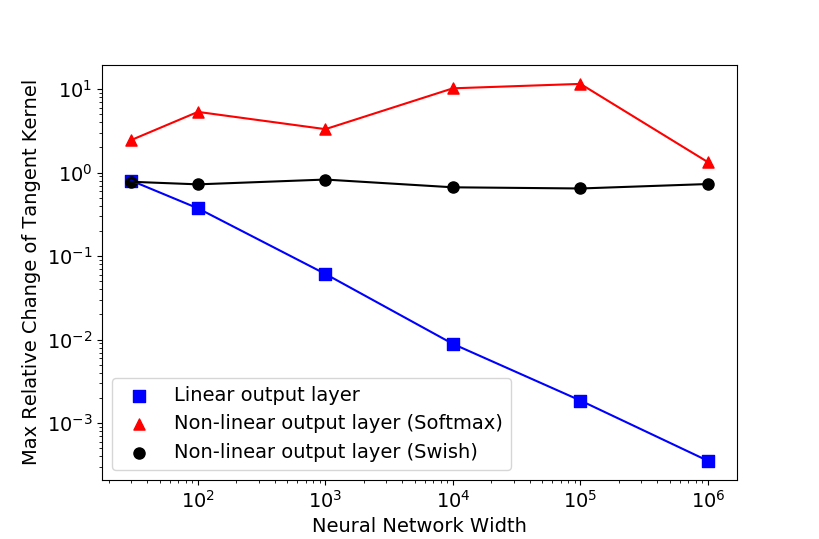} 
        %\caption{} 
       \label{fig:epoch}
    \end{subfigure}
            \begin{subfigure}[t]{0.48\textwidth}
        \centering
        \includegraphics[width=\linewidth]{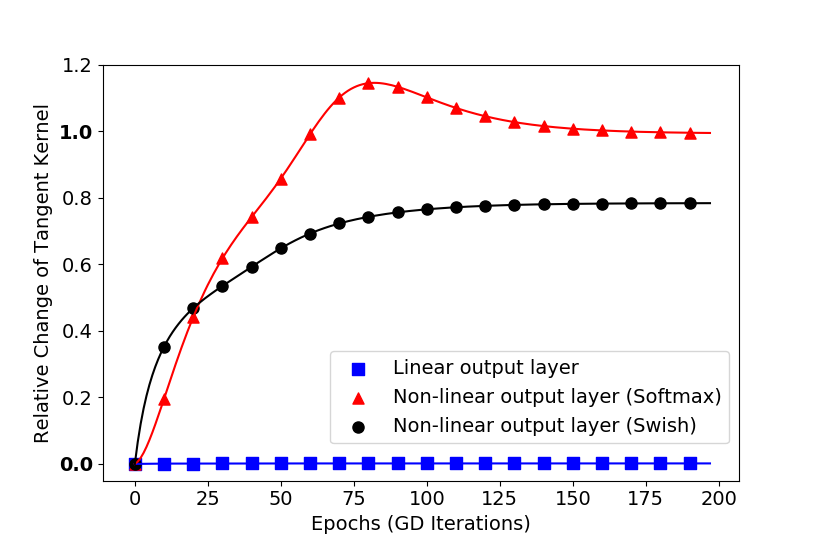} 
        %\caption{} 
       \label{fig:width}
       \end{subfigure}
    \hfill
     \caption{Neural networks with {\it non-linear} output layer vs.  with {\it linear} output layer. Left panel (log scale): max change of tangent kernel $\Delta K$ from initialization to convergence w.r.t. the width $m$. Right panel: Evolution of tangent kernel change $\Delta K(t)$ as a function of epoch, width $m=10^4$. }
     \label{fig:epoch&width}
     \vspace{-5pt}
   \end{figure}
%
%   \begin{figure}
%     \centering
%     \begin{subfigure}[t]{0.45\textwidth}
%         \centering
%         \includegraphics[width=\linewidth]{new_epoch.png} 
%         %\caption{} 
%       \label{fig:epoch}
%     \end{subfigure}
%             \begin{subfigure}[t]{0.465\textwidth}
%         \centering
%         \includegraphics[width=\linewidth]{new_width.png} 
%         %\caption{} 
%       \label{fig:width}
%       \end{subfigure}
%     \hfill
%      \caption{Relative changes of neural tangent kernel compared to initialization. Left panel: Kernel change as a function of iteration for a neural network of width $m=10^4$ and different output layers (Linear, Swish and Tanh); Right panel (log scale): Change of the kernel at convergence from initialization as a function of width, $m=\{30$, $10^2$, $10^3$, $10^4$, $10^5$, $10^6\}$. Each point corresponds to the average of $10$ independent runs until convergence.}
%      \label{fig:epoch&width}
%   \end{figure}

We conduct experiments to verify the non-constancy of tangent kernels for certain types of wide neural networks, as theoretically observed in Section~\ref{sec:non_constant}. 

Specifically, we use gradient descent to train each neural network described below  on a synthetic data  until convergence. We compute the following quantity to measure the max (relative) change of tangent kernel from initialization to convergence:
$\Delta K := \sup_{t>0}\|K(\rvw_t)-K(\rvw_0)\|_F/\|K(\rvw_0)\|_F.$ For a network that has a nearly constant tangent kernel during training, $\Delta K$ is expected to be close to $0$, while a network with a non-constant tangent kernel, $\Delta K$ should be $\Omega(1)$. Detailed experimental setup and data description are given in Appendix~\ref{secapp:experiment}.

{\bf Wide neural networks with non-linear output layers.}  We consider a shallow (i.e., with one hidden layer) neural network $\tilde{f}$ of the type in Eq.(\ref{eq:fhat}) that has a softmax layer or swish~\cite{ramachandran2017searching} activation on the output. As a comparison, we consider a neural network $f$ that has the same structure as $\tilde{f}$, except that the output layer is  linear.
\begin{wrapfigure}{R}{0.45\textwidth}
   \vspace{-10pt}
\begin{center}
  \includegraphics[width=0.43\textwidth]{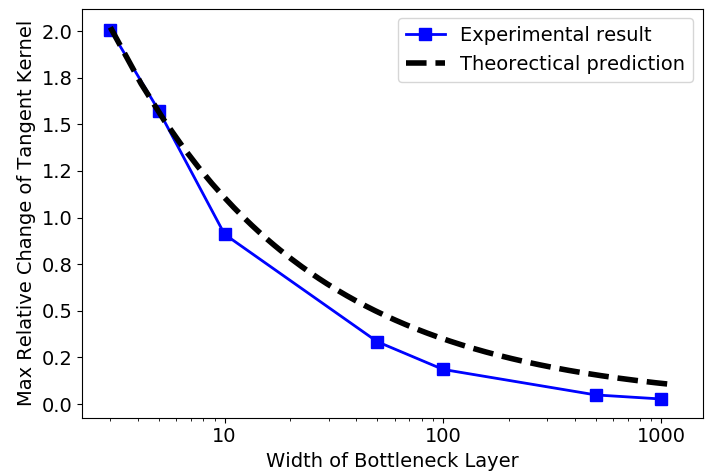} 
\end{center}
\caption{\label{fig:bottleneck}Networks with bottleneck. { Experimental results and theoretical prediction of }relative change of  tangent kernel from initialization to convergence, as a function of the bottleneck width.}
\vspace{-12pt} 
\end{wrapfigure}

We report the change of tangent kernels $\Delta K$ of $\tilde{f}$ and $f$, at different network width $m=\{30$, $10^2$, $10^3$, $10^4$, $10^5$, $10^6\}$.
The results are plotted in the left panel of Figure~\ref{fig:epoch&width}. We observe that, as the network width increases, the tangent kernel of $f$, which has a linear output layer, tends to be constant during training. However, the tangent kernel of $\tilde{f}$ which has a non-linear (softmax or swish) output layer, always takes significant change, even if the network width is large.

In Figure~\ref{fig:epoch&width}, right panel, we demonstrate the evolution of tangent kernel  with respect to the training time for a very wide neural network (width $m=10^4$).
 We see that, for the neural network with  a non-linear output layer, tangent kernel changes significantly from initialization, while tangent kernel of the linear output network is nearly unchanged during training.

\bf Wide neural networks with a bottleneck. \normalfont
We consider a fully connected neural network with $3$ hidden layers and a linear output layer. The second hidden layer, i.e., the bottleneck layer, has a width $m_b$ which is typically small, while the width $m$ of the other hidden layers are typically very large, $m=10^4$ in our experiment. For different bottleneck width $m_b = \{3$, $5$, $10$, $50$, $100$, $500$,$1000\}$, we train the network on a synthetic dataset using gradient descent until convergence, and compute $\Delta K$.

The change of tangent kernels for different bottleneck width is shown in Figure~\ref{fig:bottleneck}. We can see that a narrow bottleneck layer in a wide neural network prevent the neural tangent kernel from being constant during training. As expected, increasing the width of the bottleneck layer, makes the change of the tangent kernel smaller. We observe that { the scaling of the tangent kernel change with width follows close to $\Theta\left(1/\sqrt{m}\right)$ (dashed line in Figure~\ref{fig:bottleneck}) in alignment with our theoretical results (Theorem \ref{cor:different_width})}.

\section*{Acknowledgements}
The authors acknowledge support from NSF, the Simons Foundation and a Google Faculty Research Award. We thank James Lucas for 
 correcting the proof of Prop.~\ref{prop:Hessiantokernel}. The GPU used for the experiments was donated by Nvidia.
\printbibliography

 \newpage
 \appendix
%  \begin{center}
%  {\Large \bf Supplementary Material A: Appendices}
%  \end{center}
%  \smallskip

 \section{Other Parameterization Strategies}\label{secapp:parameterization}
 Throughout the paper, our analysis is based on the NTK prameterization~\cite{jacot2018neural}, under which the constancy of tangent kernel is originally observed. 
 In this section,  we show that different parameterization strategies (e.g., LeCun initialization~\cite{lecun2012efficient} 
 %\footnote{LeCun, Yann A and Bottou, L{\'e}on and Orr, Genevieve B and M{\"u}ller, Klaus-Robert,“Efficient backprop”. In:{\it Neural networks: Tricks of the trade.} Springer, 2012, pp. 9-48.}
 : $w_{0;i}^{(l)}\sim \mathcal{N}(0,1/m)$) 
 do not change our conclusions. 
 
 Specifically, we show that, compared to the NTK prameterization, a different parameterization strategy only rescales the tangent kernel $K$ and the spectral norm of the Hessian $\|H\|$ by the same factor, hence the ratio between tangent kernel $K$ and Hessian spectral norm keeps the same and $\|H\| = o(\|K\|)$  still holds. This still implies that the tangent kernel is almost constant during training.
 
 Recall that we initialize the parameters $\rmW= \{\rvw^{(1)},\rvw^{(2)},\cdots ,\rvw^{(L)},\rvw^{(L+1)}:=\rvv\}$ of the general form of a deep neural network $f$, Eq.(\ref{eq:generalnn}) by a standard Gaussian, i.e. $w_i^{(l)}\sim \mathcal{N}(0,1)$. If we apply another parameterization strategy $\bar{\rmW}$ here, for example,  $\bar{w}_i^{(l)}\sim \mathcal{N}(0,\sigma_m^2)$, where $\sigma_m$ can be a function of $m$, we can see every $\bar{w}_i^{(l)} = \sigma_m w_i^{(l)}$ where $w_i^{(l)} \sim \mathcal{N}(0,1)$.
 
 Hence each layer function becomes
 \begin{align}
 &\alpha^{(l)}= \phi_{l}\left(\frac{1}{\sigma_m}\bar{\rvw}^{(l)};\alpha^{(l-1)}\right).
\end{align}
In this case, the gradient of the model $f$ w.r.t. the weights of layer $l$ is 
\begin{align}
    \frac{\partial f}{\partial \bar{\rvw}^{(l)}} =\frac{\partial \rvw^{(l)}} {\partial \bar{\rvw}^{(l)}} \frac{\partial f}{\partial \rvw^{(l)}}  = \frac{1}{\sigma_m}\frac{\partial f}{\partial \rvw^{(l)}}. 
\end{align}
And by the same reason, the Hessian of the model f w.r.t. the weights of layer $l_1$ and $l_2$ is 
\begin{align}
     \frac{\partial^2 f}{\partial \bar{\rvw}^{(l_1)}\partial \bar{\rvw}^{(l_2)}} = \frac{1}{\sigma_m^2}\frac{\partial^2 f}{\partial \rvw^{(l_1)}\partial \rvw^{(l_2)}}.
\end{align}
% We denote the tangent kernel and the Hessian of the NTK parameterization $\rmW$ by $K$ and $H$, and of the new parameterization strategy $\bar{\rmW}$ respectively by $\bar{K}$ and $\bar{H}$.
% \begin{align*}
%     \bar{K} = \nabla_{\bar{\rmW}}f(\bar{\rmW})^T\nabla_{\bar{\rmW}}f(\bar{\rmW}),
% \end{align*}
% and
% \begin{align*}
%     \bar{H} = \frac{\partial^2 f}{\partial \bar{\rmW}^2}.
% \end{align*}

Therefore, it's easy to see the ratio of the norm of the tangent kernel to the norm of the Hessian keeps the same:
\begin{align}
    \frac{\|{K}(\bar{\rmW})\|}{\| H(\bar{\rmW})\|} =  \frac{\|\frac{1}{\sigma_m^2}K(\rmW)\|}{\|\frac{1}{\sigma_m^2}H(\rmW)\|} = \frac{\|K(\rmW)\|}{\|H(\rmW)\|}.
\end{align}
\paragraph{Example:  LeCun initialization/parameterization.}
In many practical machine learning tasks, it is popular to use the LeCun initialization/parameterization: each individual parameter $(W^{(l)}_0)_{ij} \sim \mathcal{N}(0,\frac{1}{m})$, while there is no factor $1/\sqrt{m}$ in the definition of the layer function, e.g., for fully connected layers
\begin{equation}
    \alpha^{(l+1)} = \sigma(W^{(l)}\alpha^{(l)}).
    \end{equation}
In this setting, the factor $\sigma_{m} = 1/\sqrt{m}$. Then, by the analysis above, we see that
\begin{equation}
    \|K\| = O(m), \ \ \|H\| = O(\sqrt{m}) = o(\|K\|).
\end{equation}
It is also interesting to note that, the Euclidean norm of the parameter change $\rvw^*-\rvw_0$ also scales:
\begin{equation}
    \|\rvw^*-\rvw_0\| = \Theta(1/\sqrt{m}).
\end{equation}

 \section{Experimental Setup}\label{secapp:experiment}
\paragraph{Dataset.} We use a synthetic dataset of size $N=60$ which contains $C=3$ classes. Each data point $(x,y)$ is sampled as follows: label $y$ is randomly sampled from $\{0,1,2\}$ with equal probability; given $y$, $x$ is drawn from the following distribution:
\begin{equation}
x\sim\left\{
\begin{aligned}
&\mathcal{N}(0,1), \quad &\mathrm{if} \; y=0; \\
&\mathcal{N}(10,1), \quad &\mathrm{if} \; y=1; \\
&\mathcal{N}(-10,1), \quad &\mathrm{if} \; y=2.
\end{aligned}
\right.
\end{equation}
We encode each $y_i\in \{0,1,2\}$ in $\{(x_i,y_i)\}_{i=1}^N$ by a one-hot vector $\rvy_i\in \{0,1\}^3$. And $\rvy_{i,j}$ means the $j$-th component of $\rvy_i$. We use this dataset for all the optimization tasks mentioned below.

\subsection{Wide neural networks with non-linear output layers}
\paragraph{Neural Networks.} In the experiments, we train three different neural networks:
\begin{itemize}
    \item Neural network with a linear output layer 
    \begin{equation}\label{eqapp:linearnn}
    f(\rvw,V,\rvb;x) = \frac{1}{\sqrt{m}}V  \sigma( \rvw x+\rvb),
    \end{equation}
     where $\rvw\in\mathbb{R}^{m}$ and $\rvb\in\mathbb{R}^{m}$ are weights and biases for the first layer and  $V \in\mathbb{R}^{3\times m}$ are the weights for the output layer, and $\sigma(\cdot)$ is the ReLU activation function. 

\item Neural network with a softmax-activated (non-linear) output layer
\begin{equation}\label{eqapp:softmaxnn}
    \tilde{f}_1(\rvw,V,\rvb;x) = \mathrm{Softmax}(f(\rvw,V,\rvb;x));
\end{equation}
\item Neural network with a swish-activated (non-linear) output layer
\begin{equation}\label{eqapp:swishnn}
    \tilde{f}_2(\rvw,V,\rvb;x) = \mathrm{Swish}(f(\rvw,V,\rvb;x)).
\end{equation}
Here the swish activation function is defined as $\mathrm{Swish}(\rvz) = \rvz\odot (1 + \exp(-0.1\cdot \rvz))^{-1}$, where $\odot$ is the element-wise multiplication.
\end{itemize}

% We futher define the softmax function $\sigma_1: \mathbb{R}^C\rightarrow \mathbb{R}^C$:
% \begin{equation}\label{eq:softmax}
%     (\sigma_1(g))_i = \frac{e^{g_i}}{\sum_{c=1}^Ce^{g_c}}, \ \forall i\in[C].
% \end{equation}
% And we denote the swish activation function by $\sigma_2:\mathbb{R}^C\rightarrow\mathbb{R}^C$, specifically,
% \begin{equation}\label{eq:swish}
%     (\sigma_2(g))_i = g_i\cdot (1+\mathrm{exp}(-0.1\times g_i))^{-1},\ \forall i\in[C].
% \end{equation}
% Both in Eq.(\ref{eq:softmax}) and (\ref{eq:swish}), $g_i$ denotes the $i$-th component of $g$.
% For the neural network with linear output layer, we use 

% And for the neural network with softmax layer, we use
% \begin{equation}\label{eqapp:softmaxnn}
%     f_2(\rvw,V,\rvb;x) = \sigma_1(\frac{1}{\sqrt{m}}V \textrm{Relu}( \rvw x+\rvb)).
% \end{equation}
% For the neural network with swish activation, we use
% \begin{equation}\label{eqapp:swishnn}
%     f_3(\rvw,V,\rvb;x) = \sigma_2(\frac{1}{\sqrt{m}}V \textrm{Relu}( \rvw x+\rvb)).
% \end{equation}

\textbf{Optimization Tasks.} 
We combine the training of networks $f$ and $\tilde{f}_1$ together, by optimizing the following loss function:
\begin{equation}
    \mathcal{L}_1(\rvw,\rvv,\rvb) = -\frac{1}{N}\sum_{i=1}^{N}\sum_{j=1}^{C}\rvy_{i,j}\cdot log((\tilde{f}_1(\rvx_i)_j).
\end{equation}
In this combined training, networks $f$ and $\tilde{f}_1$ always have the same parameters during training, and the difference between $f$ and $\tilde{f}_1$ is the non-linearity on the output.

% (1) Linear layer: We minimize the cross-entropy loss function with respect to Eq.(\ref{eqapp:linearnn}):
% \begin{equation}
%     \mathcal{L}_1(\rvw,\rvv,\rvb) = -\frac{1}{N}\sum_{i=1}^{N}\sum_{j=1}^{C}\rvy_{i,j}\cdot log((\mathrm{Softmax}(f_1(\rvx_i)))_j).
% \end{equation}

% (2)Softmax layer: We minimize the cross-entropy loss function with respect to Eq.(\ref{eqapp:softmaxnn}):
For the swish-activated network $\tilde{f}_2$, we 
minimize the square loss function:
\begin{equation}
\mathcal{L}_2(\rvw,\rvv,\rvb) = \frac{1}{N}\sum_{i=1}^{N}\| \tilde{f}_2(\rvx_i) - \rvy_{i}\|^2.
\end{equation}
We use gradient descent to minimize the loss functions until convergence is achieved (i.e. loss less than $10^{-4}$).  To measure the change of tangent kernels, we compute the max (relative) change of tangent kernel from initialization to convergence:
$\Delta K := \sup_{t>0}\|K(\rvw_t)-K(\rvw_0)\|_F/\|K(\rvw_0)\|_F.$ For each training, we take $10$ independent runs and report the average $\Delta K$.

We compare the tangent kernel changes $\Delta K$ of $f, \tilde{f}_1$ and $\tilde{f}_2$, at a variety of network widths, $m= 30$, $10^2$, $10^3$, $10^4$, $10^5$, $10^6$. 

% We conduct this experiment to verify our analysis findings that the tangent kernels of wide neural networks with non-linear output layers i.e. $f_2$ and $f_3$ takes significant changes during gradient descent training.

% Specifically, we compare the changes of tangent kernels at different neural network widths, $m= 30$, $10^2$, $10^3$, $10^4$, $10^5$, $10^6$. Given a width $m$, we compare the following three neural networks: the neural network with linear output layer $f_1$ in Eq.(\ref{eqapp:linearnn}), the neural networks with a softmax layer $f_2$ in Eq.(\ref{eqapp:softmaxnn}) and the neural networks with a swish activation $f_3$ in in Eq.(\ref{eqapp:swishnn}).

\subsection{Wide neural networks with a bottleneck}
\paragraph{The Neural Network.} In the experiment, we use a fully connected neural network with $3$ hidden layers and a linear output layer. Its second hidden layer, i.e., the bottleneck layer has a width $m_b$, while the other hidden layers has a width $m$. Specifically, it is defined as:
 \begin{equation}
     f(\rmW;x) =  \frac{1}{\sqrt{m}}W_4\sigma\left(W_3\frac{1}{\sqrt{m}}W_2\sigma(W_1 x)\right),\label{eq:narrownn}
 \end{equation}
 where $W_1\in \mathbb{R}^{m\times 1}$, $W_2\in \mathbb{R}^{m_b\times m}$, $W_3\in \mathbb{R}^{m\times m_b}$, $W_4\in \mathbb{R}^{C\times m}$. Here we use ReLU as activation functions.
 \paragraph{Optimization Tasks.} We minimize the cross entropy loss:
 \begin{equation}
    \mathcal{L}(\rmW) = -\frac{1}{N}\sum_{i=1}^{N}\sum_{j=1}^{C}\rvy_{i,j}\cdot log((\tilde{f}(\rvx_i)_j),
\end{equation}
 where we denote $\mathrm{Softmax}(f)$ by $\tilde{f}$. Here, we let the network width $m = 10^4$, and investigate on different bottleneck width $m_b \in  \{3$, $5$, $10$, $50$, $100$, $500$,$1000\}$.
 
 For each bottleneck width, we use gradient descent to minimize the loss functions until convergence is achieved (i.e. loss less than $10^{-4}$) and compute the max (relative) change of tangent kernel from initialization to convergence:
$\Delta K := \sup_{t>0}\|K(\rvw_t)-K(\rvw_0)\|_F/\|K(\rvw_0)\|_F.$ For each training, take $10$ independent runs and report the average $\Delta K$.

 \section{Proof for Proposition~\ref{prop:linearity}}\label{secapp:linearity}
 \begin{proof}
 Recall that the tangent kernel is defined as
 \begin{equation}
    K_{ij}(\rvw) = \nabla f(\rvw;\rvx_i)^T\nabla f(\rvw;\rvx_j), \quad \textrm{for any inputs }\rvx_i, \rvx_j \in \mathbb{R}^{d}.
\end{equation}

\paragraph{Linearity of $f$ in $\rvw$  $\Rightarrow$ constancy of tangent kernel.} 

Since $f$ is linear in $\rvw$, $\nabla_{\rvw} f(\rvw;\rvx)$ is a constant vector in $\mathbb{R}^p$, for any given input $\rvx$. By the definition of the tangent kernel, each element $K_{ij}(\rvw)$ is constant, for any inputs $\rvx_i,\rvx_j$.

\paragraph{Constancy of tangent kernel $\Rightarrow$  linearity of $f$ in $\rvw$.}

It suffices to prove for every input $\rvx_i$,  function $f(\rvw;\rvx_i):\mathbb{R}^p \rightarrow \mathbb{R}$ is linear in $\rvw$.

For a constant tangent kernel, each element $K_{ii}(\rvw)$ is constant. Noting that $K_{ii}(\rvw) = \|\nabla_\rvw f(\rvw,\rvx_i)\|^2$, we have $\|\nabla f(\rvw,\rvx)\|$ is constant in $\rvw$, for all input $\rvx$.

The following arguments basically follow the idea from~\cite{868044} (a more general result  was shown in~\cite{sakai1996riemannian}). 

% Then we have fixed $\|\nabla f(\rvw,\rvx_i)\|$ by $K_{ii}(\rvw) = \langle \nabla f(\rvw,\rvx_i), \nabla f(\rvw,\rvx_i)\rangle$.

To simplify the notation, in the rest of the proof, we hide the argument $\rvx$, and we use $f(\rvw)$ to denote $f(\rvw;\rvx)$. 

Let $\|\nabla f(\rvw)\| = c$.  
Consider the ordinary differential equation (ODE) 
\begin{align*}
    \frac{d\rvw(t)}{dt} = \nabla f(\rvw(t)),
\end{align*}
where $\rvw(0) = \rvw_0 \in \mathbb{R}^p$ is the initial setting of the parameters.
We have \begin{align*}
    \frac{df}{dt} = \langle \nabla f,\frac{d\rvw}{dt}\rangle = c^2,
\end{align*}
and consequently
\begin{align}\label{eq:constantntk1}
   f(\rvw(t)) = c^2t+f(\rvw_0).
\end{align}
For any $t_1,t_2$, since $\|\nabla f(\rvw)\| = c$, we have
\begin{align*}
   c^2|t_1 - t_2|  = |f(\rvw(t_1)) - f(\rvw(t_2))| \leq c|\rvw(t_1) - \rvw(t_2)| ,
\end{align*}
but $| \rvw(t_1) - \rvw(t_2)| = |\int_{t_2}^{t_1} \|d\rvw(t)/dt\| dt| = c|t_1-t_2|$, which indicates
\begin{align}\label{eq:constantntk2}
    \rvw(t) = t \nabla f(\rvw_0) + \rvw_0.
\end{align}
And in the following we show for any $\rvv\in \mathbb{R}^p$, if $f(\rvv) = f(\rvw_0)$, we have 
\begin{align}\label{eq:constantntk3}
  \langle \nabla f(\rvw_0),\rvv-\rvw_0\rangle = 0.
\end{align}
Given $t \neq 0$, let $c:[0,1]\rightarrow \mathbb{R}^p$ be a differentiable curve joining $t\nabla f(\rvw_0) + \rvw_0$ and $\rvv$. By Eq.~(\ref{eq:constantntk1}) and Eq.~(\ref{eq:constantntk2}), we have
\begin{align*}
    c^2|t| = |f(\rvw(t)) - f(\rvw_0)| &= |f(\rvw_0+t\nabla f(\rvw_0)) - f(\rvw_0)| \\
    &= | \int_0^1 \langle \nabla f(c(s)),c'(s)\rangle ds | \\
    &\leq \int_0^1 \|c'(s)\| ds\\
    &= \| \rvv - t\nabla f(\rvw_0) - \rvw_0\|.
\end{align*}
It follows that
\begin{align*}
    c^4t^2 \leq \|\rvv-\rvw_0\|^2 + t^2 + 2t\langle \rvv-\rvw_0,\nabla f(\rvw_0)\rangle.
\end{align*}
Dividing by $t$ and taking $t$ to $\pm \infty$ allows us to have $\langle \nabla f(\rvw_0),\rvv-\rvw_0\rangle = 0$.\\
Then we construct the level set
\begin{align}\label{eq:levelset}
    M_a = M = \{\rvw \in \mathbb{R}^p : f(\rvw) = a\},
\end{align}
where $a\in \mathbb{R}$.
And its tangent space at $\rvw$ is
\begin{align}\label{eq:tangentspace}
    T_\rvw M = \{ \rvw + \rvv  \in \mathbb{R}^p : \langle \rvv,\nabla f(\rvw)\rangle = 0\}.
\end{align}
By Eq.(\ref{eq:constantntk3}) we have $\langle \rvv - \rvw, \nabla f(\rvw)\rangle $ for all $\rvv \in M$ that satisfies $f(\rvv) = f(\rvw)$. From Eq.(\ref{eq:tangentspace}) we can see $\rvv \in T_\rvw M$. Therefore $M\subset T_\rvw M$. By the fact that $M$ is a closed hypersurface, $M = T_\rvw M$ for all $\rvw \in M$.\\
Hence there exists a $\rvw'\in \mathbb{R}^p$ such that $\|\rvw'\| =1$ and the level set Eq.(\ref{eq:levelset}) is equivalently defined as $M_a = \{\frac{a}{c}\rvw'+\rvv': \langle \rvv',\rvw'\rangle = 0, \rvv'\in \mathbb{R}^p\}$ for all $a$. And we can construct a function $g:\mathbb{R}\rightarrow \mathbb{R}$ such that
\begin{align*}
    g(t) = f(\rvv'+t\rvw'),
\end{align*}
where $g'(t) = c$ for all $t$ which shows $f$ is linear.

% Then for any $\rvw_1,\rvw_2 \in \mathbb{R}^m$, by Lipschitz condition, we have  
% \begin{equation*}
%     | f(\rvw_1) - f(\rvw_2) | \leq c \| \rvw_1 - \rvw_2\|.
% \end{equation*}
% Equivalently, for any $a,b$ in the range of $f$, 
% \begin{equation}\label{eq:distanceofset}
%     \mathrm{dist}(f^{-1}(a),f^{-1}(b)) \geq |a - b|/c, 
% \end{equation}
% where $f^{-1}(a)$ and $f^{-1}(b)$ are level sets.\\  
% Starting from arbitrary $\rvw(0) \in f^{-1}(a)$, consider the gradient flow $d\rvw/dt = \nabla f(\rvw)$. We have
% \begin{equation*}
%     \frac{df(\rvw(t))}{dt} = \frac{df}{dw}(\frac{dw}{dt})^T = c^2. 
% \end{equation*}
% Hence $\rvw(|a-b|/c^2) \in f^{-1}(b)$. And the length of the trajectory $\rvw([0,|a-b|/c^2])$ is 
% \begin{equation*}
%     \int_{0}^{|a-b|/c^2}\| d\rvw/dt\| dt = |a-b|/c.
% \end{equation*}
% Compared with Eq.(\ref{eq:distanceofset}) we can see that the length of the trajectory is exactly the distance between $f^{-1}(a)$ and $f^{-1}(b)$, which indicates the trajectory is a straight line. And therefore, $\nabla f(\rvw)$ is fixed which shows $f$ is linear in $\rvw$.  
\end{proof}
\section{Proof of Proposition~\ref{prop:Hessiantokernel}}\label{secapp:pfhessian}
{
\begin{proof}
Since the function $f$ is twice differentiable w.r.t. $\rvw$, according to Taylor's theorem, we have the following expression for the gradient:
\begin{equation}
    \nabla f(\rvw)  = \nabla f(\rvw_0) + \int_0^1 H(\rvw_0+t(\rvw-\rvw_0))(\rvw-\rvw_0)dt.
\end{equation}
Then the Euclidean norm of the gradient change is bounded by
\begin{eqnarray*}
    \|\nabla_{\rvw} f(\rvw) - \nabla_{\rvw} f(\rvw_0)\|&=& \|\int_0^1 H(\rvw_0+t(\rvw-\rvw_0))(\rvw-\rvw_0)dt\| \\
    &\le&\int_0^1\|H(\rvw_0+t(\rvw-\rvw_0))\|dt\cdot \|\rvw-\rvw_0\|. 
\end{eqnarray*}
Since $t\in[0,1]$ and the ball $B(\rvw_0,R)$ is convex, the point $\rvw_0+t(\rvw-\rvw_0)$ is within $B(\rvw_0,R)$. Hence,
\begin{equation}
    \|\nabla_{\rvw} f(\rvw) - \nabla_{\rvw} f(\rvw_0)\|\le \max_{\rvv\in B(\rvw_0,R)} \|H(\rvv)\|\cdot \|\rvw-\rvw_0\| \le \epsilon R.
\end{equation}
Hence, according to the definition of the tangent kernel, for any inputs $\rvx,\rvz\in \mathbb{R}^d$,
\begin{eqnarray}
    & &| K_{(\rvx,\rvz)}(\rvw) - K_{(\rvx,\rvz)}(\rvw_0)| \nonumber \\
    &\le& \|\nabla_{\rvw} f(\rvw;\rvx) - \nabla_{\rvw} f(\rvw_0;\rvx)\|\cdot \|\nabla_{\rvw} f(\rvw;\rvz)\| + \|\nabla_{\rvw} f(\rvw;\rvz) -  \nabla_{\rvw} f(\rvw_0;\rvz)\|\cdot \|\nabla f(\rvw_0;\rvx)\|\nonumber\\
    &\le& \epsilon R (\|\nabla_{\rvw} f(\rvw_0;\rvx)\| + \|\nabla_{\rvw} f(\rvw;\rvz)\|).\nonumber
\end{eqnarray}
Since $f$ is smooth, the gradients $\nabla_{\rvw} f(\rvw_0)$ and $\nabla_{\rvw} f(\rvw)$ are bounded. Therefore, $| K_{(\rvx,\rvz)}(\rvw) - K_{(\rvx,\rvz)}(\rvw_0)| = O(\epsilon R)$.
\end{proof}
}
% \begin{proof}
% The model $f$, as a function of the parameters $\rvw$, can be written as the form of Taylor expansion with Lagrange remainder term:
% \begin{equation}\label{eq:taylor}
%     f(\rvw) = f(\rvw_0) + \nabla_{\rvw} f(\rvw_0)^T(\rvw-\rvw_0) + \frac{1}{2}(\rvw - \rvw_0)^T H(\boldsymbol{\xi}) (\rvw-\rvw_0),
% \end{equation}
% for some $\boldsymbol{\xi}$ on the line segment joining $\rvw$ and $\rvw_0$. 
% Then Euclidean norm of the gradient change is bounded by
% \begin{equation}
%     \|\nabla_{\rvw} f(\rvw) - \nabla_{\rvw} f(\rvw_0)\|= \|H(\boldsymbol{\xi})(\rvw-\rvw_0)\| \le \|H(\boldsymbol{\xi})\|\cdot\|(\rvw-\rvw_0)\| \le \|H(\boldsymbol{\xi})\| R.
% \end{equation}
% Hence, according to the definition of the tangent kernel, for any inputs $\rvx,\rvz\in \mathbb{R}^d$,
% \begin{eqnarray}
%     & &| K_{(\rvx,\rvz)}(\rvw) - K_{(\rvx,\rvz)}(\rvw_0)| \nonumber \\
%     &\le& \|\nabla_{\rvw} f(\rvw;\rvx) - \nabla_{\rvw} f(\rvw_0;\rvx)\|\cdot \|\nabla_{\rvw} f(\rvw;\rvz)\| + \|\nabla_{\rvw} f(\rvw;\rvz) -  \nabla_{\rvw} f(\rvw_0;\rvz)\|\cdot \|\nabla f(\rvw_0;\rvx)\|\nonumber\\
%     &\le& \|H(\boldsymbol{\xi})\| R (\|\nabla_{\rvw} f(\rvw_0;\rvx)\| + \|\nabla_{\rvw} f(\rvw;\rvz)\|).\nonumber
% \end{eqnarray}
% Since $f$ is smooth, the gradients $\nabla_{\rvw} f(\rvw_0)$ and $\nabla_{\rvw} f(\rvw)$ are bounded. Therefore, $| K_{(\rvx,\rvz)}(\rvw) - K_{(\rvx,\rvz)}(\rvw_0)| = O(\epsilon R)$.
% \end{proof}

\section{Proof of Theorem~\ref{thm:hessian_infinity}}\label{secapp:thm_general}
\begin{proof}
The Hessian matrix $H$ of the neural network can be written as the following structure:
\begin{equation}\label{eq:hessianmatrix}
    H = \left(\begin{array}{cccc}
    H^{(1,1)} & H^{(1,2)} & \cdots & H^{(1,L+1)}\\
    H^{(2,1)} & H^{(2,2)} & \cdots & H^{(2,L+1)}\\
    \vdots & \vdots & \ddots & \vdots \\
    H^{(L+1,1)} & H^{(L+1,2)} & \cdots & H^{(L+1,L+1)}
    \end{array}
    \right).
\end{equation}
Here, each Hessian block $H^{(l_1,l_2)}:= \frac{\partial^2 f}{\partial \rvw^{(l_1)}\partial \rvw^{(l_2)}}$ is the second derivative of $f$ w.r.t. its weights of $l_1$-th and $l_2$-th layers, where we  treat the final layer parameters $\rvv$ as $\rvw^{(L+1)}$.

The following lemma allows us to bound the Hessian spectral norm by the norms of its blocks (see proof in Appendix~\ref{secapp:techincal1}).
\begin{lemma}\label{lemma:combine}
Spectral norm of a matrix $H$ (\ref{eq:hessianmatrix}) is upper bounded by the sum of the spectral norm of its blocks, i.e. $\| H \| \leq \sum_{l_1,l_2} \| H^{(l_1,l_2)}\|$, $l_1,l_2 \in [L+1]$.
\end{lemma}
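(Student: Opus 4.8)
The plan is to work directly from the bilinear (variational) characterization of the spectral norm, $\|H\| = \sup_{\|\rvp\| = \|\rvq\| = 1} \rvp^T H \rvq$, and to exploit the block structure of $H$ displayed in Eq.~(\ref{eq:hessianmatrix}). First I would partition any pair of unit test vectors $\rvp, \rvq \in \mathbb{R}^p$ conformally with the $(L+1)\times(L+1)$ block grid, writing $\rvp = (\rvp^{(1)}, \ldots, \rvp^{(L+1)})$ and $\rvq = (\rvq^{(1)}, \ldots, \rvq^{(L+1)})$, where the subvector $\rvp^{(l)}$ (resp. $\rvq^{(l)}$) lives in the coordinate block associated with $\rvw^{(l)}$. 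Then the quadratic form splits blockwise as $\rvp^T H \rvq = \sum_{l_1, l_2 \in [L+1]} (\rvp^{(l_1)})^T H^{(l_1,l_2)} \rvq^{(l_2)}$, which reduces the global spectral norm to a sum of contributions from the individual blocks.

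The key step is to bound each summand by the corresponding block norm. By the definition of the spectral norm applied to each block, $|(\rvp^{(l_1)})^T H^{(l_1,l_2)} \rvq^{(l_2)}| \le \|H^{(l_1,l_2)}\| \, \|\rvp^{(l_1)}\| \, \|\rvq^{(l_2)}\|$. The one observation that makes everything collapse is that partitioning a unit vector produces subvectors of norm at most one: since $\sum_{l} \|\rvp^{(l)}\|^2 = \|\rvp\|^2 = 1$, each $\|\rvp^{(l_1)}\| \le 1$, and likewise $\|\rvq^{(l_2)}\| \le 1$. Hence $\|\rvp^{(l_1)}\|\,\|\rvq^{(l_2)}\| \le 1$ for every pair $(l_1, l_2)$, so each summand is bounded by $\|H^{(l_1,l_2)}\|$.

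Summing over the $(L+1)^2$ blocks then gives $|\rvp^T H \rvq| \le \sum_{l_1, l_2} \|H^{(l_1,l_2)}\|$ uniformly over all unit $\rvp, \rvq$; taking the supremum over such test vectors on the left-hand side yields the claimed inequality $\|H\| \le \sum_{l_1, l_2} \|H^{(l_1,l_2)}\|$. There is no real analytic obstacle here---the argument is a one-line combination of the triangle inequality with the submultiplicative bound for each block, once the conformal decomposition of the test vectors is in place. The only point that deserves care is resisting the temptation to chase a tighter estimate (for instance, the spectral norm of the $(L+1)\times(L+1)$ matrix whose entries are the block norms): the crude bound $\|\rvp^{(l_1)}\|\,\|\rvq^{(l_2)}\| \le 1$ already delivers exactly the sum-of-block-norms inequality asserted in the lemma, which is all that the downstream application in Theorem~\ref{thm:hessian_infinity} requires.
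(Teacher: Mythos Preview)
Your argument is correct. The paper proves the same lemma by a slightly different (and even shorter) route: it writes $H$ as the sum of $(L+1)^2$ matrices, each obtained by zeroing out all blocks except $H^{(l_1,l_2)}$, applies the triangle inequality for the spectral norm, and then uses the trivial fact that the spectral norm of a matrix with a single nonzero block equals the spectral norm of that block. Your variational approach via $\|H\|=\sup_{\|\rvp\|=\|\rvq\|=1}\rvp^T H\rvq$ and conformal partitioning of the test vectors reaches the same conclusion with one extra (but harmless) step, namely the observation $\|\rvp^{(l_1)}\|\,\|\rvq^{(l_2)}\|\le 1$. Either way the content is just the triangle inequality; the paper's decomposition is marginally cleaner to state, while your version has the minor advantage of making transparent where a sharper bound (the spectral norm of the $(L+1)\times(L+1)$ matrix of block norms) would come from if one ever wanted it.
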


Now, we analyze the Hessian blocks case by case.
Since the Hessian matrix is symmetry, without loss of generosity, we assume $1\le l_1 \le l_2 \le L+1$.

\paragraph{Case 1: $1\le l_1 \le l_2 \le L$.}
By the chain rule, the gradient of the model $f$ w.r.t. the weights of layer $l$, can be written as
\begin{equation}\label{eq:generalgrad}
    \frac{\partial f}{\partial \rvw^{(l)}} = \frac{\partial \alpha^{(l)}}{\partial \rvw^{(l)}}\left(\prod_{l' = l+1}^L \frac{\partial \alpha^{(l')}}{\partial \alpha^{(l'-1)}}\right) \frac{1}{\sqrt{m}}\rvv.
\end{equation}
Then, the Hessian block has the following expression:
% \begin{equation}\label{eq:general_H}
% H^{(l_1,l_2)} := \frac{\partial^2 f}{\partial \rvw^{(l_1)}\partial \rvw^{(l_2)}} = \frac{1}{\sqrt{m}}\frac{\partial^2 \phi_{L}}{\partial \rvw^{(l_1)}\partial \rvw^{(l_2)}}\rvv.
% \end{equation}
% Here, the order $3$ tensor $\frac{\partial^2 \phi_{L}}{\partial \rvw^{(l_1)}\partial \rvw^{(l_2)}}$ has the following expression:
\begin{eqnarray}
    & &H^{(l_1,l_2)}\nonumber\\ &=& \frac{\partial^2 \alpha^{(l_1)}}{(\partial \rvw^{(l_1)})^2}\frac{\partial f}{\partial \alpha_{l_1}}  \cdot \mathbb{I}_{l_1 = l_2}+\left(\frac{\partial \alpha^{(l_1)}}{\partial \rvw^{(l_1)}}\prod_{l' = l_1+1}^{l_2-1} \frac{\partial \alpha^{(l')}}{\partial \alpha^{(l'-1)}}\right) \frac{\partial^2 \alpha^{(l_2)}}{\partial \alpha^{(l_2-1)}\partial \rvw^{(l_2)}} \left( \frac{\partial f}{\partial \alpha^{(l_2)}}\right)\nonumber\\
    & &+\sum_{l=l_2+1}^L\left(\frac{\partial \alpha^{(l_1)}}{\partial \rvw^{(l_1)}}\prod_{l' = l_1+1}^{l-1} \frac{\partial \alpha^{(l')}}{\partial \alpha^{(l'-1)}}\right) \frac{\partial^2 \alpha^{(l)}}{(\partial \alpha^{(l-1)})^2}  \left(\frac{\partial \alpha^{(l_2)}}{\partial \rvw^{(l_2)}}\prod_{l' = l_2+1}^{l} \frac{\partial \alpha^{(l')}}{\partial \alpha^{(l'-1)}}\right)\left( \frac{\partial f}{\partial \alpha^{(l)}}\right)\nonumber
\end{eqnarray}
Hence, the spectral norm of Hessian block $H^{(l_1,l_2)}$ is bounded by
\begin{eqnarray}
   & & \left\|H^{(l_1,l_2)}\right\| \nonumber \\
   &\le& \left\|\frac{\partial^2 \alpha^{(l_1)}}{(\partial \rvw^{(l_1)})^2}\right\|_{2,2,1} \left\|\frac{\partial f}{\partial \alpha^{(l_1)}}\right\|_{\infty} + \left\|\frac{\partial \alpha^{(l_1)}}{\partial \rvw^{(l_1)}}\right\|\prod_{l' = l_1+1}^{l_2-1} \left\|\frac{\partial \alpha^{(l')}}{\partial \alpha^{(l'-1)}}\right\| \left\|\frac{\partial^2 \alpha^{(l_2)}}{\partial \alpha^{(l_2-1)}\partial \rvw^{(l_2)}}\right\|_{2,2,1}  \left\|\frac{\partial f}{\partial \alpha^{(l_2)}}\right\|_{\infty}\nonumber\\
   & &+\sum_{l=l_2+1}^L\left\|\frac{\partial \alpha^{(l_1)}}{\partial \rvw^{(l_1)}}\right\|\prod_{l' = l_1+1}^l \left\|\frac{\partial \alpha^{(l')}}{\partial \alpha^{(l'-1)}}\right\| \left\|\frac{\partial^2 \alpha^{(l)}}{(\partial \alpha^{(l-1)})^2}\right\|_{2,2,1}  \left\|\frac{\partial\alpha^{(l_2)}}{\partial \rvw^{(l_2)}}\right\|\prod_{l' = l_2+1}^{l}\left\| \frac{\partial \alpha^{(l')}}{\partial \alpha^{(l'-1)}}\right\| \left\|\frac{\partial f}{\partial \alpha^{(l)}}\right\|_{\infty}\nonumber\\
   &\le&
   \left\|\frac{\partial^2 \alpha^{(l_1)}}{(\partial \rvw^{(l_1)})^2}\right\|_{2,2,1} \left\|\frac{\partial f}{\partial \alpha^{(l_1)}}\right\|_{\infty} + \mathsf{L}_{\phi}^{l_2-l_1-1} \left\|\frac{\partial \alpha^{(l_1)}}{\partial \rvw^{(l_1)}}\right\|\left\|\frac{\partial^2 \alpha^{(l_2)}}{\partial \alpha^{(l_2-1)}\partial \rvw^{(l_2)}}\right\|_{2,2,1}  \left\|\frac{\partial f}{\partial \alpha^{(l_2)}}\right\|_{\infty}\nonumber\\
   & &+\sum_{l=l_2+1}^L\mathsf{L}_{\phi}^{2l-l_1-l_2}\left\|\frac{\partial \alpha^{(l_1)}}{\partial \rvw^{(l_1)}}\right\| \left\|\frac{\partial^2 \alpha^{(l)}}{(\partial \alpha^{(l-1)})^2}\right\|_{2,2,1}  \left\|\frac{\partial\alpha^{(l_2)}}{\partial \rvw^{(l_2)}}\right\| \left\|\frac{\partial f}{\partial \alpha^{(l)}}\right\|_{\infty}.\nonumber
\end{eqnarray}
By the definitions in Eq.(\ref{eq:defi_quantities}), we have 
\begin{equation}
   \left\|H^{(l_1,l_2)}\right\| \le C'_1 \mathcal{Q}_{2,2,1}(f)\mathcal{Q}_\infty(f),
\end{equation}
with $C'_1 = L^2\mathsf{L}_{\phi}^{2L}+L\mathsf{L}_{\phi}^L+1$.

% By the condition b) in the theorem, we have 
% \begin{equation}
%     \left\|\frac{\partial f}{\partial \alpha^{(L)}}\right\|_{\infty} = \frac{1}{\sqrt{m}}\|\rvv\|_{\infty} \sim \tilde{O}(\frac{1}{\sqrt{m}}).
% \end{equation}
% Recursively apply the condition c), we have
% \begin{equation}
%     \left\|\frac{\partial f}{\partial \alpha^{(l)}}\right\|_{\infty}  \sim \tilde{O}(\frac{1}{\sqrt{m}}), \ \ \forall l \in [L].
% \end{equation}

% Since  the vector-valued layer function $\alpha^{(l)} = \phi_l(\rvw^{(l)};\alpha^{(l-1)})$ is Lipschitz continuous w.r.t. its input $\alpha^{(l-1)}$ and parameters $\rvw^{(l)}$,  the spectral norms of ``gradients'' $\big\| \frac{\partial \alpha^{(l)}}{\partial \alpha^{(l-1)}}\big\|$ and $\big\| \frac{\partial \alpha^{(l)}}{\partial \rvw^{(l)}}\big\|$ are bounded, for all $l\in[L]$. By the condition (a) of the theorem, the norms of order $3$ tensors, $\big\|\frac{\partial^2 \alpha^{(l)}}{(\partial \rvw^{(l)})^2}\big\|_{2,2,1}$, $\big\|\frac{\partial^2 \alpha^{(l)}}{(\partial \alpha^{(l-1)})^2}\big\|_{2,2,1}$ and $\big\|\frac{\partial^2 \alpha^{(l_2)}}{\partial \alpha^{(l_2-1)}\partial \rvw^{(l_2)}}\big\|_{2,2,1}$ are of the order ${O}(1)$. In addition, by condition (b) of the theorem, the $L_{\infty}$-norms $\big\|\frac{\partial f}{\partial \alpha^{(l)}}\big\|_{\infty}, l\in [L]$, are of the order $\tilde{O}(1/\sqrt{m})$. Hence, we have
% \begin{equation}
%     \left\|H^{(l_1,l_2)}\right\| = \tilde{O}\left(\frac{1}{\sqrt{m}}\right),\quad \forall l_1, l_2 \in [L].
% \end{equation}

\paragraph{Case 2: $1\le l_1 < l_2 = L+1$.}
Using the gradient expression in Eq.(\ref{eq:generalgrad}), we have
\begin{equation}
    H^{(l_1,L+1)} = \frac{1}{\sqrt{m}}\frac{\partial \alpha^{(l_1)}}{\partial \rvw^{(l_1)}}\left(\prod_{l' = l_1+1}^L \frac{\partial \alpha^{(l')}}{\partial \alpha^{(l'-1)}}\right).
\end{equation}
Hence, 
\begin{equation}
    \|H^{(l_1,L+1)}\| \le  \frac{1}{\sqrt{m}}\left\|\frac{\partial \alpha^{(l_1)}}{\partial \rvw^{(l_1)}}\right\|\prod_{l' = l_1+1}^L \left\|\frac{\partial \alpha^{(l')}}{\partial \alpha^{(l'-1)}}\right\| \le  \frac{1}{\sqrt{m}}\mathsf{L}_{\phi}^{L}\mathcal{Q}_L(f).
\end{equation}
\paragraph{Case 3: $l_1 = l_2 = L+1$.}
In this case, the Hessian block $H^{(L+1,L+1)}$ is simply zero. Hence, the spectral norm is zero.

\smallskip
Applying Lemma~\ref{lemma:combine}, we immediately obtain the desired result.
\end{proof}

\section{Proof for Lemma~\ref{lemma:fcn_quantities}}\label{secapp:fcn_quantities}
According to the definitions of the quantities $\mathcal{Q}_{\infty}(f)$, $\mathcal{Q}_{2,2,1}(f)$ and $\mathcal{Q}_{L}(f)$ in Eq.(\ref{eq:defi_quantities}), it suffices to show that the followings layer-wise properties hold everywhere in the ball $B(\rmW_0,R)$ with high probability over the initialization: 
\begin{itemize}
    \item The vector $\infty$-norm $\left\|\frac{\partial{f}}{\partial\alpha^{(l)}}\right\|_\infty = \tilde{O}(1/\sqrt{m})$, for all $l\in [L]$;
    \item The matrix spectral norm $\left\|\frac{\partial \alpha^{(l)}}{\partial \rvw^{(l)}}\right\| = O(1)$ w.r.t. $m$, for all $l\in [L]$;
    \item The $(2,2,1)$-norms of order $3$ tensors, $\left\|\frac{\partial^2\alpha^{(l)}}{\partial\rvw^{(l)2}}\right\|_{2,2,1}$, $\left\|\frac{\partial^2\alpha^{(l)}}{\partial \alpha^{(l-1)}\partial \rvw^{(l)}}\right\|_{2,2,1}$ and $\left\|\frac{\partial^2\alpha^{(l)}}{(\partial\alpha^{(l-1)})^2}\right\|_{2,2,1}$ are all of the order $O(1)$ w.r.t. $m$, for all $l\in [L]$.
\end{itemize}
We start the proof with some preliminary results, and then prove the above statements one by one.

\subsection{Preliminaries}

%  \section{Proofs of Theorem~\ref{thm:fcn}}\label{secapp:fcn}
%  The basic idea of the proof is to show that this fully connected neural network satisfies, within a ball of finite radius around the initialization $\rmW_0$,  the conditions proposed in Theorem~\ref{thm:general_hessian}, as well as the Lipschitz continuity assumption.  Then, by Theorem~\ref{thm:general_hessian} we immediately get the desired results.
 
The fully connected neural network is defined in the following way:
\begin{align}\label{eq:fcnet}
 &\alpha^{(0)} = \rvx, \nonumber\\
 &\alpha^{(l)}= \sigma(\tilde{\alpha}^{(l)}), \ \  \tilde{\alpha}^{(l)} = \frac{1}{\sqrt{m_{l-1}}}W^{(l)}\alpha^{(l-1)},\ \forall l \in [L] \nonumber \\
 &f = \frac{1}{\sqrt{m}}\rvv^T \alpha^{(L)},
\end{align}
where $m_0 =d$ which is the dimension of the input $\rvx$, and $m_l = m$ for all $l\in [L]$. The trainable parameters of this network are $\rmW := \{W^{(1)},W^{(2)},\cdots,W^{(L)},W^{(L+1)}:=\rvv\}$, and are initialized by the random Gaussian initialization,  i.e., each parameter $(W^{(l)}_0)_{ij}\sim \mathcal{N}(0,1), \forall l\in [L]$, and $v_{0,i} \sim \mathcal{N}(0,1)$, $i,j\in [m]$. As the parameters $W^{(l)}$ of each layer are reshaped into matrices, the Euclidean norm of parameters becomes $\|\rmW\| := (\sum_{l=1}^{L+1}\|W^{(l)}\|_F^2)^{1/2}$, where $\|\cdot\|_F$ is the Frobenius norm of a matrix.
 
To make the presentation of the proof as simple as possible, we first make the following assumption about the initial parameters $\rmW_0$. Then we prove it in Lemma~\ref{lemma:gaussl2} that the assumption is satisfied  with high probability over the random Gaussian initialization.
 
 \begin{assumption}\label{assu:network1}
We assume that there exists a constant $c_0>0$ such that, for all initial weight matrices/vector $W_0^{(l)}$,  $\|W_0^{(l)}\| \le c_0\sqrt{m}$, where $l\in[L+1]$. %We further assume that the final output $f(\rvx;\rmW_0)$ is upper-bounded.
\end{assumption} 

\begin{lemma}[Spectral norms of initial weight matrices]\label{lemma:gaussl2}
If the parameters are initialized as $(W^{(l)}_0)_{ij} \sim \mathcal{N}(0,1)$ for all $l\in[L+1]$ and $m>d$, then, for each layer $l\in [L+1]$, we have with probability at least $1-2\exp(-\frac{m}{2})$,
\begin{equation}\label{eq:assump1pf}
    \|W_0^{(l)}\| \le 3\sqrt{m}.
\end{equation}
\end{lemma}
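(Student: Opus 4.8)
The plan is to obtain the lemma as a one-line corollary of the standard non-asymptotic bound on the operator norm of a Gaussian random matrix. Specifically, I would invoke the fact that if $A\in\R^{p\times q}$ has i.i.d.\ $\mathcal{N}(0,1)$ entries then for every $t>0$
\[
  \P\bigl(\|A\| > \sqrt{p}+\sqrt{q}+t\bigr) \le 2e^{-t^2/2},
\]
which follows from $\E\|A\|\le\sqrt{p}+\sqrt{q}$ (Gordon's comparison inequality) together with Gaussian concentration of Lipschitz functions, since $A\mapsto\|A\|$ is $1$-Lipschitz in the Frobenius norm; see Davidson--Szarek or Vershynin. If one prefers a self-contained argument, a bound of the same order follows from an $\varepsilon$-net argument: cover the unit spheres of $\R^{p}$ and $\R^{q}$ by $\tfrac14$-nets of sizes $e^{O(p)}$ and $e^{O(q)}$, note that $\rvx^{T}A\rvz\sim\mathcal{N}(0,1)$ for each fixed pair of unit vectors, union bound, and convert the net-wise estimate to a spectral-norm bound at the cost of a constant factor; this route yields the same lemma with a possibly larger numerical constant, which is all that is needed downstream.

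I would then apply the estimate layer by layer with the single choice $t=\sqrt{m}$. For the intermediate layers $l=2,\dots,L$ the matrix $W_0^{(l)}$ is $m\times m$, so the bound reads $\|W_0^{(l)}\|\le 2\sqrt m + t = 3\sqrt m$ with probability at least $1-2e^{-m/2}$. For the first layer $W_0^{(1)}\in\R^{m\times d}$ with $d<m$, so $\sqrt m+\sqrt d\le 2\sqrt m$ and the same $t$ gives $\|W_0^{(1)}\|\le 3\sqrt m$ with probability at least $1-2e^{-m/2}$. For the output layer $W_0^{(L+1)}=\rvv\in\R^{m}$, whose spectral norm equals $\|\rvv\|$: viewing $\rvv$ as a $1\times m$ matrix (or using $\chi^2_m$ concentration of $\|\rvv\|^2$ directly) and using $1\le\sqrt m$ since $m>d\ge1$, the same argument gives $\|\rvv\|\le 3\sqrt m$ with probability at least $1-2e^{-m/2}$. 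Collecting the three cases proves the per-layer claim.

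I do not expect any real obstacle here; the lemma is routine matrix concentration. The only point that needs care is the bookkeeping of constants: one must use the hypothesis $m>d$ to absorb $\sqrt d$ (and the $+1$ coming from the output vector) into $\sqrt m$, and one must take the deviation level to be exactly $t=\sqrt m$ so that the right-hand side closes at $3\sqrt m$ while the failure probability is exactly $2e^{-m/2}$. For the use in Assumption~\ref{assu:network1} one wants all $L+1$ bounds to hold simultaneously, which a union bound supplies with failure probability $2(L+1)e^{-m/2}$ and constant $c_0=3$; for fixed depth this remains exponentially small in $m$.
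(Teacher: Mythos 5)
Your proposal is correct and follows essentially the same route as the paper: both invoke the standard non-asymptotic bound $\P(\|A\|>\sqrt{p}+\sqrt{q}+t)\le 2e^{-t^2/2}$ for Gaussian matrices (the paper cites Corollary 5.35 of Vershynin), apply it per layer with $t=\sqrt{m}$, and use $m>d$ (and $1\le\sqrt m$ for the output vector) to absorb the smaller dimension into $3\sqrt m$. The $\varepsilon$-net alternative and the union-bound remark are fine additions but not needed beyond what the paper does.
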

The proof is in Appendix~\ref{secapp:gaussian}. 

We further assume that, for the input $\rvx\in\mathbb{R}^d$, each component is bounded, i.e. $|x_i| \le C_{\rvx}$, for some constant $C_{\rvx}$ and for all $i\in [d]$. This assumption covers most of the practical cases.

% \begin{assumption}\label{assu:input}
% The $L_\infty$-norm of the input is bounded, i.e. $\|\rvx\|_\infty \leq C_\rvx$ for some constant $C_\rvx>0$. 
% \end{assumption} 
%  \begin{assumption}\label{assu:actfun}
% We assume that $\sigma(0) = 0$.
% \end{assumption} 

We prove the following lemma which states that the norm of the matrix $W^{(l)}$ keeps its order in a finite ball around the $W_0^{(l)}$ .
\begin{lemma}\label{lemma:wl2}
If  $\rmW_0$ satisfies Assumption~\ref{assu:network1}, then for any $\rmW$ such that $\|\rmW-\rmW_0\| \le R$, we have
\begin{equation}
    \|W^{(l)}\| \le c_0\sqrt{m} + R= O(\sqrt{m}), \ \forall l\in [L+1].
\end{equation}
\end{lemma}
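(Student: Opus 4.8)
The plan is to combine the triangle inequality for the spectral norm with the elementary fact that the spectral norm of a matrix is dominated by its Frobenius norm; the argument is short, so the outline below is essentially complete.

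First I would fix a layer index $l \in [L+1]$ and write
\[
\|W^{(l)}\| \le \|W_0^{(l)}\| + \|W^{(l)} - W_0^{(l)}\|.
\]
The first summand is handled immediately by Assumption~\ref{assu:network1}, which gives $\|W_0^{(l)}\| \le c_0\sqrt{m}$. For the perturbation term, I would use $\|A\| \le \|A\|_F$ for any matrix $A$, together with the observation that the $l$-th block's Frobenius norm appears as one summand inside the definition $\|\rmW - \rmW_0\| = \big(\sum_{l'=1}^{L+1}\|W^{(l')} - W_0^{(l')}\|_F^2\big)^{1/2}$; hence
\[
\|W^{(l)} - W_0^{(l)}\| \le \|W^{(l)} - W_0^{(l)}\|_F \le \|\rmW - \rmW_0\| \le R.
\]
Adding the two bounds gives $\|W^{(l)}\| \le c_0\sqrt{m} + R$ for every $l \in [L+1]$.

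Finally, since $R$ is a fixed constant independent of the width $m$, the quantity $c_0\sqrt{m}+R$ is $O(\sqrt{m})$ as $m \to \infty$, which is the stated conclusion. There is no genuine obstacle in this lemma; the only point that deserves a moment's care is the chain spectral norm $\le$ Frobenius norm of the block $\le$ Euclidean norm of the full parameter tuple, all of which are standard.
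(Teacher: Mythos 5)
Your proof is correct and follows essentially the same route as the paper's: triangle inequality, then bounding the spectral norm of the block perturbation by its Frobenius norm, which is in turn dominated by $\|\rmW-\rmW_0\|\le R$, together with Assumption~\ref{assu:network1} for the initialization term. The only cosmetic difference is that you spell out the block-to-tuple norm comparison slightly more explicitly than the paper does.
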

See the proof in Appendix~\ref{secapp:weight}. The following lemma gives bounds on the Euclidean norm of the vector of hidden neurons for each layer.

\begin{lemma}\label{lemma:activations}
If  $\rmW_0$  satisfies Assumption~\ref{assu:network1}, then, for any  $\rmW$ such that $\|\rmW-\rmW_0\| \le R$, we have, at all hidden layers
\begin{equation}
    \|{\alpha}^{(l)}(\rmW)\| \le \mathsf{L}_{\sigma}^{l}(c_0+R/\sqrt{m})^{l}\sqrt{m}C_{\rvx} + \sum_{i=1}^l \mathsf{L}_{\sigma}^{i-1}(c_0+R/\sqrt{m})^{i-1}\sigma(0) = O(\sqrt{m}),\ \forall l \in [L].\label{eqapp:activations}
\end{equation}
% In particular, at initialization, 
% \begin{equation}\label{eqapp:activations-0}
%     \|{\alpha}^{(l)}(\rmW_0)\| \le \mathsf{L}_{\sigma}^{l}c_0^{l}\sqrt{m}C_{\rvx}+\sum_{i=1}^l \mathsf{L}_{\sigma}^{i-1}c_0^{i-1}\sigma(0),\ \forall l \in [L].
% \end{equation}
Particularly, for the input layer,
\begin{align}
    \|\alpha^{(0)}\| = \|\rvx\|  \leq \sqrt{d}C_\rvx= O(1).
\end{align}
\end{lemma}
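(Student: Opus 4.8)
The plan is a direct induction on the layer index $l$, unrolling the layer recursion $\alpha^{(l)}=\sigma(\tilde{\alpha}^{(l)})$, $\tilde{\alpha}^{(l)}=\frac{1}{\sqrt{m_{l-1}}}W^{(l)}\alpha^{(l-1)}$ of Eq.~(\ref{eq:fcnet}) and feeding in the spectral-norm bound $\|W^{(l)}\|\le c_0\sqrt{m}+R$ on each weight matrix from Lemma~\ref{lemma:wl2}. For the base case $l=0$, the bound $\|\alpha^{(0)}\|=\|\rvx\|=\big(\sum_{i=1}^d x_i^2\big)^{1/2}\le\sqrt{d}\,C_{\rvx}=O(1)$ is immediate from the coordinatewise assumption $|x_i|\le C_{\rvx}$.

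For the inductive step I would first peel off the contribution of a possibly nonzero activation value at the origin: since $\sigma$ is $\mathsf{L}_{\sigma}$-Lipschitz and applied coordinatewise, the vector map $\rvz\mapsto\sigma(\rvz)$ is also $\mathsf{L}_{\sigma}$-Lipschitz, hence
\begin{equation}
    \|\alpha^{(l)}\|=\|\sigma(\tilde{\alpha}^{(l)})\|\le\|\sigma(\tilde{\alpha}^{(l)})-\sigma(\mathbf{0})\|+\|\sigma(\mathbf{0})\|\le\mathsf{L}_{\sigma}\|\tilde{\alpha}^{(l)}\|+\|\sigma(\mathbf{0})\|.
\end{equation}
Next I would bound $\|\tilde{\alpha}^{(l)}\|\le\frac{1}{\sqrt{m_{l-1}}}\|W^{(l)}\|\,\|\alpha^{(l-1)}\|$ and substitute Lemma~\ref{lemma:wl2}, noting that for $l\ge2$ we have $m_{l-1}=m$ so that $\|W^{(l)}\|/\sqrt{m_{l-1}}\le c_0+R/\sqrt{m}$, while for $l=1$ the normalization $1/\sqrt{m_0}=1/\sqrt{d}$ combines with $\|\alpha^{(0)}\|\le\sqrt{d}\,C_{\rvx}$ to give the first-layer estimate $\|\tilde{\alpha}^{(1)}\|\le(c_0\sqrt{m}+R)C_{\rvx}=(c_0+R/\sqrt{m})\sqrt{m}\,C_{\rvx}$. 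This produces the one-step recursion $\|\alpha^{(l)}\|\le\mathsf{L}_{\sigma}(c_0+R/\sqrt{m})\|\alpha^{(l-1)}\|+\|\sigma(\mathbf{0})\|$ for $l\ge2$, which I would unroll starting from the $l=1$ bound. The ``signal'' part gets multiplied by $\mathsf{L}_{\sigma}(c_0+R/\sqrt{m})$ at each layer, yielding the term $\mathsf{L}_{\sigma}^{l}(c_0+R/\sqrt{m})^{l}\sqrt{m}\,C_{\rvx}$, and the origin offsets accumulate into the geometric-type sum $\sum_{i=1}^{l}\mathsf{L}_{\sigma}^{i-1}(c_0+R/\sqrt{m})^{i-1}\|\sigma(\mathbf{0})\|$, matching the form of the stated bound. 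Since $L$, $\mathsf{L}_{\sigma}$, $c_0$ and $C_{\rvx}$ are constants independent of $m$ and $R/\sqrt{m}\to0$, every factor $(c_0+R/\sqrt{m})^{l}$ is $O(1)$, so the whole bound is $O(\sqrt{m})$.

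I do not anticipate a genuine obstacle here: this is a routine Lipschitz induction. The only points that require care are the bookkeeping needed to obtain the exact closed-form constants, the mismatch $m_0=d\neq m$ at the first layer (absorbed via the input-norm bound $\|\alpha^{(0)}\|\le\sqrt{d}\,C_{\rvx}$), and correctly tracking the contribution of $\sigma$ at the origin when the activation does not vanish there.
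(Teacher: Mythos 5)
Your proposal is correct and follows essentially the same route as the paper's own proof: induction over layers using the coordinatewise bound $\|\alpha^{(0)}\|\le\sqrt{d}\,C_{\rvx}$, the Lipschitz continuity of $\sigma$, and the spectral-norm bound $\|W^{(l)}\|\le c_0\sqrt{m}+R$ from Lemma~\ref{lemma:wl2}, with the first layer treated separately because of the $1/\sqrt{d}$ normalization. Your bookkeeping of the offset as $\|\sigma(\mathbf{0})\|=\sqrt{m}\,|\sigma(0)|$ is in fact slightly more careful than the paper's bare $\sigma(0)$ term, and in either case the $O(\sqrt{m})$ conclusion is unaffected.
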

The proof is in Appendix \ref{secapp:activations} . 

% Note that $\|{\alpha}^{(l)}(\rmW)\|$ is of the order $O(\sqrt{m})$ in the ball $B(\rvw_0,R) := \{\rvw\in \mathbb{R}^m: \|\rvw-\rvw_0 \|\le R\}$, $\forall l\in[L]$. Specially, the input satisfies $\|\alpha^{(0)}\| = O(1)$.
% By Lemma~\ref{lemma:activations}, we can prove the following lemma.
% \begin{lemma}[Finite initial output]\label{lemma:gaussoutput}
% If the initial parameter setting $\rmW_0$ satisfies Eq.(\ref{eq:assump1pf}) at each layer $l\in [L]$ and each parameter in the top layer $(\rvv_0)_i\sim \mathcal{N}(0,1)$, then, for any $\delta \in (0,1)$, we have with probability at least $1-\delta$ the absolute value of the output $|f(\rmW_0)|$ of the multi-layer neural network at initialization is upper-bounded:
% \begin{equation}
%     |f(\rmW_0)| \leq \frac{1}{\sqrt{\delta}}\mathsf{L}_{\sigma}^{L-1}c_0^{L-1}C_{\rvx}.
% \end{equation}
% \end{lemma}

% Now, we are ready to prove the theorem. Specifically, we prove the following for the fully connected layer in the ball $B(\rmW_0,R)$: Lipschitz continuity, satisfying conditions (a) and (b) in Theorem~\ref{thm:general_hessian} in Section~\ref{secapp:lipschitz}, \ref{secapp:condition_a} and \ref{secapp:condition_b}, respectively. Then, the theorem is an immediate consequence of Theorem~\ref{thm:general_hessian}.

\subsection{Matrix spectral norm $\left\|{\partial \alpha^{(l)}}/{\partial \rvw^{(l)}}\right\| = O(1)$ and Lipschitz continuity of $\alpha^{(l)}$ w.r.t $\alpha^{(l-1)}$}\label{secapp:lipschitz}
Here, we show that, for any $l\in[L]$ and at any point $\rmW\in B(\rmW_0,R)$, both $\left\|{\partial \alpha^{(l)}}/{\partial \rvw^{(l)}}\right\|$ and $\left\|{\partial \alpha^{(l)}}/{\partial \alpha^{(l-1)}}\right\|$ are of the order $O(1)$, with high probability over the random Gaussian initialization of $\rmW_0$, the latter of which is essentially the  Lipschitz continuity of $\alpha^{(l)}$ w.r.t $\alpha^{(l-1)}$.
% Note that the layer function is vector-valued. It suffices to 
% prove that the first derivatives, which are matrices, have bounded spectral norms in the ball $B(\rmW_0,R)$.
% : $\exists\ C>0, \textrm{\ s.t.\ \  } \|{\nabla_{\rvw}\phi_l}\| \le C, \textrm{ and } \|{\nabla_{\alpha} \phi_l}\|\le C.$

{\bf When $l = 2,3,\cdots,L$.}
Recall from Eq.(\ref{eq:fcnlayer}) that, a fully connected layer $\alpha^{(l)}$ is defined as, for $l = 2,3,\cdots,L$:\\
% For $l=1$,
\begin{align}\label{eq:fcphi}
   \alpha^{(l)} 
%   = \phi_l(W^{(l)};\alpha^{(l-1)}) 
   = \sigma\left(\frac{1}{\sqrt{m}}W^{(l)}\alpha^{(l-1)}\right).
\end{align}
The term $\tilde{\alpha}^{(l)}:=\frac{1}{\sqrt{m}}W^{(l)}\alpha^{(l-1)}$ is also known as preactivation.

Note that, in this case,  the parameter vector $\rvw^{(l)}$ is reshaped to an $m\times m$ matrix $W^{(l)}$.
The first derivatives of $\alpha^{(l)}$  are
\begin{eqnarray}
\left(\frac{\partial \alpha^{(l)}}{\partial \alpha^{(l-1)}}\right)_{i,j}&=& \frac{1}{\sqrt{m}}\sigma'(\tilde{\alpha}^{(l)}_i) W^{(l)}_{ij},\\
\left(\frac{\partial \alpha^{(l)}}{\partial W^{(l)}}\right)_{i,jj'} &=& \frac{1}{\sqrt{m}}\sigma'(\tilde{\alpha}^{(l)}_i)\alpha^{(l-1)}_{j'}\mathbb{I}_{i=j}.
\end{eqnarray}
By the definition of spectral norm, $\|A\| = \sup_{\|\rvv\|=1}\|A\rvv\|$, we have, for all $2\leq l \leq L$,
\begin{eqnarray*}
\left\|\frac{\partial \alpha^{(l)}}{\partial \alpha^{(l-1)}}\right\|^2 &=&\sup_{\|\rvv\|=1}\frac{1}{{m}}\sum_{i=1}^m\left(\sigma'(\tilde{\alpha}^{(l)}_i) W^{(l)}_{ij} v_j\right)^2\\
&=&\sup_{\|\rvv\|=1}\frac{1}{{m}} \|{\Sigma'}^{(l)}W^{(l)}\rvv \|^2\\
&\le& \frac{1}{{m}}\| {\Sigma'}^{(l)}\|^2\|W^{(l)}\|^2\\
&\le& \mathsf{L}_\sigma^2 (c_0 + R/\sqrt{m})^2 = O(1),
\end{eqnarray*}
where ${\Sigma'}^{(l)}$ is a diagonal matrix, with the diagonal entry ${\Sigma'}^{(l)}_{ii}=\sigma'(\tilde{\alpha}^{(l)}_i)$. In the last inequality above, we used Lemma~\ref{lemma:wl2} and the Lipschitz continuity of the activation $\sigma(\cdot)$.

Similarly, we have
\begin{eqnarray}
\left\|\frac{\partial \alpha^{(l)}}{\partial W^{(l)}}\right\|^2 &=& \sup_{\|V\|_F = 1} \frac{1}{{m}}\sum_{i=1}^m\Big(\sum_{j,j'}\sigma'(\tilde{\alpha}^{(l)}_i)\alpha^{(l-1)}_{j'}\mathbb{I}_{i=j}V_{jj'}\Big)^2\nonumber\\
&=&\sup_{\|V\|_F = 1} \frac{1}{{m}}\| {\Sigma'}^{(l)} V \alpha^{(l-1)}\|^2\nonumber \\
    &\leq& \frac{1}{{m}}\| {\Sigma'}^{(l)}\|^2 \| \alpha^{(l-1)}\|^2 \nonumber\\
    &\leq& \left(\mathsf{L}_\sigma^{l}(c_0+R)^{l-1}C_\rvx\right)^2  =O(1).\label{eqaux:nabla_w}
\end{eqnarray}
In the last inequality, we used Lemma \ref{lemma:activations} and the Lipschitz continuity of the activation $\sigma(\cdot)$.

{\bf When $l=1$.} The layer function is:
\begin{align}
    \alpha^{(1)} = \phi_1(W^{(1)};\alpha^{(0)}) = \sigma\left(\frac{1}{\sqrt{d}}W^{(1)}\rvx\right).
\end{align}
In this layer, the input $\rvx$ is fixed (independent of trainable parameters) and not a dynamical variable. Hence, $\partial \alpha^{(1)}/\partial \rvx$ is not an interesting object in our Hessian analysis\footnote{Indeed, it does not show up in the Hessian analysis (c.f. the proof of Theorem~\ref{thm:hessian_infinity} in Section \ref{secapp:thm_general}).}.

For $\partial \alpha^{(1)}/\partial W^{(1)}$, we have (with a similar analysis as in Eq.(\ref{eqaux:nabla_w})),
\begin{eqnarray*}
\|\partial \alpha^{(1)}/\partial W^{(1)}\|^2 &\le& \frac{1}{{d}}\| {\Sigma'}^{(l)}\|^2 \| \rvx\|^2 \le \mathsf{L}_{\sigma}^2C_{\rvx}^2 = O(1).
\end{eqnarray*}

% We see that both $\|{\nabla_{\alpha}\phi_l}\|$ and $\|\nabla_\rvw \phi_l\|$ are bounded, hence, the (vector valued) layer function of fully connected neural networks is Lipschitz continuous.

\subsection{$(2,2,1)$-norms of order $3$ tensors are $O(1)$}\label{secapp:221-norms-fcn}
\begin{proof}
We consider the first layer i.e. $l=1$ and the rest of the layers i.e.  $l=2,3,\cdots,L$ separately.
% We consider the layers with $l=2,3,\cdots,L$ and the layer with $l=1$ separately. 

{\bf When $l=2,3,\cdots,L$.} The second derivatives of the vector-valued layer function $\alpha^{(l)}$, which are order $3$ tensors, have the following expressions:
\begin{eqnarray}
 & &\left(\frac{\partial^2 \alpha^{(l)}}{(\partial \alpha^{(l-1)})^2}\right)_{i,j,k} = \frac{1}{{m}}\sigma''(\tilde{\alpha}^{(l)}_i)W_{ij}^{(l)}W_{ik}^{(l)},\\
 & &\left(\frac{\partial^2 \alpha^{(l)}}{\partial \alpha^{(l-1)}\partial W^{(l)}}\right)_{i,j,kk'} = \frac{1}{{m}}\sigma''(\tilde{\alpha}^{(l)}_i)W_{ij}^{(l)}\alpha_{k'}^{(l-1)}\mathbb{I}_{i=k},\\
 & &\left(\frac{\partial^2 \alpha^{(l)}}{(\partial W^{(l)})^2}\right)_{i,jj',kk'} =\frac{1}{{m}}\sigma''(\tilde{\alpha}^{(l)}_i)\alpha_{j'}^{(l-1)}\alpha_{k'}^{(l-1)}\mathbb{I}_{i=k=j}.
\end{eqnarray}

By the definition of the $(2,2,1)$-norm for order $3$ tensors, and  Lemma~\ref{lemma:wl2}, we get
 \begin{align}
    \left\|\frac{\partial^2 \alpha^{(l)}}{(\partial \alpha^{(l-1)})^2}\right\|_{2,2,1} 
    &=\sup_{\|\rvv_1\| = \|\rvv_2\|=1}\frac{1}{m}\sum_{i=1}^{m}\left|\sigma''(\tilde{\alpha}^{(l)}_i)(W^{(l)}\rvv_1)_i(W^{(l)}\rvv_2)_i\right|\nonumber \\
    &\leq \sup_{\|\rvv_1\| = \|\rvv_2\|=1}\frac{1}{m}\beta_\sigma \sum_{i=1}^{m}\left|(W^{(l)}\rvv_1)_i(W^{(l)}\rvv_2)_i\right|\nonumber\\
    &\leq \sup_{\|\rvv_1\| = \|\rvv_2\|=1}\frac{1}{2m}\beta_\sigma \sum_{i=1}^{m}(W^{(l)}\rvv_1)_i^2 +(W^{(l)}\rvv_2)_i^2\nonumber\\
    &\leq \frac{1}{2m}\beta_\sigma \sup_{\|\rvv_1\| = \|\rvv_2\|=1}(\| W^{(l)}\rvv_1\|^2 +  \|  W^{(l)}\rvv_2\|^2 )\nonumber \\
    &\le  \frac{1}{2m}\beta_\sigma (\| W^{(l)}\|^2 +  \|  W^{(l)}\|^2 ) \nonumber\\
    &\leq {\beta_\sigma (c_0+R/\sqrt{m})^2} = O(1).\label{eq:fcn_order_3}
\end{align}
Similarly,  by using Lemma~\ref{lemma:wl2} and Lemma~\ref{lemma:activations}, we have,
\begin{align*}
        \left\|\frac{\partial^2 \alpha^{(l)}}{\partial \alpha^{(l-1)}\partial W^{(l)}}\right\|_{2,2,1} 
    &=\sup_{\|\rvv_1\| = \|V_2\|_F=1}\frac{1}{m}\sum_{i=1}^m\left|\sigma''(\tilde{\alpha}^{(l)}_i)(W^{(l)}\rvv_1)_i(V_2\alpha^{(l)})_i\right| \\
    &\leq \sup_{\|\rvv_1\| = \|V_2\|_F=1}\frac{1}{2m}\beta_\sigma (\|W^{(l)}\rvv_1\|^2+\|V_2\alpha^{(l-1)}\|^2)  \\
    &\leq \frac{1}{2m}\beta_\sigma (\| W^{(l)}\|^2   +\|\alpha^{(l-1)}\|^2) \\
    &\leq \frac{\beta_\sigma}{2}(c_0+R/\sqrt{m})^2 + \frac{\beta_\sigma}{2} \mathsf{L}_\sigma^{2l-2}(c_0+R/\sqrt{m})^{(2l-2)} C^2_\rvx = O(1).
\end{align*}
And
\begin{align}
     \left\|\frac{\partial^2 \alpha^{(l)}}{(\partial W^{(l)})^2}\right\|_{2,2,1} 
    &=\sup_{\|V_1\|_F = \|V_2\|_F=1}\frac{1}{m}\sum_{i=1}^m\left|\sigma''(\tilde{\alpha}^{(l)}_i)(V_1\alpha^{(l-1)})_i(V_2\alpha^{(l-1)})_i\right| \nonumber  \\
    &\leq \sup_{\|V_1\|_F = \|V_2\|_F=1}\frac{1}{2m}\beta_\sigma(\|V_1\alpha^{(l-1)}\|^2+ \|V_2\alpha^{(l-1)}\|^2) \nonumber  \\
    &\leq \frac{1}{2m}\beta_\sigma(\|\alpha^{(l-1)}\|^2 +  \|\alpha^{(l-1)}\|^2) \nonumber  \\
    &\leq \beta_\sigma \mathsf{L}_\sigma^{2l-2}(c_0+R/\sqrt{m})^{2l-2}C_\rvx^2 = O(1).\label{eqaux:partial_w_w}
\end{align}

{\bf When $l=1$.} As discussed in Section~\ref{secapp:lipschitz}, the input $\alpha^{(0)}=\rvx$ is constant, we only need to analyze the tensor $\frac{\partial^2 \alpha^{(l)}}{(\partial W^{(l)})^2}$ in this case.
With a similar analysis in Eq.(\ref{eqaux:partial_w_w}), we have
\begin{equation}
    \left\|\frac{\partial^2 \alpha^{(1)}}{(\partial W^{(1)})^2}\right\|_{2,2,1} \le \frac{1}{2d}\beta_\sigma(\|\alpha^{(0)}\|^2 +  \|\alpha^{(0)}\|^2) \le \beta_{\sigma}C_{\rvx}^2 = O(1).
\end{equation}
 \end{proof}

\subsection{Vector $\infty$-norm is $\tilde{O}(1/\sqrt{m})$}\label{secapp:infinity-norm-fcn}

 \begin{proof}
 First of all, we present a few useful facts,
 Lemma~\ref{lemma:everyactivation}-\ref{lemma:everyderivatives} 
 that will be used during the proof. The proofs of the following lemmas are in
 Appendix~\ref{secapp:everyactivation}-\ref{secapp:everyderivatives}. 
 
We first show that each activation of the hidden layers is bounded at initialization,  with high probability.
 \begin{lemma}\label{lemma:everyactivation}
For any $l \in [L]$, given $i\in [m]$, with probability at least $1 - 2e^{-{c_\alpha^{(l)}\ln^2(m)}}$ for some constant $c_\alpha^{(l)}>0$, $|\alpha_{i}^{(l)}| = \tilde{O}(1)$ at initialization.
\end{lemma}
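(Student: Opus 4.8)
The plan is to argue by induction on the layer index $l$, conditioning on the weights of the earlier layers and using the fact that, at initialization, the $i$-th preactivation $\tilde{\alpha}^{(l)}_i$ is a one-dimensional centered Gaussian whose variance is controlled through Lemma~\ref{lemma:activations}.

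First I would isolate the good event for the earlier layers. By Lemma~\ref{lemma:gaussl2} together with a union bound, the event $\mathcal{E}_0$ that $\|W_0^{(l')}\| \le 3\sqrt{m}$ for all $l' \in [l-1]$ has probability at least $1 - 2(l-1)e^{-m/2}$, and it is measurable with respect to $W_0^{(1)},\dots,W_0^{(l-1)}$. On $\mathcal{E}_0$, Assumption~\ref{assu:network1} holds for layers $1,\dots,l-1$ with $c_0 = 3$, so Lemma~\ref{lemma:activations} gives $\|\alpha^{(l-1)}(\rmW_0)\| \le \Gamma_{l-1}\sqrt{m}$ for a constant $\Gamma_{l-1}$ depending only on $l$, $\mathsf{L}_\sigma$, $C_{\rvx}$ and $|\sigma(0)|$; for $l=1$ this is just the deterministic bound $\|\alpha^{(0)}\| = \|\rvx\| \le \sqrt{d}\,C_{\rvx}$, so no conditioning is needed in the base case.

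Next, for the fixed $i$, I would condition on $W_0^{(1)},\dots,W_0^{(l-1)}$ (equivalently on $\alpha^{(l-1)}$). Since $W_0^{(l)}$ is independent of those matrices, the preactivation $\tilde{\alpha}^{(l)}_i = \frac{1}{\sqrt{m}}\sum_{j} (W_0^{(l)})_{ij}\,\alpha^{(l-1)}_j$ is, conditionally, a centered Gaussian with variance $\|\alpha^{(l-1)}\|^2/m$, which on $\mathcal{E}_0$ is at most $\Gamma_{l-1}^2$. A Gaussian tail bound then yields $|\tilde{\alpha}^{(l)}_i| \le \sqrt{2}\,\Gamma_{l-1}\ln m$ with conditional probability at least $1 - 2e^{-\ln^2 m}$ on $\mathcal{E}_0$, and hence, using $|\sigma(z)| \le |\sigma(0)| + \mathsf{L}_\sigma|z|$, we get $|\alpha^{(l)}_i| = |\sigma(\tilde{\alpha}^{(l)}_i)| \le |\sigma(0)| + \sqrt{2}\,\mathsf{L}_\sigma\Gamma_{l-1}\ln m = \tilde{O}(1)$. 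Removing the conditioning and taking a union bound with the complement of $\mathcal{E}_0$, the bound $|\alpha^{(l)}_i| = \tilde{O}(1)$ fails with probability at most $2e^{-\ln^2 m} + 2(l-1)e^{-m/2}$; since $m/2$ grows faster than $\ln^2 m$, for $m$ large this is at most $2e^{-c_\alpha^{(l)}\ln^2 m}$ for some constant $c_\alpha^{(l)}\in(0,1)$, which is the claimed bound.

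The step requiring the most care is the probabilistic bookkeeping rather than any delicate estimate: because $\alpha^{(l-1)}$ is itself random, one must condition on the earlier weight matrices (using independence of $W_0^{(l)}$) before invoking Gaussian concentration for $\tilde{\alpha}^{(l)}_i$, keep the conditioning event $\mathcal{E}_0$ measurable with respect to exactly those matrices, and track constants so that the two failure probabilities combine into the single clean form $2e^{-c_\alpha^{(l)}\ln^2 m}$. Everything else — the Gaussian tail bound and the Lipschitz bound on $\sigma$ — is routine.
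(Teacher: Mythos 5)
Your proposal is correct and follows essentially the same route as the paper's proof: bound $\|\alpha^{(l-1)}\| = O(\sqrt{m})$ via Lemma~\ref{lemma:activations}, observe that the preactivation $\tilde{\alpha}^{(l)}_i$ is (conditionally on the earlier weights) a centered Gaussian with $O(1)$ variance, apply the Gaussian tail bound at threshold $\sim \ln m$, and finish with the Lipschitz bound $|\sigma(z)| \le |\sigma(0)| + \mathsf{L}_\sigma |z|$. Your treatment is merely more explicit than the paper's about conditioning on $W_0^{(1)},\dots,W_0^{(l-1)}$ and about absorbing the failure probability of the spectral-norm event from Lemma~\ref{lemma:gaussl2} into the constant $c_\alpha^{(l)}$ (the advertised ``induction'' is not actually used, but this does not affect correctness).
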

Define vector $\rvb^{(l)} :=\partial f/\partial {\alpha^{(l)}} \in \mathbb{R}^m$ for $l\in [L]$. And we use $\rvb_0$ to denote $\rvb$ at initialization. Specifically, $\rvb^{(l)}$ takes the following form:
\begin{align}\label{eq:derivatives_expr}
    \rvb^{(l)} = \prod_{l'=l+1}^L\left(\frac{1}{\sqrt{m}}(W^{(l')})^T{\Sigma'}^{(l')}\right)\frac{1}{\sqrt{m}}\rvv,
\end{align}
where ${\Sigma'}^{(l')}$ is a diagonal matrix, with $({\Sigma'}^{(l')})_{ii} = \sigma'(\tilde{\alpha}_i^{(l')})$.

The following lemma gives an upper bound to Euclidean norms of $\rvb^{(l)}$ in the ball $B(\rmW_0,R)$.
\begin{lemma}\label{lemma:derivatives}
If the initial parameters $\rmW_0$ of the multi-layer neural network $f(\rmW)$ satisfies Assumption~\ref{assu:network1}, then, for any  $\rmW$ such that $\|\rmW-\rmW_0\| \le R$, we have, at all hidden layers, i.e., 
 $\forall l\in [L]$,  
 \begin{equation}
 \|\rvb^{(l)}\| \le \mathsf{L}_\sigma^{L-l}(c_0 + R/\sqrt{m})^{L-l+1}.
 \end{equation} 
In particular, at initialization, 
\begin{equation}\label{eqapp:rvb0-2norm}
    \|\rvb^{(l)}_0\| \le \mathsf{L}_\sigma^{L-l}c_0^{L-l+1}.
\end{equation}
\end{lemma}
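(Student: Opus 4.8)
The plan is to bound $\|\rvb^{(l)}\|$ directly from the closed-form expression in Eq.~(\ref{eq:derivatives_expr}), using submultiplicativity of the operator norm together with the spectral-norm control on the weight matrices from Lemma~\ref{lemma:wl2} and the boundedness of $\sigma'$. There is no need for any probabilistic argument here: the statement is conditional on Assumption~\ref{assu:network1}, so everything is deterministic once that assumption is in force (the randomness enters only through Lemma~\ref{lemma:gaussl2}, which we may invoke as already proved).

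Concretely, recall from Eq.~(\ref{eq:derivatives_expr}) that $\rvb^{(l)} = \left(\prod_{l'=l+1}^{L}\tfrac{1}{\sqrt m}(W^{(l')})^T{\Sigma'}^{(l')}\right)\tfrac{1}{\sqrt m}\rvv$, i.e.\ a product of $L-l$ matrix factors applied to the vector $\tfrac{1}{\sqrt m}\rvv$. Since $\|(W^{(l')})^T\| = \|W^{(l')}\|$ and ${\Sigma'}^{(l')}$ is diagonal with entries $\sigma'(\tilde\alpha_i^{(l')})$, the $\mathsf{L}_\sigma$-Lipschitz continuity of $\sigma$ gives $\|{\Sigma'}^{(l')}\| = \max_i|\sigma'(\tilde\alpha_i^{(l')})| \le \mathsf{L}_\sigma$; and Lemma~\ref{lemma:wl2} gives $\|W^{(l')}\| \le c_0\sqrt m + R$ for every $\rmW \in B(\rmW_0,R)$ and every layer index (in particular for $\rvv = W^{(L+1)}$). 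Hence each of the $L-l$ factors obeys $\tfrac{1}{\sqrt m}\|W^{(l')}\|\,\|{\Sigma'}^{(l')}\| \le \mathsf{L}_\sigma\,(c_0 + R/\sqrt m)$, while the trailing term obeys $\tfrac{1}{\sqrt m}\|\rvv\| \le c_0 + R/\sqrt m$. Multiplying these estimates and applying submultiplicativity,
\[
\|\rvb^{(l)}\| \;\le\; \big(\mathsf{L}_\sigma(c_0+R/\sqrt m)\big)^{L-l}\,(c_0+R/\sqrt m) \;=\; \mathsf{L}_\sigma^{L-l}(c_0+R/\sqrt m)^{L-l+1},
\]
which is the claimed bound. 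The initialization statement then follows either by setting $R=0$ (since $\rmW_0 \in B(\rmW_0,0)$) or, equivalently, by rerunning the same computation with Assumption~\ref{assu:network1}'s bound $\|W_0^{(l')}\|\le c_0\sqrt m$ in place of Lemma~\ref{lemma:wl2}, giving $\|\rvb^{(l)}_0\|\le \mathsf{L}_\sigma^{L-l}c_0^{L-l+1}$.

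The ``main obstacle'' here is really just bookkeeping rather than any genuine difficulty: the estimate is a one-line telescoping product, and the only thing worth watching is the $1/\sqrt m$ normalization — each weight matrix contributes a spectral norm of order $\sqrt m$, which is exactly cancelled by its $1/\sqrt m$ prefactor, leaving a product of width-independent constants. This cancellation is the reason $\|\partial f/\partial\alpha^{(l)}\|$ (Euclidean norm) stays $\Theta(1)$ in $m$, in contrast to the $\infty$-norm bound of order $1/\sqrt m$ established separately in Section~\ref{secapp:infinity-norm-fcn}, which is exactly the discrepancy driving the transition to linearity.
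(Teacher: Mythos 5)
Your proposal is correct and is essentially the paper's argument: the paper proves the same per-layer estimate $\frac{1}{\sqrt m}\|W^{(l')}\|\,\|{\Sigma'}^{(l')}\| \le \mathsf{L}_\sigma(c_0+R/\sqrt m)$ (via Lemma~\ref{lemma:wl2} and the Lipschitz bound on $\sigma'$) and simply organizes the telescoping product as an induction over layers, with the base case $\frac{1}{\sqrt m}\|\rvv\| \le c_0+R/\sqrt m$, and it likewise obtains the initialization bound by setting $R=0$. Your unrolled submultiplicativity version is the same computation in different packaging, so there is nothing to add.
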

We proceed to show  all the components of $\rvb^{(l)}_0$ are of order  $\tilde{O}(\frac{1}{\sqrt{m}})$ with high probability.
\begin{lemma}\label{lemma:everyderivatives}
 With probability at least $1-me^{-c_b^{(l)}\ln^2(m)}$ for some constant $c_b^{(l)} >0$, $\|\rvb_0^{(l)}\|_\infty = \tilde{O}(1/\sqrt{m})$.
\end{lemma}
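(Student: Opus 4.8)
I argue directly for a fixed $l$. The case $l=L$ is immediate: $\rvb^{(L)}_0=\frac{1}{\sqrt m}\rvv$ with $v_i\sim\mathcal N(0,1)$ i.i.d., so a Gaussian tail bound gives $|v_i|\le c\ln m$ with probability $1-2e^{-c^2\ln^2 m/2}$, and a union bound over $i\in[m]$ yields $\|\rvb^{(L)}_0\|_\infty=\tilde O(1/\sqrt m)$ with probability $1-me^{-c_b^{(L)}\ln^2 m}$. Assume henceforth $l\le L-1$. From Eq.~(\ref{eq:derivatives_expr}) we have the single-step recursion $\rvb^{(l)}_0=\frac{1}{\sqrt m}(W^{(l+1)}_0)^T{\Sigma'}^{(l+1)}\rvb^{(l+1)}_0$ with ${\Sigma'}^{(l+1)}=\diag\!\big(\sigma'(\tilde\alpha^{(l+1)}_j)\big)$ at initialization, so fixing $i\in[m]$,
\[
b^{(l)}_{0,i}=\tfrac{1}{\sqrt m}\,\big\langle (W^{(l+1)}_0)_{:,i},\ \rvc\big\rangle,\qquad \rvc:={\Sigma'}^{(l+1)}\rvb^{(l+1)}_0 .
\]
The obstacle is that $\rvc$ is \emph{not} independent of the column $(W^{(l+1)}_0)_{:,i}$: that column enters the pre-activations $\tilde\alpha^{(l+1)}$ and, via the forward pass, all later layers, hence both ${\Sigma'}^{(l+1)}$ and $\rvb^{(l+1)}_0$.

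To decouple, I will use a leave-one-out construction. Let $\alpha^{(l),-i}$ be $\alpha^{(l)}_0$ with its $i$-th coordinate set to $0$, propagate it through layers $l+1,\dots,L$ to obtain $\tilde\alpha^{(l'),-i}$, ${\Sigma'}^{(l'),-i}$ and, via Eq.~(\ref{eq:derivatives_expr}), $\rvb^{(l+1),-i}_0$; all of these are functions of $\{W^{(l')}_0\}_{l'\ne l+1}$ together with the columns of $W^{(l+1)}_0$ other than the $i$-th, hence independent of $(W^{(l+1)}_0)_{:,i}$. Writing $\rvc^{-i}:={\Sigma'}^{(l+1),-i}\rvb^{(l+1),-i}_0$ splits $b^{(l)}_{0,i}$ into a \emph{main term} $\frac{1}{\sqrt m}\langle (W^{(l+1)}_0)_{:,i},\rvc^{-i}\rangle$ and a \emph{correction term} $\frac{1}{\sqrt m}\langle (W^{(l+1)}_0)_{:,i},\rvc-\rvc^{-i}\rangle$. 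Conditioning on everything except $(W^{(l+1)}_0)_{:,i}$, the vector $\rvc^{-i}$ is fixed and $(W^{(l+1)}_0)_{:,i}\sim\mathcal N(0,I_m)$, so the main term is $\mathcal N\big(0,\|\rvc^{-i}\|^2/m\big)$; since $\|\rvc^{-i}\|\le\mathsf L_\sigma\|\rvb^{(l+1),-i}_0\|=O(1)$ (Lemma~\ref{lemma:derivatives} depends only on the weight-norm bounds and so applies to the leave-one-out quantities; alternatively bound $\|\rvb^{(l+1),-i}_0\|\le\|\rvb^{(l+1)}_0\|+\|\rvb^{(l+1)}_0-\rvb^{(l+1),-i}_0\|$), a $\Theta(\ln m)$-deviation Gaussian tail bound makes the main term $\tilde O(1/\sqrt m)$ with probability $1-2e^{-\Omega(\ln^2 m)}$.

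For the correction term, $\big|\frac{1}{\sqrt m}\langle (W^{(l+1)}_0)_{:,i},\rvc-\rvc^{-i}\rangle\big|\le\frac{1}{\sqrt m}\|(W^{(l+1)}_0)_{:,i}\|\,\|\rvc-\rvc^{-i}\|$, and $\|(W^{(l+1)}_0)_{:,i}\|=O(\sqrt m)$ with probability $1-e^{-\Omega(m)}$ by a $\chi^2$ bound, so it suffices to show $\|\rvc-\rvc^{-i}\|=\tilde O(1/\sqrt m)$. The key point is that the operator norm of a \emph{diagonal} matrix equals its $\infty$-norm, so $\|{\Sigma'}^{(l')}-{\Sigma'}^{(l'),-i}\|\le\beta_\sigma\|\tilde\alpha^{(l')}-\tilde\alpha^{(l'),-i}\|_\infty$, and one checks that $\|\tilde\alpha^{(l')}-\tilde\alpha^{(l'),-i}\|_\infty=\tilde O(1/\sqrt m)$ (while $\|\tilde\alpha^{(l')}-\tilde\alpha^{(l'),-i}\|=\tilde O(1)$) for every $l'>l$: the base step is exact, $\tilde\alpha^{(l+1)}-\tilde\alpha^{(l+1),-i}=\frac{\alpha^{(l)}_{0,i}}{\sqrt m}(W^{(l+1)}_0)_{:,i}$, bounded using $|\alpha^{(l)}_{0,i}|=\tilde O(1)$ (Lemma~\ref{lemma:everyactivation}) and $\|(W^{(l+1)}_0)_{:,i}\|_\infty=\tilde O(1)$ (a max of $m$ Gaussians); the induction over $l'$ uses $\|W^{(l')}_0\|=O(\sqrt m)$ (Lemma~\ref{lemma:gaussl2}) for the $2$-norm and a conditional max-of-Gaussians bound on $\big\|\frac{1}{\sqrt m}W^{(l')}_0\big(\alpha^{(l'-1)}_0-\alpha^{(l'-1),-i}\big)\big\|_\infty$ (valid since that perturbation vector is independent of $W^{(l')}_0$) for the $\infty$-norm. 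Telescoping the product in Eq.~(\ref{eq:derivatives_expr}), replacing ${\Sigma'}^{(l')}$ by ${\Sigma'}^{(l'),-i}$ one layer at a time (each replacement inserts a factor of size $\tilde O(1/\sqrt m)$ between factors of size $O(1)$, using $\frac1{\sqrt m}\|W^{(l')}_0\|=O(1)$, $\|{\Sigma'}^{(\cdot)}\|\le\mathsf L_\sigma$ and $\frac1{\sqrt m}\|\rvv\|=O(1)$), gives $\|\rvb^{(l+1)}_0-\rvb^{(l+1),-i}_0\|=\tilde O(1/\sqrt m)$, hence $\|\rvc-\rvc^{-i}\|\le\|{\Sigma'}^{(l+1)}-{\Sigma'}^{(l+1),-i}\|\,\|\rvb^{(l+1)}_0\|+\mathsf L_\sigma\|\rvb^{(l+1)}_0-\rvb^{(l+1),-i}_0\|=\tilde O(1/\sqrt m)$ using $\|\rvb^{(l+1)}_0\|=O(1)$ (Lemma~\ref{lemma:derivatives}). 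Combining, $|b^{(l)}_{0,i}|=\tilde O(1/\sqrt m)$ on the intersection of the $O(L)$ good events above, each of probability $1-e^{-\Omega(\ln^2 m)}$ or $1-e^{-\Omega(m)}$; a union bound over $i\in[m]$ gives $\|\rvb^{(l)}_0\|_\infty=\tilde O(1/\sqrt m)$ with probability $1-me^{-c_b^{(l)}\ln^2 m}$. The only genuinely delicate step is bounding $\|\rvc-\rvc^{-i}\|$: the Cauchy--Schwarz bound $\frac{1}{\sqrt m}\|(W^{(l+1)}_0)_{:,i}\|\cdot\|\rvc-\rvc^{-i}\|$ is only useful because $\rvc-\rvc^{-i}$ is $\tilde O(1/\sqrt m)$ in \emph{Euclidean} norm, which in turn relies on the observation that the perturbation of the pre-activations propagates through the network while staying $\tilde O(1/\sqrt m)$ in $\infty$-norm — equivalently, that ${\Sigma'}^{(l')}-{\Sigma'}^{(l'),-i}$ is small in \emph{operator} norm, not merely in Frobenius norm.
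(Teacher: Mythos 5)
Your plan is correct, and it reaches the lemma by a genuinely different route than the paper. The paper proves the bound by downward induction on $l$: writing $\rvb^{(l-1)}_{0,i}$ as $\frac{1}{\sqrt m}\sum_k W_{ki}\,\sigma'(\tilde\alpha_k)\,\rvb^{(l)}_{0,k}$, it uses the $\beta_\sigma$-smoothness of $\sigma$ to replace $\sigma'(\tilde\alpha_k)$ by its value with the $j=i$ contribution deleted, treats the resulting main term as a Gaussian with variance $\mathsf{L}_\sigma^2\|\rvb^{(l)}_0\|^2/m$ (bounded via Lemma~\ref{lemma:derivatives}), and controls the Taylor-correction term $\frac{1}{m}\beta_\sigma|\alpha_i|\sum_k W_{ki}^2\rvb^{(l)}_{0,k}$ by a $\chi^2$ tail bound together with Lemma~\ref{lemma:everyactivation} and, crucially, the inductive hypothesis $\|\rvb^{(l)}_0\|_\infty=\tilde{O}(1/\sqrt m)$ for the next layer. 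You instead fix $l$ and build a full leave-one-out network (zeroing $\alpha^{(l)}_i$ and propagating it forward), so that your surrogate $\rvc^{-i}$ is exactly independent of the column $(W^{(l+1)}_0)_{:,i}$; the main term is then honestly conditionally Gaussian, and the correction is handled by Cauchy--Schwarz plus a perturbation-propagation argument ($\infty$-norm control of the pre-activation change, hence operator-norm control of the diagonal $\Sigma'$ change, hence Euclidean control of $\rvc-\rvc^{-i}$) and a telescoping over the $\Sigma'$ replacements in the product formula. What each buys: the paper's step is shorter, but its Gaussianity claim for $\frac{1}{\sqrt m}\sum_k W_{ki}\rvb^{(l)}_{0,k}$ implicitly treats $\rvb^{(l)}_0$ (and the leave-one-out $\sigma'$ factors) as independent of the $i$-th column, even though the forward pass makes them dependent on it; your construction removes that dependence explicitly, and it also dispenses with the $\infty$-norm induction across layers, needing only the $2$-norm bounds of Lemma~\ref{lemma:derivatives} and the spectral bounds of Lemma~\ref{lemma:gaussl2}. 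The price is a longer propagation/telescoping analysis and a union bound over layers, coordinates, and $i\in[m]$, which still fits the claimed $1-me^{-c_b^{(l)}\ln^2(m)}$ form after adjusting the constant.
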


% \begin{lemma}\label{lemma:unifromderiv}
% If the initial parameter setting $\rmW_0$ of the multi-layer neural network $f(\rmW)$ satisfies Assumptions~\ref{assu:network1}, then, for any parameter setting $\rmW$ such that $\|\rmW-\rmW_0\| \le R$ and for each hidden layer $l \in [L-1]$, we have 
% \begin{equation}
% \|\rvb^{(l)}\|_{\infty} \sim \tilde{O}(\frac{1}{\sqrt{m}}),
% \end{equation} 
% \end{lemma}

Now we show besides at initialization, $\|\rvb\|_\infty$ is of order $\tilde{O}(\frac{1}{\sqrt{m}})$ in the ball $B(\rmW_0,R)$ with high probability. Technically, we bound the difference of the $\infty$-norm by the difference of $2$-norm.

First of all, we prove, by induction, the following claim: for all $l\in [L]$,
\begin{equation}
    \|\rvb^{(l)}-\rvb^{(l)}_0\| = \tilde{O}\left(\frac{1}{\sqrt{m}}\right).\label{eqapp:inductionb}
\end{equation}
In the base case, we consider $l=L$. We have 
\begin{eqnarray}
 \rvb^{(L)} &=& \frac{1}{\sqrt{m}}\rvv, \nonumber\\
 \rvb_0^{(L)} &=& \frac{1}{\sqrt{m}}\rvv_0. \nonumber
\end{eqnarray}
Hence, 
\begin{equation}
    \|\rvb^{(L)}-\rvb_0^{(L)}\|  = \frac{1}{\sqrt{m}}\|\rvv-\rvv_0\| \le \frac{1}{\sqrt{m}}\|\rmW-\rmW_0\| \le \frac{1}{\sqrt{m}}R.
\end{equation}
Now, suppose that $\|\rvb^{(l)} -  \rvb_0^{(l)}\|  = \tilde{O}(\frac{1}{\sqrt{m}})$. Then
\begin{align}
   \left\| \rvb^{(l-1)} - \rvb_0^{(l-1)}\right\|
   &= \frac{1}{\sqrt{m}}\left\|(W^{(l)})^T\Sigma^{\prime(l)}\rvb^{(l)} -(W_0^{(l)})^T\Sigma_0^{\prime(l)}\rvb_0^{(l)} + (W^{(l)}_0)^T\Sigma^{\prime(l)}\rvb_0^{(l)} \right.\nonumber\\
    % &=\frac{1}{\sqrt{m}} \left\|(W^{(l)})^T\Sigma^{\prime(l)}\rvb^{(l)} - (W_0^{(l)})^T\Sigma^{\prime(l)}\rvb^{(l)} + (W_0^{(l)})^T\Sigma^{\prime(l)}\rvb_0^{(l)}-(W_0^{(l)})^T\Sigma_0^{\prime(l)}\rvb_0^{(l)}\right \\.
    &\indent + \left.(W^{(l)}_0)^T\Sigma^{\prime(l)}\rvb^{(l)} - (W^{(l)}_0)^T\Sigma^{\prime(l)}\rvb_0^{(l)}  - (W^{(l)}_0)^T\Sigma^{\prime(l)}\rvb^{(l)}\right \| \nonumber\\
    &=\frac{1}{\sqrt{m}} \left\|\left((W^{(l)})^T - (W_0^{(l)})^T \right)\Sigma^{\prime(l)}\rvb^{(l)} + (W_0^{(l)})^T\left(\Sigma^{\prime(l)} - \Sigma_0^{\prime(l)}\right)\rvb_0^{(l)}\right. \nonumber\\
    &\indent +\left. (W^{(l)}_0)^T\Sigma^{\prime(l)}\left(\rvb^{(l)} -  \rvb_0^{(l)}\right)\right\|\nonumber \\
    &\leq \frac{1}{\sqrt{m}}\left\|W^{(l)} - W_0^{(l)} \right\|_2\|\Sigma^{\prime(l)}\|\|\rvb^{(l)}\| + \frac{1}{\sqrt{m}}\|W_0^{(l)}\|\left\|\left(\Sigma^{\prime(l)} - \Sigma_0^{\prime(l)}\right)\rvb_0^{(l)} \right\| \nonumber\\
    &\indent +\frac{1}{\sqrt{m}} \|W^{(l)}_0\|\|\Sigma^{\prime(l)}\|\left\|\rvb^{(l)} -  \rvb_0^{(l)}\right\|,\label{eqapp:bdiffinterm}
    % \\
    % &\leq \frac{1}{\sqrt{m}}R\mathsf{L}_\sigma^{L-l}(c_0+R/\sqrt{m})^{L-l}+ c_0 \frac{1}{\sqrt{m}}s_0\beta_{\sigma}\mathsf{L}_{\sigma}^{L-2}(c_0+R/\sqrt{m})^{L-1}C_\rvx\\
    % &\indent + c_0 L \left\|\rvb^{(l)}-\rvb_0^{(l)}\right\|_2 \\
    % &\leq \frac{C}{\sqrt{m}}+ c_0L\left\|\rvb^{(l)}-\rvb_0^{(l)}\right\|_2
\end{align}
where ${\Sigma'}^{(l)}$ is a diagonal matrix, with $({\Sigma'}^{(l)})_{ii} = \sigma'(\tilde{\alpha}_i^{(l)})$.

To bound the second additive term above, we need the following inequality:
\begin{eqnarray}
 & &\|\tilde{\alpha}^{(l)}(\mathbf{W}) - \tilde{\alpha}^{(l)}(\mathbf{W}_0) \|\nonumber\\
 &=& \left\|\frac{1}{\sqrt{m}}W^{(l)}{\alpha}^{(l-1)}(\mathbf{W}) - \frac{1}{\sqrt{m}}W^{(l)}_0{\alpha}^{(l-1)}(\mathbf{W}_0)\right\|\nonumber\\
 &\le&\frac{1}{\sqrt{m}}\|W^{(l)}_0\|\cdot \mathsf{L}_{\sigma}\cdot \|\tilde{\alpha}^{(l-1)}(\mathbf{W}) - \tilde{\alpha}^{(l-1)}(\mathbf{W}_0) \| +  \frac{1}{\sqrt{m}}\|W^{(l)}-W^{(l)}_0\|\|{\alpha}^{(l-1)}(\mathbf{W})\| \nonumber\\
 %&\le&  \mathsf{L}_{\sigma}^{l-1}(c_0+R/\sqrt{m})^{l-1} C_{\rvx}  \cdot \|W^{(l)}-W^{(l)}_0\|_2 + c_0\mathsf{L}_{\sigma}\|\tilde{\alpha}^{(l-1)}(\mathbf{W}) - \tilde{\alpha}^{(l-1)}(\mathbf{W}_0) \| .\nonumber
 &\le&  c_0\mathsf{L}_\sigma \|\tilde{\alpha}^{(l-1)}(\mathbf{W}) - \tilde{\alpha}^{(l-1)}(\mathbf{W}_0)+\frac{1}{\sqrt{m}}\|W^{(l)}-W^{(l)}_0\|\|{\alpha}^{(l-1)}(\mathbf{W})\| \nonumber\\
 &=& c_0\mathsf{L}_{\sigma}\|\tilde{\alpha}^{(l-1)}(\mathbf{W}) - \tilde{\alpha}^{(l-1)}(\mathbf{W}_0) \| + O(1),\nonumber
\end{eqnarray}
where the last equality is the result of Lemma~\ref{lemma:activations} that ${\|\alpha^{(l-1)}}\| = O(\sqrt{m})$.

Recursively applying the above equation, since $\|\tilde{\alpha}^{(1)}(\mathbf{W}) - \tilde{\alpha}^{(1)}(\mathbf{W}_0) \|\leq \frac{R}{\sqrt{d}}C_\rvx$,  we have
\begin{eqnarray}
 \|\tilde{\alpha}^{(l)}(\mathbf{W}) - \tilde{\alpha}^{(l)}(\mathbf{W}_0) \| %&\le& \sum_{l'=1}^{l} \mathsf{L}_{\sigma}^{l'-1}(c_0+R/\sqrt{m})^{l'-1} C_{\rvx}\|W^{(l')}-W^{(l')}_0\|_2\nonumber\\
% &\le&  \mathsf{L}_{\sigma}^{l-1}(c_0+R/\sqrt{m})^{l-1} C_{\rvx}\sum_{l'=1}^{l} \|W^{(l')}-W^{(l')}_0\|_2\nonumber\\
% &\le& \mathsf{L}_{\sigma}^{l-1}(c_0+R/\sqrt{m})^{l-1}C_{\rvx}\|\rmW-\rmW_0\|\nonumber\\
% &\le& \mathsf{L}_{\sigma}^{l-1}(c_0+R/\sqrt{m})^{l-1}C_{\rvx}R.
&=& c_0^{l-1} \mathsf{L}_\sigma^{l-1}\|\tilde{\alpha}^{(1)}(\mathbf{W}) - \tilde{\alpha}^{(1)}(\mathbf{W}_0) \| + O(1)  = O(1).
\label{eqapp:tildealphachange}
\end{eqnarray}
Also, note that $\Sigma'$ is a diagonal matrix, then,  we have
\begin{align}
 \left\|\left[\Sigma^{\prime(l)} - \Sigma_0^{\prime(l)}\right]\rvb_0^{(l)} \right\| &= \sqrt{\sum_{i=1}^m (\rvb^{(l)}_{0})_i^2\left(\sigma'(\tilde{\alpha}_i^{(l)}(\mathbf{W})) - \sigma'(\tilde{\alpha}_i^{(l)}(\mathbf{W}_0)) \right)^2} \nonumber\\
 &\leq \|\rvb_0^{(l)}\|_\infty \sqrt{\sum_{i=1}^m \left[\sigma'(\tilde{\alpha}_i^{(l)}(\mathbf{W})) - \sigma'(\tilde{\alpha}_i^{(l)}(\mathbf{W}_0)) \right]^2} \nonumber\\
 &\leq \|\rvb_0^{(l)}\|_\infty \cdot \beta_\sigma  \|\tilde{\alpha}^{(l)}(\mathbf{W}) - \tilde{\alpha}^{(l)}(\mathbf{W}_0) \|
 %&\leq \|\rvb_0^{(l)}\|_\infty \cdot \beta_\sigma \cdot R \mathsf{L}_{\sigma}^{l-1}(c_0+R/\sqrt{m})^{l-1} C_\rvx
 = \tilde{O}\left(\frac{1}{\sqrt{m}}\right),\label{eqapp:11}
\end{align}
where we used Lemma~\ref{lemma:derivatives} and Eq.(\ref{eqapp:tildealphachange}) in the last equality.\\
Now, insert Eq.(\ref{eqapp:11}) into Eq.(\ref{eqapp:bdiffinterm}), and apply Lemma~\ref{lemma:derivatives} and the induction hypothesis, then we have
\begin{eqnarray}
\left\| \rvb^{(l-1)} - \rvb_0^{(l-1)}\right\| &\le&  \frac{1}{\sqrt{m}}R \mathsf{L}_{\sigma}^{L-l+1}(c_0+R/\sqrt{m})^{L-l+1} + \frac{1}{\sqrt{m}}c_0\sqrt{m} \left\|\left[\Sigma^{\prime(l)} - \Sigma_0^{\prime(l)}\right]\rvb_0^{(l)} \right\|\nonumber\\
& & + c_0 \mathsf{L}_{\sigma} \left\|\rvb^{(l)} -  \rvb_0^{(l)}\right\|= \tilde{O}\left(\frac{1}{\sqrt{m}}\right).
\end{eqnarray}
Thus, Eq.(\ref{eqapp:inductionb}) holds for $l-1$, and the proof of the induction step is complete. Therefore, by the principle of induction, Eq.(\ref{eqapp:inductionb}) holds for all $l\in [L]$.\\
Now, let's consider $\|\rvb^{(l)}\|_{\infty}$. 
By Lemma~\ref{lemma:everyderivatives} and Lemma~\ref{lemma:derivatives}, with probability at least $1-me^{-c_b^{(l)}\ln^2(m)}$,
\begin{eqnarray}
 \|\rvb^{(l)}\|_{\infty} &\le& \|\rvb^{(l)}_0\|_{\infty} + \|\rvb^{(l)}-\rvb^{(l)}_0\|_{\infty}\nonumber\\
 &\le&\|\rvb^{(l)}_0\|_{\infty} + \|\rvb^{(l)}-\rvb^{(l)}_0\|= \tilde{O}\left(\frac{1}{\sqrt{m}}\right).
\end{eqnarray}
Using union bound, for all $l\in[L]$, we have with probability $1-m\sum_{l=1}^Le^{-c_b^{(l)}\ln^2(m)}$,
\begin{equation}
     \|\rvb^{(l)}\|_{\infty}= \| \frac{\partial f}{\partial \alpha^{(l)}}\|_\infty = \tilde{O}\left(\frac{1}{\sqrt{m}}\right).
\end{equation}
 \end{proof}

 \section{Generalization to other architectures}\label{secapp:cnn_resnet} 
 In this section, we apply Theorem~\ref{thm:hessian_infinity} to  both convolutional neural networks (CNN) and residual networks (ResNets), and show that they both have small Hessian spectral norms when the network width $m$ is sufficiently large and last layer is of linear form. 
 \subsection{Convolutional Neural Networks}\label{secapp:cnn}
 A convolutional neural network (CNN) is a network of the type in Eq.(\ref{eq:generalnn}), with each convolutional layer function $\phi_l$ defined as
 \begin{equation}\label{eq:cnnlayer}
     \alpha^{(l)} = \phi_l(\rmW^{(l)};\alpha^{(l-1)}) = \sigma\left(\frac{1}{\sqrt{m_l}}\rmW^{(l)} \ast \alpha^{(l-1)}\right), \ \forall l\in[L],
 \end{equation}
 where $\ast$ is the convolution operator (see the definition below), and the layer width $m_l = m$ for all $l = 2,3,\cdots,L$, and $m_1=d$ with $d$ as the number of channels of the input. 
 
 To simplify the notation, we consider a one-dimensional CNN, i.e., a ``image'' is an $1$-D array of ``pixels'', and one will find that the analysis in this section also applies to higher dimensional CNNs. We also drop the layer indices $l$, wherever there is no ambiguity. 
 
 We  denote the number of channels for each hidden layer as $m$, the number of pixels in the  ``image'' as $Q$ and the size of each filter as $K$. Furthermore, we use $i,j\in [m]$ as indices of the channels, $q\in [Q]$ as indices of pixels and $k\in [K]$ as indices within the filter.
 The input $\alpha\in \mathbb{R}^{m\times Q}$ is a  matrix, with $m$ rows as channels and $Q$ columns as pixels. The parameters $\rmW\in \mathbb{R}^{K\times m\times m}$ is a order $3$ tensor. The output of the layer function $\phi$ is of size $m\times Q$. In this $1$-D CNN case, the convolution operator is defined as
 \begin{equation}\label{eq:convolution}
     (\rmW \ast \alpha)_{i,q} = \sum_{k=1}^K \sum_{j=1}^m W_{k,i,j} \alpha_{j,q+k-\frac{K+1}{2}}.
 \end{equation}
 
 {\bf Reformulation of  convolutional layer.} Now, we reformulate the convolutional layer function in Eq.(\ref{eq:cnnlayer}) into a fully-connected-like function. Then, we can use the techniques developed in Section~\ref{secapp:fcn_quantities} to prove for the CNN. Specifically, for all $k\in [K]$, define matrices $W^{[k]}$ and $\alpha^{[k]}$ such that each entry $(W^{[k]})_{ij} = W_{k,i,j}$ and $(\alpha^{[k]})_{jq} = \alpha_{j,q+k-\frac{K+1}{2}}$. Then, the convolution operator in Eq.(\ref{eq:convolution}) can be rewritten as
 \begin{equation}
     (\rmW \ast \alpha) = \sum_{k=1}^K W^{[k]}\alpha^{[k]}.
 \end{equation}
 Here in the summation, it is matrix multiplication. Note that, while $W^{[k]}$ are independent from each other for different $k\in[K]$, the inputs $\alpha^{[k]}$ are not independent from each other; instead, they share pixels:  $(\alpha^{[k]})_{j,q} = (\alpha^{[k']})_{j,q+k-k'}$, i.e., each $\alpha^{[k]}$ is a pixel-shifted version of $\alpha$ (newly generated pixels after shift is filled with zeros). 
 
 Therefore, the convolutional layer function can also be written as (for $l>1$)
 \begin{equation}
     \phi\triangleq\phi(\rmW;\alpha) = \sigma \left(\sum_{k=1}^K\frac{1}{\sqrt{m}} W^{[k]}\alpha^{[k]}\right) \triangleq \sigma(\tilde{\alpha}).
 \end{equation}
 Here, we can see 
 we will use this expression of convolutional layer function for analysis in this section. 
 
 Before proceeding to the proof for CNN, we first point out a few useful facts, as summarized in the following lemmas.
 \begin{lemma}\label{lemma:frobenius}
  Given matrices $A,B$ and $C$ such that $A = BC$, we have $\|A\|_F \le \|B\| \|C\|_F$, where $\|B\|$ is the spectral norm of matrix $B$.
 \end{lemma}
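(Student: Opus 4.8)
The plan is to exploit the column decomposition of the Frobenius norm together with the defining property of the spectral norm, namely that $\|B\vv\| \le \|B\|\,\|\vv\|$ for every vector $\vv$. First I would write $C = [\rvc_1, \rvc_2, \dots, \rvc_n]$ in terms of its columns $\rvc_j$, so that the columns of $A = BC$ are exactly $B\rvc_j$. Recalling that $\|M\|_F^2 = \sum_j \|M\rve_j\|^2$ equals the sum of the squared Euclidean norms of the columns of $M$, I would then write
\begin{equation}
    \|A\|_F^2 = \sum_{j} \|B\rvc_j\|^2 \le \sum_j \|B\|^2 \|\rvc_j\|^2 = \|B\|^2 \sum_j \|\rvc_j\|^2 = \|B\|^2 \|C\|_F^2,
\end{equation}
where the inequality is the spectral-norm bound applied column-by-column. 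Taking square roots of both sides yields $\|A\|_F \le \|B\|\,\|C\|_F$, as claimed.

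There is essentially no hard step here: the result is a standard fact in matrix analysis, and the only thing to be careful about is bookkeeping — making sure the column decomposition of the product $BC$ is stated correctly and that the index ranges match the dimensions of $B$ and $C$. If one prefers to avoid the column picture, an alternative route is to use $\|A\|_F^2 = \tr(A^\T A) = \tr(C^\T B^\T B C)$ and bound $B^\T B \preceq \|B\|^2 I$ in the positive-semidefinite order, which gives $\tr(C^\T B^\T B C) \le \|B\|^2 \tr(C^\T C) = \|B\|^2\|C\|_F^2$; I would include whichever of the two is shorter. Either way the proof is a couple of lines, so the "main obstacle" is really just presenting it cleanly rather than any genuine difficulty.
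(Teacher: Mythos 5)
Your proof is correct and uses essentially the same argument as the paper: decompose $C$ into columns, note that the columns of $A=BC$ are $B\rvc_j$, apply the spectral-norm bound column-by-column, and sum. The alternative trace argument you mention is also fine, but the paper's proof is exactly your first (column) route.
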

 See the proof in Appendix~\ref{secapp:frob_pf}. The following two lemmas provide bounds on the spectral norm of weights and Frobenius norm of hidden layers. These two lemmas (and the proofs) are analogous to Lemma~\ref{lemma:wl2} and~\ref{lemma:activations}, and we omit the proof.
 \begin{lemma}\label{lemma:cnn_wl2}
  Suppose the parameters are initialized as $(W^{[k]}_{0})_{i,j}\sim \mathcal{N}(0,1)$, for all $k\in [K],i,j\in[m]$. Then, with high probability of the random initialization, we have for any $\rmW \in B(\rmW_0,R)$ the following holds
  \begin{equation}
      \|W^{[k]}\| = O(\sqrt{m}), \ \forall k \in [K].
  \end{equation}
 \end{lemma}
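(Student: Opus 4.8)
The plan is to mirror the proof of Lemma~\ref{lemma:gaussl2} (spectral norm of a Gaussian weight matrix) together with the perturbation argument of Lemma~\ref{lemma:wl2}, adapting both to the reindexed matrices $W^{[k]}$. First I would observe that at initialization each $W^{[k]}_0$ is an $m\times m$ matrix with i.i.d.\ standard Gaussian entries: the reindexing $(W^{[k]})_{ij}=W_{k,i,j}$ is merely a relabeling of independent coordinates of the order-$3$ tensor $\rmW$, so independence and the $\mathcal{N}(0,1)$ marginals are preserved. Hence, exactly as in Lemma~\ref{lemma:gaussl2}, the standard concentration bound for the largest singular value of a square Gaussian matrix gives $\|W^{[k]}_0\|\le 3\sqrt{m}$ with probability at least $1-2\exp(-m/2)$, for each fixed $k\in[K]$ (and each layer).

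Next I would take a union bound over the $K$ filter slices and the $L$ layers. Since the filter size $K$ is a fixed spatial-dimension quantity, independent of the width $m$, and $L$ is likewise fixed, the union bound yields probability at least $1-2KL\exp(-m/2)$ that $\|W^{[k](l)}_0\|\le 3\sqrt{m}$ holds simultaneously for all $k\in[K]$ and $l\in[L]$. Setting $c_0=3$, this is precisely the analog of Assumption~\ref{assu:network1} in the CNN setting.

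Then I would handle the perturbation to an arbitrary $\rmW\in B(\rmW_0,R)$. By definition of the Euclidean norm on the parameter tuple, $\|\rmW-\rmW_0\|^2=\sum_{l}\sum_{k}\|W^{[k](l)}-W^{[k](l)}_0\|_F^2\le R^2$, so in particular $\|W^{[k]}-W^{[k]}_0\|_F\le R$ for every $k$. Using $\|\cdot\|\le\|\cdot\|_F$ and the triangle inequality for the spectral norm,
\[
\|W^{[k]}\|\le \|W^{[k]}_0\|+\|W^{[k]}-W^{[k]}_0\|\le 3\sqrt{m}+R=O(\sqrt{m}),
\]
which is the claimed bound.

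The argument is essentially routine, which is why the paper omits it; the only points requiring a little care are (i) verifying that the reindexing of $\rmW$ into the slices $W^{[k]}$ preserves i.i.d.\ Gaussianity, so that the singular-value concentration used in Lemma~\ref{lemma:gaussl2} applies verbatim, and (ii) noting that $K$ and $L$ are width-independent, so the union bound does not degrade the high-probability guarantee. I do not anticipate any substantive obstacle beyond correctly bookkeeping these indices.
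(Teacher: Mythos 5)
Your proposal is correct and matches the argument the paper intends: the paper omits this proof precisely because it is the same combination of the Gaussian spectral-norm bound of Lemma~\ref{lemma:gaussl2} (applied slice-wise to the $m\times m$ matrices $W^{[k]}_0$, with a union bound over the finitely many $k$ and layers) and the triangle-inequality perturbation step of Lemma~\ref{lemma:wl2} using $\|W^{[k]}-W^{[k]}_0\|\le\|W^{[k]}-W^{[k]}_0\|_F\le R$. No gaps.
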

 \begin{lemma}\label{lemma:cnn_activation}
  Suppose the parameters are initialized as $(W^{[k]}_{0})_{i,j}\sim \mathcal{N}(0,1)$, for all $k\in [K],i,j\in[m]$ and for all layers. Then, with high probability of the random initialization, we have for any $\rmW \in B(\rmW_0,R)$ the following holds at all hidden layers
  \begin{equation}
      \|\alpha\|_F = O(\sqrt{m}).
  \end{equation}
  And at the input layer,
    \begin{equation}
      \|\alpha\|_F = O(1).
  \end{equation}
 \end{lemma}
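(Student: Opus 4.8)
The plan is to mirror the proof of Lemma~\ref{lemma:activations} for fully connected networks, with two changes: activation vectors are replaced by activation matrices $\alpha^{(l)}\in\mathbb{R}^{m\times Q}$ measured in Frobenius norm, and the layer map is taken in the reformulated shape $\tilde\alpha^{(l)}=\sum_{k=1}^{K}\frac{1}{\sqrt{m}}W^{[k]}(\alpha^{(l-1)})^{[k]}$, $\alpha^{(l)}=\sigma(\tilde\alpha^{(l)})$, introduced above. The input-layer bound (the second claim) is immediate: $\alpha^{(0)}=\rvx\in\mathbb{R}^{d\times Q}$ has every entry bounded by $C_{\rvx}$ and $d,Q$ are fixed (width-independent), so $\|\alpha^{(0)}\|_F\le\sqrt{dQ}\,C_{\rvx}=O(1)$. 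For the hidden layers I would argue by induction on $l$.

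The one elementary fact needed is that for any activation matrix $\alpha$ and any filter tap $k$, the shifted matrix $\alpha^{[k]}$ is obtained from $\alpha$ by a column shift with zero-padding; hence, row by row, the squared entries of $\alpha^{[k]}$ form a sub-multiset of those of $\alpha$, so $\|\alpha^{[k]}\|_F\le\|\alpha\|_F$. Combining this with Lemma~\ref{lemma:frobenius} ($\|BC\|_F\le\|B\|\,\|C\|_F$) and the triangle inequality over the $K$ taps,
\begin{equation}
\|\tilde\alpha^{(l)}\|_F\le\sum_{k=1}^{K}\frac{1}{\sqrt{m}}\|W^{[k]}\|\,\|(\alpha^{(l-1)})^{[k]}\|_F\le\Big(\sum_{k=1}^{K}\tfrac{1}{\sqrt{m}}\|W^{[k]}\|\Big)\,\|\alpha^{(l-1)}\|_F .
\end{equation}
By Lemma~\ref{lemma:cnn_wl2}, with high probability over the initialization $\|W^{[k]}\|=O(\sqrt m)$ for every tap $k$ and every $\rmW\in B(\rmW_0,R)$, so $\tfrac{1}{\sqrt m}\|W^{[k]}\|\le c$ for some width-independent constant $c$, and since $K$ is a fixed filter size this yields $\|\tilde\alpha^{(l)}\|_F\le Kc\,\|\alpha^{(l-1)}\|_F$. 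Applying the $\mathsf{L}_\sigma$-Lipschitz bound $|\sigma(z)|\le|\sigma(0)|+\mathsf{L}_\sigma|z|$ coordinatewise to the $mQ$ entries gives $\|\alpha^{(l)}\|_F=\|\sigma(\tilde\alpha^{(l)})\|_F\le\mathsf{L}_\sigma\|\tilde\alpha^{(l)}\|_F+\sqrt{mQ}\,|\sigma(0)|\le\mathsf{L}_\sigma Kc\,\|\alpha^{(l-1)}\|_F+\sqrt{mQ}\,|\sigma(0)|$.

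Finally I would unroll this recursion starting from $\|\alpha^{(0)}\|_F=O(1)$. Setting $A:=\mathsf{L}_\sigma Kc$, induction gives $\|\alpha^{(l)}\|_F\le A^{l}\|\alpha^{(0)}\|_F+\big(\sum_{j=0}^{l-1}A^{j}\big)\sqrt{mQ}\,|\sigma(0)|$; since $l\le L$ is fixed, the first term is $O(1)$ and the second is $O(\sqrt m)$, so $\|\alpha^{(l)}\|_F=O(\sqrt m)$ for all $l\in[L]$. The high-probability guarantee is inherited from Lemma~\ref{lemma:cnn_wl2} after a union bound over the (fixed number of) filters and layers. The main obstacle — and it is a mild one — is the bookkeeping around the sparsity of the convolution: one must verify that passing to $\sum_k W^{[k]}\alpha^{[k]}$ and to the pixel-shift map $\alpha\mapsto\alpha^{[k]}$ introduces only the fixed factors $K$ and $Q$ and never a factor growing with $m$, and that the first hidden layer (whose fan-in is the fixed number of input channels $d$, so that $\tfrac{1}{\sqrt d}\|W^{[k],(1)}\|=O(\sqrt m)$) is handled as a separate base step that boosts the bound from $O(1)$ to $O(\sqrt m)$ before the $1/\sqrt m$-normalized recursion takes over. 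Everything else is the same contraction-plus-constant argument as in Lemma~\ref{lemma:activations}.
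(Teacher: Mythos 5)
Your proposal is correct and follows essentially the approach the paper intends: the paper omits this proof, stating it is analogous to Lemma~\ref{lemma:activations}, and your argument is exactly that layer-wise induction, adapted via Lemma~\ref{lemma:frobenius}, the bound $\|\alpha^{[k]}\|_F\le\|\alpha\|_F$ for the shifted matrices, and the spectral-norm bound of Lemma~\ref{lemma:cnn_wl2}. Your explicit handling of the first layer (fan-in $d$, normalization $1/\sqrt{d}$) as the step that boosts $O(1)$ to $O(\sqrt{m})$ also mirrors the fully connected case, so no further changes are needed.
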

 We note that the proof for CNNs is basically analogous to that for fully connected neural networks (FCNs). Here, we refer readers to follow the proof idea for FCNs and  only discuss the main differences below.
 In the following, we focus on analyzing the layers for $l>1$. For the case of $l=1$, we omit the proof, and refer the readers to the discussion in Section~\ref{secapp:fcn_quantities}, which also applies here.
 \paragraph{Matrix spectral norm $\left\|{\partial \phi}/{\partial \rvw}\right\| = O(1)$ and Lipschitz continuity of $\phi$ w.r.t $\alpha$.}
As seen in Section~\ref{secapp:fcn_quantities}, it suffices to prove the boundedness of the operator norms: $\|{\partial \phi}/{\partial \rvw}\|$ and $\|{\partial \phi}/{\partial \alpha}\|$.
 Note that, in the convolutional layer function, the vector of parameters $\rvw$ is reshaped to $\rmW\in\mathbb{R}^{K\times m\times m}$, and the input is reshaped to $\alpha \in \mathbb{R}^{m\times Q}$. Then, the Euclidean norm of the input becomes Frobenius norm $\|\alpha\|_F$, and the Euclidean norm  $\|\rvw\| = (\sum_{k=1}^K \|W^{[k]}\|_F^2)^{1/2}$. 
 
 Then, the spectral norm square
 \begin{eqnarray*}
 \|{\partial \phi}/{\partial \alpha}\|^2 &=& \frac{1}{{m}}\sup_{\|V\|_F=1} \sum_{i=1}^m\sum_{q=1}^Q (\sigma'(\tilde{\alpha}_{i,q}))^2 \Big(\sum_{k=1}^KW^{[k]}V^{[k]}\Big)_{i,q}^2\\
 &\le& \frac{1}{{m}}\mathsf{L}_{\sigma}^2 \sup_{\|V\|_F=1} \Big\|\sum_{k=1}^KW^{[k]}V^{[k]} \Big\|_F^2\\
 &\le& \frac{1}{{m}}\mathsf{L}_{\sigma}^2 \sup_{\|V\|_F=1} \left(\sum_{k=1}^K\|W^{[k]}\| \|V^{[k]} \|_F\right)^2\\
 &\le& \frac{1}{{m}}\mathsf{L}_{\sigma}^2 \left(\sum_{k=1}^K \|W^{[k]}\|\right)^2=O(1).
 \end{eqnarray*}
 Here, in the second inequality, we used Lemma~\ref{lemma:frobenius}, and in the 
 last equality, we used Lemma~\ref{lemma:cnn_wl2}.
 Similarly, using Lemma~\ref{lemma:frobenius} and \ref{lemma:cnn_activation}, we also have
 \begin{eqnarray*}
 \|{\partial \phi}/{\partial \rvw}\|^2 &=& \frac{1}{m} \sup \left\{\sum_{i=1}^m\sum_{q=1}^Q (\sigma'(\tilde{\alpha}_{i,q}))^2 \Big(\sum_{k=1}^KV^{[k]}\alpha^{[k]}\Big)_{i,q}^2: \sum_{k=1}^K\|V^{[k]}\|_F^2 =1\right\}\\
 &\le& \frac{1}{{m}}\mathsf{L}_{\sigma}^2 \sup \left\{ \left\|\sum_{k=1}^KV^{[k]}\alpha^{[k]}\right\|_F^2: \sum_{k=1}^K\|V^{[k]}\|_F^2 =1\right\}\\
 &\le&\frac{1}{{m}}\mathsf{L}_{\sigma}^2 \sup \left\{ \left(\sum_{k=1}^K\|V^{[k]}\|\|\alpha^{[k]}\|_F\right)^2: \sum_{k=1}^K\|V^{[k]}\|_F^2 =1\right\}\\
 &\le& \frac{1}{{m}}\mathsf{L}_{\sigma}^2\left(\sum_{k=1}^K \|\alpha^{[k]}\|_F\right)^2=O(1).
 \end{eqnarray*}

 \paragraph{$(2,2,1)$-norms of order $3$ tensors are $O(1)$.}  
 Recall that the vector of parameters $\rvw$ is reshaped to $\rmW\in\mathbb{R}^{K\times m\times m}$, and the input is reshaped to $\alpha \in \mathbb{R}^{m\times Q}$. Then, by Lemma~\ref{lemma:frobenius} and \ref{lemma:cnn_wl2}, we have
 \begin{align*}
  &\left\|\frac{\partial^2 \phi}{\partial \alpha^2}\right\|_{2,2,1} \\
  &=\sup \left\{ \sum_{i=1}^m\sum_{q=1}^Q \frac{1}{m}\left| \sigma''(\tilde{\alpha}_{i,q}) \left(\sum_{k=1}^K W^{[k]}V_1^{[k]}\right)_{i,q}\left(\sum_{k=1}^K W^{[k]}V_2^{[k]}\right)_{i,q}\right|: \|V_1\|_F=\|V_2\|_F=1\right\}\\
  &\le \frac{\beta_{\sigma}}{2m}\sup\left\{ \left\|\sum_{k=1}^K W^{[k]}V_1^{[k]}\right\|^2_F+\left\|\sum_{k=1}^K W^{[k]}V_2^{[k]}\right\|^2_F: \|V_1\|_F=\|V_2\|_F=1\right\}\\
  &\le \frac{\beta_{\sigma}}{2m} \cdot 2 \left(\sum_{k=1}^K \|W^{[k]}\|\right)^2\\
  &= O(1).
 \end{align*}
 Similarly, by using Lemma~\ref{lemma:frobenius}, \ref{lemma:cnn_wl2} and \ref{lemma:cnn_activation}, we also have
  \begin{align*}
  &\left\|\frac{\partial^2 \phi}{\partial \alpha \partial \rvw}\right\|_{2,2,1} \\
  &=\sup \left\{ \sum_{i=1}^m\sum_{q=1}^Q \frac{1}{m}\left| \sigma''(\tilde{\alpha}_{i,q}) \left(\sum_{k=1}^K V_1^{[k]}\alpha^{[k]}\right)_{i,q}\left(\sum_{k=1}^K W^{[k]}V_2^{[k]}\right)_{i,q}\right|: \sum_{k=1}^K\|V_1^{[k]}\|_F^2= \|V_2\|_F^2 =1\right\}\\
  &\le \frac{\beta_{\sigma}}{2m}\sup\left\{ \left\|\sum_{k=1}^K V_1^{[k]}\alpha^{[k]}\right\|^2_F+\left\|\sum_{k=1}^K W^{[k]}V_2^{[k]}\right\|^2_F: \sum_{k=1}^K\|V_1^{[k]}\|_F^2= \|V_2\|_F^2 =1\right\}\\
  &\le \frac{\beta_{\sigma}}{2m} \cdot  \left(\Big(\sum_{k=1}^K \|\alpha^{[k]}\|_F\Big)^2 + \Big(\sum_{k=1}^K \|W^{[k]}\|\Big)^2 \right)\\
  &= O(1),
 \end{align*}
 and
  \begin{align*}
  &\left\|\frac{\partial^2 \phi}{\partial \rvw^2}\right\|_{2,2,1} \\
  &=\sup \left\{ \sum_{i=1}^m\sum_{q=1}^Q \frac{1}{m}\left| \sigma''(\tilde{\alpha}_{i,q}) \left(\sum_{k=1}^K V_1^{[k]}\alpha^{[k]}\right)_{i,q}\left(\sum_{k=1}^K V_2^{[k]}\alpha^{[k]}\right)_{i,q}\right|: \sum_{k=1}^K\|V_1^{[k]}\|_F^2= \sum_{k=1}^K\|V_2^{[k]}\|_F^2 =1\right\}\\
  &\le \frac{\beta_{\sigma}}{2m}\sup\left\{ \left\|\sum_{k=1}^K V_1^{[k]}\alpha^{[k]}\right\|^2_F+\left\|\sum_{k=1}^K V_2^{[k]}\alpha^{[k]}\right\|^2_F: \sum_{k=1}^K\|V_1^{[k]}\|_F^2= \sum_{k=1}^K\|V_2^{[k]}\|_F^2 =1\right\}\\
  &\le \frac{\beta_{\sigma}}{2m} \cdot  2\Big(\sum_{k=1}^K \|\alpha^{[k]}\|_F\Big)^2\\
  &=O(1).
 \end{align*}
 
 \paragraph{Vector $\infty$-norm is $\tilde{O}(1/\sqrt{m})$.} The proof idea is similar to the case of fully connected case, as in Section~\ref{secapp:infinity-norm-fcn}, i.e., proving by induction. The base case of the induction is the same as fully connected case, and we omit it here. The inductive hypothesis for CNN is: $\max_{i\in[m],q\in [Q]}(\partial f/\partial \alpha^{(l+1)})_{i,q} = \tilde{O}(1/\sqrt{m})$. 
 
 Now, for $l$-th layer, we have
 \begin{align*}
 (\partial f/\partial \alpha^{(l)})_{i,q} &= \sum_{j=1}^m\sum_{q'=1}^Q\sum_{k=1}^K (\partial f/\partial \alpha^{(l+1)})_{j,q'}\sigma'(\tilde{\alpha}^{(l+1)}_{j,q'})\frac{1}{\sqrt{m}}W^{[k]}_{ji}\mathbb{I}_{q=q'-k+\frac{K+1}{2}}\\
 &=\sum_{k=1}^K\sum_{j=1}^m(\partial f/\partial \alpha^{(l+1)})_{j,q+k-\frac{K+1}{2}}\sigma'\Big(\tilde{\alpha}^{(l+1)}_{j,q+k-\frac{K+1}{2}}\Big)\frac{1}{\sqrt{m}}W^{[k]}_{ji}\\
 &\triangleq \sum_{k=1}^K (\partial f/\partial \alpha^{(l)})_{i,q}^{[k]}.
 \end{align*}
 By the same argument as in Section~\ref{secapp:infinity-norm-fcn}, we have: $\max_{i\in[m],q\in [Q]}(\partial f/\partial \alpha^{(l)})_{i,q}^{[k]}= \tilde{O}(1/\sqrt{m})$, for each $k\in[K]$, with high probability of the random initialization. Since $K$ is finite, then we have $\max_{i\in[m],q\in [Q]}(\partial f/\partial \alpha^{(l)})_{i,q} = \tilde{O}(1/\sqrt{m})$ with high probability of the random initialization.

\subsection{Residual Networks (ResNet)}\label{secapp:resnet}
In this subsection we prove that the Hessian spectral norm for ResNet also scales as $\tilde{O}(1/\sqrt{m})$, with $m$ being the width of the network.
We define the ResNet $f$ as follows:
\begin{align}\label{eq:resnet}
  &\alpha^{(1)} = \sigma(\frac{1}{\sqrt{d}}W^{(1)}\rvx), \nonumber\\
 &\alpha^{(l)}= \sigma(\tilde{\alpha}_{res}^{(l)})+\alpha^{(l-1)},  \ \tilde{\alpha}_{res}^{(l)} = \frac{1}{\sqrt{m}}W^{(l)}\alpha^{(l-1)},\ \forall 2 \leq l\leq L, \nonumber \\
 &f = \frac{1}{\sqrt{m}}\rvv^T \alpha^{(L)}.
\end{align}
The parameters $\rmW := \{W^{(1)},W^{(2)},\cdots,W^{(L)},W^{(L+1)}:=\rvv\}$ are initialized following the random Gaussian initialization strategy, i.e., $(W^{(l)}_0)_{ij}\sim \mathcal{N}(0,1), \forall l\in [L]$, and $v_{0,i} \sim \mathcal{N}(0,1)$, $i,j\in [m]$.
\begin{remark}
This definition of ResNet differs from the standard ResNet architecture in~\cite{he2016deep}
% [He et al.]\footnote{Kaiming He, Xiangyu Zhang, Shaoqing Ren, and Jian Sun. “Deep residual learning for image recognition”. In: {\it Proceedings of the IEEE conference on computer vision and pattern recognition.} 2016, pp. 770-778.}  
that the skip connections are at every layer, instead of every two layers. One will find that the same analysis can be easily generalized to cases where skip connections are at every two or more layer.  The same definition, up to a scaling factor, was also theoretically studied in \cite{du2018gradientdeep}.
\end{remark}

We see that the ResNet is the same as a fully connected neural network, Eq.~(\ref{eq:fcnet}), except that the activations $\alpha^{(l)}$ has an extra additive term $\alpha^{(l-1)}$ from the previous layer, interpreted as skip connection. Because of this similarity, the proof for ResNet is almost identical to that for fully connected networks.
In the following, we sketch the proof for ResNet. Specifically, we focus on the arguments that are new to ResNet, and omit those identical to the fully connected case.

Parallel to Lemma~\ref{lemma:wl2} and \ref{lemma:activations} for fully connected case, we have the following lemmas for the ResNet.
\begin{lemma}\label{lemma:res_wl2}
  Suppose the parameters are initialized as $(W^{(l)}_{0})_{i,j}\sim \mathcal{N}(0,1)$, for all $l\in [L]$, and $v_{0,i} \sim \mathcal{N}(0,1)$, $i,j\in[m]$. Then, with high probability of the random initialization, we have for any $\rmW \in B(\rmW_0,R)$ the following holds
  \begin{equation}
      \|W^{(l)}\| = O(\sqrt{m}), \ \forall l \in [L+1].
  \end{equation}
 \end{lemma}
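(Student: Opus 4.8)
The plan is to mirror the argument used for the fully connected case in Lemma~\ref{lemma:wl2}, observing that the statement concerns only the weight matrices $W^{(l)}$ themselves and is therefore insensitive to the presence of skip connections. First I would invoke Lemma~\ref{lemma:gaussl2}: since each entry $(W_0^{(l)})_{ij}$ is a standard Gaussian and $m > d$, for every fixed layer $l \in [L+1]$ we have $\|W_0^{(l)}\| \le 3\sqrt{m}$ with probability at least $1 - 2\exp(-m/2)$. A union bound over the $L+1$ layers then gives $\|W_0^{(l)}\| \le 3\sqrt{m}$ simultaneously for all $l \in [L+1]$ with probability at least $1 - 2(L+1)\exp(-m/2)$, which is high probability for large $m$. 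In particular, Assumption~\ref{assu:network1} holds with $c_0 = 3$.

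Next, conditioning on this event, I would propagate the bound to the whole ball $B(\rmW_0, R)$. For any $\rmW$ with $\|\rmW - \rmW_0\| \le R$, the definition $\|\rmW - \rmW_0\| = (\sum_{l} \|W^{(l)} - W_0^{(l)}\|_F^2)^{1/2}$ gives $\|W^{(l)} - W_0^{(l)}\|_F \le R$ for each $l$, and since the spectral norm is dominated by the Frobenius norm, $\|W^{(l)} - W_0^{(l)}\| \le R$. The triangle inequality then yields
\[
\|W^{(l)}\| \le \|W_0^{(l)}\| + \|W^{(l)} - W_0^{(l)}\| \le 3\sqrt{m} + R = O(\sqrt{m}), \quad \forall l \in [L+1],
\]
which is the claimed bound.

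There is no real obstacle here: the argument is purely Gaussian matrix concentration plus a Lipschitz-type perturbation estimate, and the ResNet's additive skip connections never enter, since they only affect how the activations $\alpha^{(l)}$ are assembled from the $W^{(l)}$, not the norms of the $W^{(l)}$ individually. The only point worth a moment's care is bookkeeping the union bound over layers so that the failure probability stays exponentially small in $m$, which is immediate since $L$ is fixed. This lemma then feeds into the ResNet analogues of Lemmas~\ref{lemma:activations}--\ref{lemma:everyderivatives}, where the skip connections do require genuinely new (though minor) arguments — chiefly tracking the extra identity term $I$ in $\partial \alpha^{(l)}/\partial \alpha^{(l-1)}$ — as sketched in Section~\ref{secapp:resnet}.
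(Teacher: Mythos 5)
Your proposal is correct and matches the paper's own treatment: the paper simply notes that the ResNet lemma is proved exactly as Lemma~\ref{lemma:wl2}, i.e., by combining the Gaussian spectral-norm bound of Lemma~\ref{lemma:gaussl2} (so Assumption~\ref{assu:network1} holds with $c_0=3$) with the triangle inequality and the fact that the spectral norm is bounded by the Frobenius norm inside $B(\rmW_0,R)$. Your additional remarks — the union bound over the $L+1$ layers and the observation that skip connections are irrelevant to this statement — are just the bookkeeping the paper leaves implicit.
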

 \begin{lemma}\label{lemma:res_activation}
  Suppose the parameters are initialized as $(W^{(l)}_{0})_{i,j}\sim \mathcal{N}(0,1)$, for all $l\in [L],$ and $v_{0,i} \sim \mathcal{N}(0,1)$, $i,j\in[m]$. Then, with high probability of the random initialization, we have for any $\rmW \in B(\rmW_0,R)$ the following holds at all hidden layers
  \begin{equation}
      \|\alpha^{(l)}\| = O(\sqrt{m}).
  \end{equation}
  Particularly, for the input layer
    \begin{equation}
      \|\alpha^{(0)}\| =\|\rvx\| =  O(1).
  \end{equation}
 \end{lemma}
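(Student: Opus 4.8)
The plan is to mirror the proof of Lemma~\ref{lemma:activations} for the fully connected case, the only new ingredient being the extra additive skip-connection term $\alpha^{(l-1)}$ present in each hidden layer of Eq.~(\ref{eq:resnet}). First I would condition on the high-probability event furnished by Lemma~\ref{lemma:res_wl2} (whose proof is itself analogous to Lemmas~\ref{lemma:gaussl2} and~\ref{lemma:wl2}), namely that $\|W^{(l)}\| \le c_0\sqrt{m} + R$ for every $l \in [L+1]$ and every $\rmW \in B(\rmW_0,R)$; by a union bound over the $L+1$ layers this holds with probability at least $1 - 2(L+1)\exp(-m/2)$, and on this event the remainder of the argument is deterministic.

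For the input layer the bound is immediate from the componentwise assumption $|x_i|\le C_\rvx$: $\|\alpha^{(0)}\| = \|\rvx\| \le \sqrt{d}\,C_\rvx = O(1)$. For the hidden layers I would argue by induction on $l$. In the base case $\alpha^{(1)} = \sigma\big(\tfrac{1}{\sqrt{d}}W^{(1)}\rvx\big)$, so by $\mathsf{L}_\sigma$-Lipschitz continuity of $\sigma$, $\|\alpha^{(1)}\| \le \mathsf{L}_\sigma\,\tfrac{1}{\sqrt{d}}\|W^{(1)}\|\,\|\rvx\| + \sqrt{m}\,|\sigma(0)| \le \mathsf{L}_\sigma(c_0\sqrt{m}+R)C_\rvx + \sqrt{m}\,|\sigma(0)| = O(\sqrt{m})$. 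For the inductive step, assume $\|\alpha^{(l-1)}\| = O(\sqrt{m})$. From $\alpha^{(l)} = \sigma(\tilde{\alpha}_{res}^{(l)}) + \alpha^{(l-1)}$ with $\tilde{\alpha}_{res}^{(l)} = \tfrac{1}{\sqrt{m}}W^{(l)}\alpha^{(l-1)}$, the triangle inequality and Lipschitz continuity give
\begin{equation}
  \|\alpha^{(l)}\| \le \mathsf{L}_\sigma\,\tfrac{1}{\sqrt{m}}\|W^{(l)}\|\,\|\alpha^{(l-1)}\| + \sqrt{m}\,|\sigma(0)| + \|\alpha^{(l-1)}\| \le \big(1 + \mathsf{L}_\sigma(c_0 + R/\sqrt{m})\big)\|\alpha^{(l-1)}\| + \sqrt{m}\,|\sigma(0)|.
\end{equation}
Unrolling this linear recursion over the fixed number of layers yields $\|\alpha^{(l)}\| \le \kappa^{\,l-1}\|\alpha^{(1)}\| + \sqrt{m}\,|\sigma(0)|\sum_{i=0}^{l-2}\kappa^{\,i}$ with $\kappa = 1+\mathsf{L}_\sigma(c_0+R/\sqrt{m})$, which is $O(\sqrt{m})$ since $L$ (hence every exponent appearing) is independent of $m$.

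Since nothing here is delicate, there is no real hard part; the only point requiring care is bookkeeping the skip-connection contribution, which turns the clean product bound of the fully connected case into a geometric-sum bound — still $O(\sqrt{m})$ precisely because the depth $L$ is fixed. One should also retain the $\sqrt{m}\,|\sigma(0)|$ term (absent when $\sigma(0)=0$, e.g.\ $\tanh$), which is itself $O(\sqrt{m})$ and therefore harmless, and state the final success probability via the union bound over the $L+1$ spectral-norm events from Lemma~\ref{lemma:res_wl2}. The analogous argument for $\|W^{(l)}\|$ itself (Lemma~\ref{lemma:res_wl2}) follows exactly as in the fully connected case, since the skip connection does not enter the weight matrices.
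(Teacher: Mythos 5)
Your proposal is correct and follows exactly the route the paper intends: the paper omits this proof, stating it is "almost identical" to Lemmas~\ref{lemma:wl2} and~\ref{lemma:activations}, and your induction with the recursion constant $\kappa = 1+\mathsf{L}_\sigma(c_0+R/\sqrt{m})$ is precisely the natural adaptation of Lemma~\ref{lemma:activations} to the skip connection. Your bookkeeping of the $\sqrt{m}\,|\sigma(0)|$ term is in fact slightly more careful than the paper's own treatment in Lemma~\ref{lemma:activations} (which writes this contribution as $\sigma(0)$), and it does not affect the $O(\sqrt{m})$ conclusion.
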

 The proofs of the above two lemmas are almost identical to those of Lemma~\ref{lemma:wl2} and \ref{lemma:activations}. We omit the proofs here, and refer interested readers to proofs of Lemma~\ref{lemma:wl2} and \ref{lemma:activations}.

\paragraph{Matrix spectral norm $\left\|{\partial \alpha^{(l)}}/{\partial \rvw^{(l)}}\right\| = O(1)$ and Lipschitz continuity of $\alpha^{(l)}$ w.r.t $\alpha^{(l-1)}$.}

{\bf When} $2 \leq l \leq L$, from the definition of ResNet, Eq.(\ref{eq:resnet}), a ResNet layer $\alpha^{(l)}$ is defined by:
\begin{align}\label{eq:resphi}
   \alpha^{(l)} = \phi_l(W^{(l)};\alpha^{(l-1)}) = \sigma\left(\frac{1}{\sqrt{m}}W^{(l)}\alpha^{(l-1)}\right)+ \alpha^{(l-1)}.
\end{align}
Therefore, we have
\begin{align*}
    \| \partial \alpha^{(l)}/\partial \alpha^{(l-1)}\| &= \sup_{\|\rvv\|=1}\|( \frac{1}{\sqrt{m}}{\Sigma'}^{(l)}W^{(l)}+I)\rvv\| \\
    &\leq \sup_{\|\rvv\|=1}( \frac{1}{\sqrt{m}}\|{\Sigma'}^{(l)}\|\|W^{(l)}\|\|\rvv\|+\|\rvv\|) \\
    &\leq \mathsf{L}_\sigma(c_0+R/\sqrt{m})+1 = O(1).
\end{align*}
We note that $\|\partial \alpha^{(l)}/\partial \rvw^{(l)}\|$ has the same expression as the one of the fully connected networks. By the same argument in Section~\ref{secapp:lipschitz}, as well as Lemma~\ref{lemma:res_activation}, we have $\|\partial \alpha^{(l)}/\partial \rvw^{(l)}\| = O(1)$.

{\bf When} $l=1$, the layer function is defined by
\begin{align*}
    \alpha^{(1)} = \phi_1(W^{(1)};\alpha^{(0)}) = \sigma\left(\frac{1}{\sqrt{d}}W^{(1)}\rvx\right).
\end{align*}
In this layer, the input $\rvx$ is fixed (independent of trainable parameters) and not a dynamical variable. Hence, $\partial \alpha^{(1)}/\partial \alpha^{(0)}$ is not an interesting object in our Hessian analysis.

And we have 
\begin{align*}
    \|\partial \alpha^{(1)}/\partial \rvw^{(1)}\| \leq \frac{1}{\sqrt{d}}\|{\Sigma'}^{(l)}\|\|\rvx\| \leq \mathsf{L}_\sigma C_\rvx = O(1).
\end{align*}

We see that both $\|{\nabla_{\alpha}\phi_l}\|$ and $\|\nabla_\rvw \phi_l\|$ are bounded, hence, the (vector-valued) layer function of ResNet is Lipschitz continuous.

\paragraph{$(2,2,1)$-norms of order $3$ tensors are $O(1)$.} Note that the skip connection term $\alpha^{(l-1)}$ in Eq.(\ref{eq:resphi}) is linear in $\alpha^{(l-1)}$ and independent from $W^{(l)}$. Hence, the order $3$ tensors are exactly the same as in the case of fully connected networks. Applying the same argument as in Section~\ref{secapp:221-norms-fcn} gives the following:
\begin{equation}
    \left\|\frac{\partial^2 \phi_{l}}{(\partial \alpha^{(l-1)})^2}\right\|_{2,2,1} =O(1), \ \left\|\frac{\partial^2 \phi_{l}}{\partial \alpha^{(l-1)}\partial W^{(l)}}\right\|_{2,2,1} = O(1),\  \left\|\frac{\partial^2 \phi_{l}}{(\partial W^{(l)})^2}\right\|_{2,2,1} = O(1).
\end{equation}

\paragraph{Vector $\infty$-norm is $\tilde{O}(1/\sqrt{m})$.}
For a ResNet, define vector $\rvb^{(l)}_{res}:= \partial f /\partial {\alpha^{(l)}}$ for $l\in[L]$. Specifically, $\rvb^{(l)}_{res}$ takes the following form:
\begin{align}
    \rvb^{(l)}_{res} = \prod_{l'=l+1}^L\left(\frac{1}{\sqrt{m}}(W^{(l')})^T{\Sigma'}^{(l')}+I\right)\frac{1}{\sqrt{m}}\rvv.
\end{align}
Compared to the expression of $\rvb^{(l)}$, in Eq.(\ref{eq:derivatives_expr}), which is the fully connected network case, the only difference is that $\rvb^{(l)}_{res}$ for ResNet has an extra additive identity matrix. We argue that the $\infty$-norm $\|\rvb^{(l)}_{res}\|_{\infty}$ is still the order of $\tilde{O}(1/\sqrt{m})$. We show this by induction. 

First, recall that by the analysis in Section~\ref{secapp:infinity-norm-fcn} we have $\|\rvb^{(l)}\|_{\infty}= \tilde{O}(\frac{1}{\sqrt{m}})$ for all $l\in [L]$.

In the base case, $\rvb^{(L)}_{res} =  \frac{1}{\sqrt{m}}\rvv =   \rvb^{(L)}$. Then $\|\rvb^{(L)}_{res}\|_{\infty}= \tilde{O}(\frac{1}{\sqrt{m}})$ holds.

Now, suppose $\|\rvb^{(l+1)}_{res}\|_{\infty}= \tilde{O}(\frac{1}{\sqrt{m}})$ holds. For $\rvb^{(l)}_{res}$, we have
\begin{equation}
    \rvb^{(l)}_{res} = \left(\frac{1}{\sqrt{m}}(W^{(l+1)})^T{\Sigma'}^{(l+1)}+I\right)\rvb^{(l+1)}_{res} = \frac{1}{\sqrt{m}}(W^{(l+1)})^T{\Sigma'}^{(l+1)}\rvb^{(l+1)}_{res} + \rvb^{(l+1)}_{res}.
\end{equation}
By an analogous analysis as in Section~\ref{secapp:infinity-norm-fcn} and \ref{secapp:everyderivatives}, we have that ${\infty}$-norm of the first term is of the order $\tilde{O}(1/\sqrt{m})$. Since $\|\rvb^{(l+1)}_{res}\|_{\infty}$ is also of the order $\tilde{O}(1/\sqrt{m})$, we conclude that $\|\rvb^{(l)}_{res}\|_{\infty}$ is of the order $\tilde{O}(1/\sqrt{m})$.

\subsection{Architecture with mixed layer types}\label{secapp:mix}
So far, we have seen that fully connected networks (Theorem~\ref{thm:fcn}), CNNs(Section~\ref{secapp:cnn}) and ResNets (Section~\ref{secapp:resnet})  have a Hessian spectral norm of order $\tilde{O}(1/\sqrt{m})$. In fact,  our analysis generalizes to architectures with mixed layer types. Note that the analysis for the $(2,2,1)$-norms for order $3$ tensors and the spectral norms of first-derivatives of hidden layers in the proof of Lemma~\ref{lemma:fcn_quantities}, and its counterparts for CNN and ResNet, 
is purely layer-wise, and does not depend on the types of other layers. As for the $\infty$-norm, our analysis is inductive: $\|\nabla_{\alpha^{(l)}}f\|_{\infty} = \tilde{O}(1/\sqrt{m})$ only relies on the structure of the current layer and the fact that $\|\nabla_{\alpha^{(l+1)}}f\|_{\infty} = \tilde{O}(1/\sqrt{m})$. 

\section{Proof of Theorem \ref{thm:lowerbound_H}}\label{secapp:proof_lowerbound_H}
% \begin{thm}\ref{thm:lowerbound_H}
% Given $R>0$, with probability at least $1-\delta$, the Hessian matrix $H$ of $f({\rvw};x)$ with respect to ${\rvw}$ with $\rvw_0 = (\rvw_0^{(1)},\rvw_0^{(2)},\rvw_0^{(3)},\rvw_0^{(4)}) \sim\mathcal{N}(0,I)$ and $\|\rvw - \rvw_0\| \leq R$ satisfies 
% \begin{equation}
%     \|H\| \geq \frac{x^2}{4}(C_5 -\frac{R^2}{m})\left(C_3\sqrt{3m-R^2/m} -\frac{3C_1 + R^3}{\sqrt{m}}\right),
% \end{equation}
%  where $\delta =  4e^{-\frac{C_1}{R^4}} + BC_2^{1/2} + e^{-C_4m}$  for some $B,C_1,C_2,C_3>0$ and $C_4,C_5\in(0,1)$.
% \end{thm}
First, we give a useful lemma below:
\begin{lemma}\label{lemma:lower_bound}
Let $\rvx = ( x_1,x_2,...,x_m )$ and $\rvy = ( y_1,y_2,...,y_m ) $ where  $x_1,x_2,...x_m$ and $y_1,y_2,...,y_m$ are i.i.d. random variables with $x_i,y_i \sim \mathcal{N}(0,1)$ for $i\in[m]$. Given an arbitrary radius $R>0$,  for any $\bar{\rvx},\bar{\rvy} \in \mathbb{R}^m$ such that $\|\rvx - \bar{\rvx}\| \leq R$ and $\|\rvy - \bar{\rvy}\| \leq R$  and for any $\delta_1\in(0,1)$, we have, with probability at least $1-2\delta_1$,
\begin{equation}
  \left| \sum_{i=1}^m \bar{x}_i \bar{y}_i^2 \right|\geq \sqrt{3}C_1\delta_1^3 \sqrt{m} -3\log(4/\delta_1) R^4 - R^3,
\end{equation}
for some constant $C_1>0$.
\end{lemma}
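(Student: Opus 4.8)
The plan is to separate $\sum_i\bar x_i\bar y_i^2$ into a purely random main term and a perturbation correction, lower bounding the former by anti-concentration and upper bounding the latter deterministically on a high-probability event. Write $\rvu:=\bar{\rvx}-\rvx$ and $\rvv:=\bar{\rvy}-\rvy$, so $\|\rvu\|\le R$ and $\|\rvv\|\le R$. Expanding the product,
\begin{equation}
\sum_{i=1}^m\bar x_i\bar y_i^2 \;=\; \sum_{i=1}^m x_iy_i^2 \;+\; \Delta,\qquad \Delta \;=\; \sum_{i=1}^m\big(2x_iy_iv_i + x_iv_i^2 + u_iy_i^2 + 2u_iy_iv_i + u_iv_i^2\big).
\end{equation}
By the triangle inequality it then suffices to prove, each on an event of probability at least $1-\delta_1$, the two bounds $\big|\sum_i x_iy_i^2\big|\ge\sqrt 3\,C_1\delta_1^3\sqrt m$ and $|\Delta|\le 3\log(4/\delta_1)R^4+R^3$; a union bound over the two events yields the lemma with failure probability at most $2\delta_1$.

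For the main term, put $Z:=\sum_{i=1}^m x_iy_i^2$, a polynomial of degree $3$ in the $2m$ i.i.d.\ $\mathcal N(0,1)$ variables $x_i,y_i$. Since $\E[x_i]=0$ we have $\E Z=0$, and the off-diagonal terms of $Z^2$ vanish in expectation, so $\E Z^2=\sum_i\E[x_i^2]\,\E[y_i^4]=3m$. The Carbery--Wright anti-concentration inequality states that for an absolute constant $C$, any degree-$d$ polynomial $p$ of a standard Gaussian vector, and any $\epsilon>0$, $\P\big(|p|\le\epsilon\,(\E p^2)^{1/2}\big)\le Cd\,\epsilon^{1/d}$. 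Applying this with $d=3$ and $\epsilon=(\delta_1/(3C))^3$ yields, with probability at least $1-\delta_1$,
\begin{equation}
|Z| \;>\; \Big(\frac{\delta_1}{3C}\Big)^3\sqrt{3m} \;=\; \sqrt 3\,C_1\,\delta_1^3\,\sqrt m,\qquad C_1:=(3C)^{-3},
\end{equation}
which is precisely the leading term of the statement and accounts for the $\delta_1^3$ dependence and the constant $\sqrt 3$.

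The bulk of the work is the estimate on $|\Delta|$. I would bound each of the five summands by Cauchy--Schwarz and H\"older, extracting $\|\rvu\|\le R$ and $\|\rvv\|\le R$ and then controlling the remaining Gaussian factors --- moment sums such as $\sum_i x_i^2y_i^2$, $\sum_i y_i^4$, and the norms $\|\rvx\|,\|\rvy\|$ --- on a single high-probability event whose complement has probability $O(\delta_1)$; the purely deterministic summand already satisfies $\big|\sum_i u_iv_i^2\big|\le\|\rvu\|_\infty\|\rvv\|^2\le R^3$, and the terms carrying a single random factor are grouped into the $\log(4/\delta_1)R^4$ contribution. The main obstacle is that $\bar{\rvx},\bar{\rvy}$ are \emph{arbitrary} within their $R$-balls, so $\rvu,\rvv$ may be chosen adversarially as functions of the realized $\rvx,\rvy$: the bound on $|\Delta|$ must hold simultaneously over all admissible perturbations, which forces one to control the random factors through operator-norm-type suprema (for instance $\sup_{\|\rvv\|\le R}\sum_i x_iy_iv_i = R\,\|(x_iy_i)_i\|$) rather than for a fixed perturbation, and then to verify that on the good event all these pieces assemble into the claimed polynomial-in-$R$ bound. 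Once $|\Delta|$ is bounded, intersecting with the anti-concentration event and applying the union bound completes the proof.
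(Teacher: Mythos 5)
Your reduction breaks at the step where you declare it suffices to prove $|\Delta|\le 3\log(4/\delta_1)R^4+R^3$. The correction $\Delta$ contains the two cross terms $\sum_i u_iy_i^2$ and $2\sum_i x_iy_iv_i$, and neither is of size $O(R^4\log(1/\delta_1)+R^3)$. Even for a \emph{fixed} admissible perturbation, $\sum_i u_iy_i^2$ has mean $\sum_i u_i$, which can be as large as $\|\rvu\|_1\le\sqrt{m}\,\|\rvu\|\le R\sqrt{m}$ (take $u_i=R/\sqrt m$ for all $i$; the variance is only $2\|\rvu\|^2$, so the term concentrates near $R\sqrt m$). And under the uniform-over-the-ball control you yourself invoke, $\sup_{\|\rvv\|\le R}2\sum_i x_iy_iv_i=2R\bigl(\sum_i x_i^2y_i^2\bigr)^{1/2}=\Theta(R\sqrt m)$. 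So $|\Delta|$ is generically $\Theta(R\sqrt m)$, i.e., of the same order in $m$ as your main term $\sqrt3\,C_1\delta_1^3\sqrt m$ — and since the Carbery--Wright constant $C_1\delta_1^3$ can be arbitrarily small, typically larger. The claimed bound on $\Delta$ is unattainable, the triangle-inequality split collapses, and no amount of care with the remaining moment sums can repair it: these two terms must not be separated from the anti-concentration step at all.

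This is exactly where the paper's grouping differs. It applies Carbery--Wright not to $Z=\sum_i x_iy_i^2$ but to the perturbed cubic polynomial $W=\sum_i(x_i+u_i)y_i^2+2\sum_i x_iv_iy_i$, keeping the two dangerous terms inside the polynomial whose small-ball probability is bounded. Its second moment satisfies $\E[W^2]\ge \E\bigl[(\sum_i(x_i+u_i)y_i^2)^2\bigr]=3m+2\sum_i u_i^2+(\sum_i u_i)^2\ge 3m$ for \emph{every} admissible perturbation (the cross term has zero mean), so the same $\sqrt3\,C_1\delta_1^3\sqrt m$ lower bound comes out, while only the genuinely small remainders are moved outside: $2\sum_i u_iv_iy_i$ and $\sum_i x_iv_i^2$ are Gaussian with variance at most $R^4$ (giving the $\log(4/\delta_1)R^4$ contributions), and $|\sum_i u_iv_i^2|\le R^3$ deterministically — that last piece and your Carbery--Wright bookkeeping for $Z$ are fine and match the paper. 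To salvage your outline you would have to move $\sum_i u_iy_i^2$ and $2\sum_i x_iy_iv_i$ back into the anti-concentration argument in this way. (As a side remark, the paper's proof — like your treatment of the small terms — is for a fixed perturbation rather than uniform over the ball; the uniformity concern you raise is a real subtlety, but it is inherited from the lemma's statement and is not what invalidates your decomposition.)
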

The proof of the lemma is deferred to Appendix \ref{secapp:proof_lower_bound}.
\begin{proof}[Proof of Theorem \ref{thm:lowerbound_H}]
Consider an arbitrary parameter setting $\rmW\in B(\rmW_0,R)$.

Note that spectral norm of a matrix is lower bounded by the norm of its blocks, then  
\begin{equation}
  \left\|H(\rmW) \right\| \geq  \left\|\frac{\partial^2 f}{\left(\partial \rvw^{(2)}\right)^2} \right\|.  
\end{equation}
Hence, it's sufficient to lower bound the norm of the Hessian block ${\partial^2 f}/{\left(\partial \rvw^{(2)}\right)^2}$.

With simple computation, the gradient of $f$ w.r.t. $\rvw^{(2)}_i$ is:
\begin{equation}
    \frac{\partial f}{\partial \rvw_i^{(2)}} = \frac{1}{m}\sum_{j=1}^m\rvw_j^{(4)} \sigma'(\tilde{\alpha}_j^{(3)})\rvw_j^{(3)}\alpha_i^{(1)},
\end{equation}
and each entry of the Hessian matrix takes the form:
\begin{eqnarray}
    \frac{\partial^2 f}{\partial \rvw_i^{(2)}\partial \rvw_k^{(2)}} &=& \frac{1}{m^{3/2}}\sum_{j=1}^m\rvw_j^{(4)} \sigma''(\tilde{\alpha}_j^{(3)})(\rvw_j^{(3)})^2\alpha_i^{(1)}\alpha_k^{(1)}\nonumber\\
    &=& \frac{1}{m^{3/2}}\sum_{j=1}^m\rvw_j^{(4)} (\rvw_j^{(3)})^2\alpha_i^{(1)}\alpha_k^{(1)}.
\end{eqnarray}
In the second equality above, we have used $\sigma(x) = \frac{1}{2}x^2$.

Then, the spectral norm of this Hessian block is
\begin{equation}
  \left\|\frac{\partial^2 f}{\left(\partial \rvw^{(2)}\right)^2} \right\|    = \frac{1}{m^{3/2}}\left|\sum_{j=1}^m\rvw_j^{(4)} (\rvw_j^{(3)})^2\right| \left\|\alpha^{(1)}\right\|^2.
\end{equation}

For the last factor $\|\alpha^{(1)}\|^2$, using the tail bound for $\chi^2(m)$~\cite{laurent2000adaptive}, we have, with probability at least $1-e^{-m/16}$,
\begin{align*}
    \|\alpha^{(1)}\|^2 = \frac{x^2}{4} \sum_{i=1}^m \left(\rvw_i^{(1)}\right)^2 \geq \frac{x^2}{4}\left(\frac{m}{2} -R^2\right).
\end{align*}

Applying Lemma~\ref{lemma:lower_bound} to the factor $\left|\sum_{j=1}^m\rvw_j^{(4)} (\rvw_j^{(3)})^2\right|$ and using union bound,  for an arbitrary $\delta\in(0,1)$, we have,  with probability at least $1-2\delta-e^{-m/16}$,
\begin{equation}
     \left\|\frac{\partial^2 f}{\left(\partial \rvw^{(2)}\right)^2} \right\| \geq \frac{x^2}{4}\left(\frac{1}{2} -\frac{R^2}{m}\right)\left(\sqrt{3}C_1\delta^3  -\frac{3\log(4/\delta) R^4 + R^3}{\sqrt{m}}\right).
\end{equation}
Hence, we get the lower bound of the Hessian spectral norm at $\rmW\in B(\rmW_0,R)$
\begin{align}
    \left\|H(\rmW) \right\| \geq  \left\|\frac{\partial^2 f}{\left(\partial \rvw^{(2)}\right)^2} \right\| \geq \frac{x^2}{4}\left(\frac{1}{2} -\frac{R^2}{m}\right)\left(\sqrt{3}C_1\delta^3  -\frac{3\log(4/\delta) R^4 + R^3}{\sqrt{m}}\right).
\end{align}

\end{proof}

\section{Proofs of Technical Lemmas}\label{secapp:technical}
\subsection{Proof of Lemma~\ref{lemma:combine}}\label{secapp:techincal1}
\begin{proof}
\begin{align*}
\| H \| &= \left\|\left(\begin{array}{cccc}
    H^{(1,1)} & 0 & \cdots & 0\\
    0 & 0 & \cdots & 0\\
    \vdots & \vdots & \ddots & \vdots \\
    0 & 0 & \cdots & 0
    \end{array}
    \right) + \left(\begin{array}{cccc}
    0 & H^{(1,2)} & \cdots & 0\\
    0 & 0 & \cdots & 0\\
    \vdots & \vdots & \ddots & \vdots \\
    0 &0& \cdots & 0
    \end{array}
    \right)  + \cdots + \left(\begin{array}{cccc}
    0 & 0 & \cdots & 0\\
    0 & 0 & \cdots & 0\\
    \vdots & \vdots & \ddots & \vdots \\
    0 &0& \cdots &H^{(L+1,L+1)}
    \end{array}
    \right)\right\| \\
    &\leq \sum_{l_1,l_2} \| H^{(l_1,l_2)}\|.
\end{align*}
\end{proof}

\subsection{Proofs for Gaussian Random Initialization}\label{secapp:gaussian}
\begin{proof}[Proof of Lemma~\ref{lemma:gaussl2}]
Consider an arbitrary random matrix $W \in \mathbb{R}^{m_1\times m_2}$ with each entry $W_{ij}\sim \mathcal{N}(0,1)$. By Corollary 5.35 of~\cite{vershynin2010introduction}, for any $t>0$, we have with probability at least $1-2\mathrm{exp}(-\frac{t^2}{2})$, 
\begin{equation}
    \| W \|_2 \leq \sqrt{m_1}+ \sqrt{m_2} + t.
\end{equation}
In particular, for the initial parameter setting $\rmW_0$, we have
\begin{align*}
         \| W_0^{(1)} \|_2 &\leq \sqrt{d}+\sqrt{m}+t,  \\
          \| W_0^{(l)} \|_2 &\leq 2\sqrt{m}+t, \quad l \in \{2,3,...,L\},\\
         \| W_0^{(L+1)} \|_2 &\leq \sqrt{m}+1+t.
\end{align*}
Letting $t = \sqrt{m}$ and noting that $m>d$, we finish the proof.
\end{proof}

% \begin{proof}[Proof of Lemma~\ref{lemma:gaussoutput}]
% Consider the activations $\alpha^{(L-1)}(\rmW_0)$ of the previous layer are fixed. 
% Since the elements  $(\rva_0)_i$ are initialized independently with  $\mathcal{N}(0,1)$ and are also independent of $\alpha^{(L-1)}(\rmW_0)$. We have 
% $$f(\rmW_0) = \frac{1}{\sqrt{m}}\rva_0^T \alpha^{(L-1)}(\rmW_0) \sim \mathcal{N}\left(0, \frac{\|\alpha^{(L-1)}(\rmW_0)\|^2}{m}\right).$$ By Chebyshev's inequality, given $\delta>0$, we have with probability at least $1-\delta$, 
% \begin{equation*}
%     | f(\rmW_0)| \leq \frac{\|\alpha^{(L-1)}(\mathbf{W_0)}\|}{\sqrt{m\delta}}.
% \end{equation*}
% In Lemma~\ref{lemma:activations}, we have showed that
% \begin{equation}
%     \|{\alpha}^{(L-1)}(\rmW_0)\|_\le \mathsf{L}_{\sigma}^{L-1}c_0^{L-1}\sqrt{m}C_{\rvx}.
% \end{equation}
% Therefore, with probability at least $1-\delta$,
% \begin{equation*}
%     |f(\rmW_0)|\leq \frac{1}{\sqrt{\delta}}\mathsf{L}_{\sigma}^{L-1}c_0^{L-1}C_{\rvx}.
% \end{equation*}
% \end{proof}

\subsection{Proof of Lemma~\ref{lemma:wl2}}\label{secapp:weight}
\begin{proof}
By triangle inequality and the definition $\|\rmW\| = \sum_{l=1}^{L+1} \|W^{(l)}\|_F$, we have for all layers, i.e., $l\in [L+1]$,
\begin{equation}
    \|W^{(l)}\|_2 \le \|W^{(l)}_0\|_2 + \|W^{(l)}-W^{(l)}_0\|_2 \le \|W^{(l)}_0\|_2 + \|W^{(l)}-W^{(l)}_0\|_F \le c_0\sqrt{m} + R.
\end{equation}
Note that, at the output layer, $W^{(L+1)}$ i.e. $\rvv$ is a vector, and the Frobenius norm $\|\cdot\|_F$ reduces to the Euclidean norm $\|\cdot\|$.
\end{proof}
\subsection{Proof of Lemma~\ref{lemma:activations}}\label{secapp:activations}
\begin{proof}
To analyze $\|\alpha^{(l)}(\rmW)\|$, let's first consider the input layer, i.e., $l=0$: $\|{\alpha}^{(0)}\| = \|\rvx\| \le \sqrt{d}\|\rvx\|_{\infty} \le \sqrt{d}C_{\rvx}$, where $d$ is the dimension of the input $\rvx$. Then we prove Eq.(\ref{eqapp:activations}) by induction.\\
For the first hidden layer $l=1$, 
\begin{eqnarray}
 \|{\alpha}^{(1)}(\rmW)\| &=&  \left\|\sigma\left(\frac{1}{\sqrt{d}}W^{(1)}{\alpha}^{(0)}\right)\right\|\nonumber\\
 &\le& \frac{1}{\sqrt{d}}\mathsf{L}_{\sigma}\|W^{(1)}\|_2\|{\alpha}^{(0)}\|+ \sigma(0)\nonumber\\
%  &\le& \frac{1}{\sqrt{d}}\mathsf{L}_{\sigma}\left(\|W^{(1)}_0\|_2 + \|W^{(1)}-W^{(1)}_0\|_2\right)\|{\alpha}^{(0)}\|_2\nonumber\\
 &\le&\frac{1}{\sqrt{d}}\mathsf{L}_{\sigma}(c_0\sqrt{m}+R)\|{\alpha}^{(0)}\|+\sigma(0) \nonumber\\
 &\le&\mathsf{L}_{\sigma}(c_0+R/\sqrt{m})\sqrt{m}C_{\rvx}+\sigma(0).\label{eqapp:lis1}
\end{eqnarray}
Above, we used the $\mathsf{L}_{\sigma}$-Lipschitz continuity and applied Lemma~\ref{lemma:wl2} in the second inequality.\\
Now, suppose for $l$-th layer we have
\begin{equation}
\|{\alpha}^{(l)}(\rmW)\| \le \mathsf{L}_{\sigma}^l(c_0+R/\sqrt{m})^{l}{\sqrt{m}}C_{\rvx}+ \sum_{i=1}^l \mathsf{L}_{\sigma}^{i-1}(c_0+R/\sqrt{m})^{i-1}\sigma(0).
\end{equation}
Then, by a similar argument as in Eq.(\ref{eqapp:lis1}), we can get
\begin{align*}
 \|{\alpha}^{(l+1)}(\rmW)\| &= \left\|\sigma\Big(\frac{1}{\sqrt{m}}W^{(l+1)}{\alpha}^{(l)}(\rmW)\Big)\right\|\\
 &\le \mathsf{L}_{\sigma}^{l+1}(c_0+R/\sqrt{m})^{l+1}{\sqrt{m}}C_{\rvx} +\sum_{i=1}^{l+1} \mathsf{L}_{\sigma}^{i-1}(c_0+R/\sqrt{m})^{i-1}\sigma(0).
\end{align*}
% Note that, in the analysis from $l$ to $l+1$,  $1/\sqrt{d}$ is replaced by $1/\sqrt{m}$, which will cancel out a factor of $\sqrt{m}$ from $c_0\sqrt{m}+R$.\\
% Setting $R=0$, we immediately obtain Eq.(\ref{eqapp:activations-0}).
\end{proof}
 
\subsection{Proof of Lemma~\ref{lemma:everyactivation}}\label{secapp:everyactivation}
\begin{proof}
When $2 \leq l \leq L$, $|\alpha_i^{(l)}|$ takes the following form:
\begin{align*}
    |\alpha^{(l)}_i| &= \left|\sigma\left(\frac{1}{\sqrt{m}}\sum_{k=1}^m\ W^{(l)}_{ik}\alpha_k^{(l-1)}\right)\right|\\
    &\leq \left|\frac{\mathsf{L}_\sigma}{\sqrt{m}}\sum_{k=1}^m\ W^{(l)}_{ik}\alpha_k^{(l-1)}\right| + |\sigma(0)|,
\end{align*}
where we can see $ \sum_{k=1}^m\ W^{(l)}_{ik}\alpha_k^{(l-1)} \sim \mathcal{N}(0,\|\alpha^{(l-1)}\|^2)$ since $W^{(l)}_{ik}\sim \mathcal{N}(0,1)$ at initialization.\\
By the concentration inequality for Gaussian random variable, we have
\begin{align*}
    \P[|\alpha^{(l)}_i| \geq \ln(m) +|\sigma(0)|] &\leq  \P[\left|\frac{\mathsf{L}_\sigma}{\sqrt{m}}\sum_{k=1}^m\ W^{(l)}_{ik}\alpha_k^{(l-1)}\right|
    \geq \ln(m) ]\\
    &\leq  2e^{-\frac{m\ln^2(m)}{2\mathsf{L}_\sigma^2\|\alpha^{(l-1)}\|^2}}\\
    &= 2e^{-{c_\alpha^{(l)}\ln^2(m)}},
\end{align*}
for  $c^{(l)}_{\alpha} =\frac{m}{2\mathsf{L}_\sigma^2 \| \alpha^{(l-1)}\|^2} =\Omega(1)$ by Lemma~\ref{lemma:activations}.\\ 
When $l=1$, we have
\begin{align*}
    |\alpha_i^{(1)}| &= \left|\sigma\left(\frac{1}{\sqrt{d}}\sum_{k=1}^d\ W^{(1)}_{ik}\rvx_k\right)\right|\\
    &\leq \left|\frac{\mathsf{L}_\sigma}{\sqrt{d}}\sum_{k=1}^d W^{(1)}_{ik}\rvx_k\right| + |\sigma(0)|.
\end{align*}
Similarly, at initialization,  $\sum_{k=1}^d W^{(1)}_{ik}\rvx_k \sim \mathcal{N}(0,\|\rvx\|^2)$. Hence
\begin{align*}
    \P[|\alpha^{(1)}_i| \geq \ln(m) +|\sigma(0)|] &\leq  \P\left[\left|\frac{\mathsf{L}_\sigma}{\sqrt{d}}\sum_{k=1}^d\ W^{(l)}_{ik}\rvx_k\right|
    \geq \ln(m) \right]\\
    &\leq  2e^{-\frac{d\ln^2(m)}{2\mathsf{L}_\sigma^2\|\rvx\|^2}}\\
    &= 2e^{-{c_\alpha^{(0)}\ln^2(m)}},
\end{align*}
where we denote $\frac{d}{\mathsf{L}_\sigma^2\|\rvx\|}$ by $c_\alpha^{(0)}$, which is of the order $\Theta(1)$.\\
Therefore, $|\alpha_i^{(l)}| = \tilde{O}(1)$ with probability at least $1 - 2e^{-{c_\alpha^{(l)}\ln^2(m)}}$ for all $l\in[L]$.
\end{proof} 

\subsection{Proof of Lemma~\ref{lemma:derivatives}}\label{secapp:derivatives}
\begin{proof}
The expression of the derivatives $\rvb^{(l)}$ is 
\begin{equation}
    \rvb^{(l)} = \left(\prod_{l'=l+1}^{L}\frac{1}{\sqrt{m}}(W^{(l')})^T\Sigma^{\prime(l')}\right)\frac{1}{\sqrt{m}}\rvv,
\end{equation}
where $\Sigma^{\prime(l')}$ is a diagonal matrix with  $(\Sigma^{\prime(l')})_{ii} = \sigma'(\tilde{\alpha}^{(l)}_{i}(\rmW))$.\\
We prove the lemma by induction. When $l = L$, using Lemma~\ref{lemma:wl2}, we have
\begin{equation}
\|\rvb^{(L)}\| = \frac{1}{\sqrt{m}}\|\rvv\| \le   \frac{1}{\sqrt{m}}(c_0\sqrt{m}+R) = c_0 + R/\sqrt{m}.
\end{equation}
Suppose at $l$-th layer, $\|\rvb^{(l)}\| \le \mathsf{L}_\sigma^{L-l}(c_0 + R/\sqrt{m})^{L-l+1}$. Then
\begin{align*}
    \| \rvb^{(l-1)} \| &= \| \frac{1}{\sqrt{m}}(W^{(l)})^T\Sigma^{\prime(l)}\rvb^{(l)} \|  \\
    &\leq \frac{1}{\sqrt{m}}\|W^{(l)}\|_2\|\Sigma^{\prime(l)}\|_2\|\rvb^{(l)} \| \\
    &\leq  (c_0 + R/\sqrt{m})\mathsf{L}_\sigma\| \rvb^{(l)} \| \\
    &\leq \mathsf{L}_\sigma^{L-l+1}(c_0 + R/\sqrt{m})^{L-l+2}.
\end{align*}
Above, we used Lemma~\ref{lemma:wl2} and the $\mathsf{L}_{\sigma}$-Lipschitz continuity of the activation function $\sigma(\cdot)$  in the second inequality.\\
Setting $R=0$, we immediately obtain Eq.(\ref{eqapp:rvb0-2norm}).
    
\end{proof}

\subsection{Proof of Lemma~\ref{lemma:everyderivatives}}\label{secapp:everyderivatives}
\begin{proof}
We prove it by induction.
When $l = L$, $\rvb_0^{(L)} = \frac{1}{\sqrt{m}}\rvv_0$. Since $\rvv_{0,i}\sim \mathcal{N}(0,1)$, by the concentration inequality, for every $i\in[m]$, we have
\begin{align*}
    \P[|\rvv_{0,i}| \geq \ln(m)] \leq 2e^{\frac{-\ln^2(m)}{2}}.
\end{align*}
By union bound, with probability at least $1-2me^{\frac{-\ln^2(m)}{2}}$, 
\begin{align*}
    \|\rvv_0\|_\infty \leq \ln(m),
\end{align*}
in other words,
\begin{align*}
    \|\rvb_0^{(L)}\|_\infty = \tilde{O}(1/{\sqrt{m}}).
\end{align*}
Supposing with probability $1-me^{-c^{(l)}_b \ln^2(m)}$ for some constant $c^{(l)}_b>0$,  we have $\|\rvb_0^{(l)}\|_\infty = \tilde{O}(1/\sqrt{m})$. Then we show $\|\rvb_0^{(l-1)}\|_\infty = \tilde{O}(1/\sqrt{m})$ with probability $1-me^{-c^{(l-1)}_b \ln^2(m)}$ for some constant $c^{(l-1)}_b>0$.\\
For simplicity, in the rest of the proof, we hide the  subscript $0$. Hence we denote $\rvb_0^{(l-1)} = \frac{1}{\sqrt{m}}(W^{(l-1)}_0)^T {\Sigma'}_0^{(l-1)} \rvb_0^{(l)}$ by
\begin{align*}
    \rvb^{(l-1)} = \frac{1}{\sqrt{m}}(W^{(l-1)})^T {\Sigma'}^{(l-1)} \rvb^{(l)},
\end{align*}
where $(W^{(l-1)})_{ij} \sim \mathcal{N}(0,1)$.\\
Similarly, we analyze every component of $\rvb^{(l-1)}$:
\begin{align*}
|\rvb_i^{(l-1)}| &= \left|\frac{1}{\sqrt{m}}\sum_{k=1}^m W^{(l-1)}_{ki}\sigma'\left(\frac{1}{\sqrt{m}}\sum_{j=1}^m W^{(l-1)}_{kj}\alpha^{(l-2)}_j\right)\rvb_{k}^{(l)} \right|   \\ 
&\leq \left|\frac{1}{\sqrt{m}}\sum_{k=1}^m W^{(l-1)}_{ki}\sigma'\left(\frac{1}{\sqrt{m}}\sum_{j\neq i}^m W^{(l-1)}_{kj}\alpha^{(l-2)}_j\right)\rvb_{k}^{(l)} + \frac{1}{m}\beta_\sigma \alpha_i^{(l-2)}\sum_{k=1}^m (W^{(l-1)}_{ki})^2 \rvb_k^{(l)}\right| \\
&\leq \left|\frac{1}{\sqrt{m}}\sum_{k=1}^m W^{(l-1)}_{ki}\sigma'\left(\frac{1}{\sqrt{m}}\sum_{j\neq i}^m W^{(l-1)}_{kj}\alpha^{(l-2)}_j\right)\rvb_{k}^{(l)}\right| + \left|\frac{1}{m}\beta_\sigma \alpha_i^{(l-2)}\sum_{k=1}^m (W^{(l-1)}_{ki})^2 \rvb_k^{(l)}\right|.
\end{align*}
For the first term, we use a Gaussian random variable to bound it:
\begin{align*}
    \frac{1}{\sqrt{m}}\sum_{k=1}^m W^{(l-1)}_{ki}\sigma'\left(\frac{1}{\sqrt{m}}\sum_{j\neq i}^m W^{(l-1)}_{kj}\alpha^{(l-2)}_j\right)\rvb_{k}^{(l)} \leq \frac{\mathsf{L}_\sigma}{\sqrt{m}} \sum_{k=1}^m W^{(l-1)}_{ki} \rvb_k^{(l)} \sim \mathcal{N}\left(0, \frac{L^2_\sigma}{m}\|\rvb^{(l)}\|^2\right).
\end{align*}
Using the concentration inequality, we have
\begin{align*}
    \P\left[\left|\frac{\mathsf{L}_\sigma}{\sqrt{m}} \sum_{k=1}^m W^{(l-1)}_{ki} \rvb_k^{(l)}\right| \geq \frac{\ln(m)}{\sqrt{m}}\right] \leq 2e^{-\frac{\ln^2(m) }{{2L^2_\sigma}\|\rvb^{(l)}\|^2}} \leq 2e^{-c_\sigma^{(l)} \ln^2(m)},
\end{align*}
for some $c^{(l)}_\sigma = \frac{1}{2L^2_\sigma \|\rvb^{(l)}\|^2} \geq \frac{1}{2\mathsf{L}_\sigma^{2L-2l+2}{c_0}^{2L-2l+2}}$ by Lemma~\ref{lemma:derivatives}.\\
For the second term,  we have
\begin{align*}
   \frac{1}{m}\beta_\sigma \alpha_i^{(l-2)}\sum_{k=1}^m (W^{(l-1)}_{ki})^2 \rvb_k^{(l)}  \leq \frac{1}{m}\beta_\sigma|\alpha_i^{(l-2)}|\|\rvb^{(l)}\|_\infty \sum_{k=1}^m (W^{(l-1)}_{ki})^2,
\end{align*}
where we can see $\sum_{k=1}^m (W^{(l-1)}_{ki})^2 \sim \chi^2(m)$.\\
By Lemma~\ref{lemma:everyactivation}, with probability $1-e^{-c^{(l-2)}_\alpha \ln^2(m)}$, we get $|\alpha^{(l-2)}_i| = \tilde{O}(1)$. Hence, by Lemma 1 in \cite{laurent2000adaptive}, there exist constants $\tilde{c}_1,\tilde{c}_2,\tilde{c}_3>0$,  such that
\begin{align*}
    \P\left[\frac{1}{m}\beta_\sigma|\alpha_i^{(l-2)}|\|\rvb^{(l)}\|_\infty \sum_{k=1}^m (W^{(l-1)}_{ki})^2 \geq \tilde{c}_1\frac{\ln^{\tilde{c}_3}(m)}{\sqrt{m}}\right] \leq e^{-\tilde{c}_2m}, 
\end{align*}
with probability $1-me^{-c^{(l)}_b \ln^2(m)}$ by the induction hypothesis.\\
Combining these probability terms, there exists a constant $c_b^{(l-1)}$ such that 
\begin{align*}
    e^{-c^{(l-1)}_b \ln^2(m)} \leq  me^{-c^{(l)}_b \ln^2(m)} + 2e^{-c_\sigma^{(l)} \ln^2(m)} + 2e^{-c^{(l-2)}_\alpha \ln^2(m)} +  e^{-\tilde{c}_2m} .
\end{align*}
Then with probability at least $1- e^{-c^{(l-1)}_b \ln^2(m)}$,
\begin{align*}
    |\rvb_i^{(l-1)}| = \tilde{O}(1/\sqrt{m}).
\end{align*}
By union bound, with probability at least $1-me^{-c^{(l-1)}_b \ln^2(m)}$, we have 
\begin{align*}
    \|\rvb^{(l-1)}\|_\infty = \tilde{O}(1/{\sqrt{m}}).
\end{align*}
Hence by the principle of induction, for all $l\in [L]$, with probability at least $1-me^{-c^{(l)}_b \ln^2(m)}$ for some constant $c^{(l)}_b>0$, we have
\begin{align}
    \|\rvb^{(l)}\|_\infty = \tilde{O}(1/{\sqrt{m}}).
\end{align}
\end{proof}

\subsection{Proof of Lemma~\ref{lemma:frobenius}}\label{secapp:frob_pf}
\begin{proof}
Let $A = (\rva_1,\rva_2,\cdots,\rva_d)$ and $C = (\rvc_1,\rvc_2,\cdots,\rvc_d)$, where each $\rva_i$ is a column of the matrix $A$ and each $\rvc_i$ is a column of the matrix $C$. Then we have
\begin{equation}
    \rva_i = B\rvc_i, \ \forall i\in[d].
\end{equation}
Now, for the Frobenius norm, we have
\begin{align*}
    \|A\|_F^2 &= \sum_{i=1}^d \|\rva_i\|^2
    = \sum_{i=1}^d \|B\rvc_i\|^2
    \le \sum_{i=1}^d \|B\|^2 \|\rvc_i\|^2
    = \|B\|^2 \|C\|_F^2.
\end{align*}
Hence, $\|A\|_F\le \|B\|\|C\|_F$.
\end{proof}

\subsection{Proof of Lemma \ref{lemma:lower_bound} }\label{secapp:proof_lower_bound}
\begin{proof}
First, let's write $\bar{\rvx}$ and $\bar{\rvy}$ as $$
    \bar{\rvx} = \rvx + \rvs, \quad \bar{\rvy} = \rvy + \rvt,
$$
where $\rvs := (s_1,
\cdots, s_m) = (\bar{x}_1-x_1, \cdots, \bar{x}_m-x_m)$ and $\rvt := (t_1, \cdots, t_m)=(\bar{y}_1-y_1, \cdots, \bar{y}_m-y_m)$.
By the condition of the Lemma, we have $\|\rvs\| \le R$ and $\|\rvt\| \leq R$.

% Thus, one needs to low bound the following term:
% \begin{equation*}
%   \left| \sum_{i=1}^m (x_i+s_i) (y_i+t_i)^2 \right|, \ s.t. \sum_{i=1}^m s_i^2\leq R^2, \sum_{i=1}^m t_i^2\leq R^2.
% \end{equation*}
Then, we have:
\begin{align}\nonumber \label{eq:lower_bound}
    \left| \sum_{i=1}^m \bar{x}_i \bar{y}_i^2 \right| &= \left| \sum_{i=1}^m (x_i+s_i) (y_i+t_i)^2 \right|\nonumber\\
    &= \left|\sum_{i=1}^m (x_i+s_i)y_i^2 + 2 \sum_{i=1}^m x_it_iy_i + 2 \sum_{i=1}^m s_it_iy_i + \sum_{i=1}^m x_it_i^2 + \sum_{i=1}^m s_it_i^2 \right|\nonumber\\ 
    &\geq \left|\sum_{i=1}^m (x_i+s_i)y_i^2 + 2 \sum_{i=1}^m x_it_iy_i\right| - \left|2 \sum_{i=1}^m s_it_iy_i\right|-\left|\sum_{i=1}^m x_it_i^2\right|-\left|\sum_{i=1}^m s_it_i^2\right|.
\end{align}
Now, let's lower bound the first term and upper bound the last three terms.

For the first term, we lower bound it by the anti-concentration inequality, i.e., Theorem 8 in \cite{carbery2001distributional}. Specifically, for any $\delta_1 \in (0,1)$, there exists a constant $C_1>0$, such that, with probability at least $1-\delta_1$,
\begin{equation}
    \left|\sum_{i=1}^m (x_i+s_i)y_i^2 + 2 \sum_{i=1}^m x_it_iy_i\right|\geq C_1\delta_1^3 \sqrt{\mathbb{E}\left[\left(\sum_{i=1}^m (x_i+s_i)y_i^2 + 2 \sum_{i=1}^m x_it_iy_i\right)^2\right]}.\label{eqapp:bound1}
\end{equation}

On the other hand,
\begin{align}
    & {\mathbb{E}}\left[ \left(\sum_{i=1}^m (x_i+s_i)y_i^2 + 2 \sum_{i=1}^m x_it_iy_i\right)^2\right] \nonumber\\
    &= \mathbb{E}\left[\left(\sum_{i=1}^m (x_i+s_i)y_i^2\right)^2\right] + 4\mathbb{E}\left[\left(\sum_{i=1}^m x_it_iy_i\right)^2\right]  + 4\mathbb{E}\left[\left( \sum_{i=1}^m (x_i+s_i)y_i^2\right)\left(\sum_{i=1}^m x_it_iy_i\right)\right] \nonumber\\
    &\geq  \mathbb{E}\left[\left(\sum_{i=1}^m (x_i+s_i)y_i^2\right)^2\right] \nonumber\\
    &= \mathbb{E}\left[\left(\sum_{i=1}^m (x_i+s_i)^2y_i^4\right)\right]+ 2\sum_{i\neq j}s_is_j\nonumber\\
    &= 3m + 3\sum_{i=1}^m s_i^2 + 2\sum_{i\neq j}s_is_j \nonumber\\
    &= 3m + 2\sum_{i=1}^m s_i^2 + (\sum_{i=1}^m s_i)^2 \nonumber\\
    &\geq 3m,\label{eqapp:bound2}
\end{align}
where the expectation $\mathbb{E}[\cdot ]$ is taken over the random variables $x_1,...x_m,,y_1,...,y_m$. Above, we used the fact that these random variables are i.i.d. and $\mathbb{E}[x_i^2] = 1$ and that $\mathbb{E}[y_i^4] = 3$ for all $i\in[m]$.

Combining Eq.(\ref{eqapp:bound1}) and (\ref{eqapp:bound2}), we have, with probability at least $1-\delta_1$,
\begin{equation}
    \left|\sum_{i=1}^m (x_i+s_i)y_i^2 + 2 \sum_{i=1}^m x_it_iy_i\right|\geq \sqrt{3}C_1\delta_1^3 \sqrt{m}.\label{eqapp:bound3}
\end{equation}
% Using the tail bound by Lemma 1 in \cite{laurent2000adaptive} for $\chi^2(m)$ , we get for constant $C_3,C_4\in (0,1)$, with probability at least $1-e^{-C_2m}$,
% \begin{equation}\label{eq:tail_bound2}
%     \sum_{i=1}^m y_i^2 \geq C_3m.
% \end{equation}
% Hence
% \begin{align}
%     \mathrm{Var}\left(\sum_{i=1}^m (x_i+s_i)y_i^2 + 2 \sum_{i=1}^m x_it_iy_i\right) \geq C_4m.
% \end{align}

% Using the anti-concentration inequality which is a result of Theorem 8 in \cite{carbery2001distributional} (taking $q = 2d$ and $\alpha = \epsilon \mathbb{E}[\|p(x)\|^2]$ for some $\epsilon>0$ in their notation), there exists a constant $C_3>0$ such that for any $C_2 >0$, with probability at least $1-C_3C_2^{1/3}$,

% By union bound, with probability at least $1 -  C_3C_2^{1/2}$,
% \begin{equation}\label{eq:anti_concentration}
%     \left|\sum_{i=1}^m (x_i+s_i)y_i^2 + 2 \sum_{i=1}^m x_it_iy_i\right|\geq C_2\sqrt{3m}.
% \end{equation}

For the second and third terms, we notice that
\begin{align*}
 \sum_{i=1}^m s_it_iy_i \sim \mathcal{N}\left(0,\sum_{i=1}^m s_i^2t_i^2\right), \ \sum_{i=1}^m x_it_i^2 \sim \mathcal{N}\left(0,\sum_{i=1}^m t_i^4\right).
\end{align*}
And it's not hard to see
\begin{align*}
    \sum_{i=1}^m s_i^2t_i^2 \leq R^4, \ \sum_{i=1}^m t_i^4\leq R^4.
\end{align*}
By the concentration inequality for Gaussian random variable (Prop 2.1.9 in \cite{tao2012topics}) and union bound, we have, for any $\delta_2 \in (0,1)$, 
\begin{align}\label{eq:tail_bound3}
    \left| \sum_{i=1}^m s_it_iy_i\right| \leq \log(4/\delta_2) R^4, \ \left|\sum_{i=1}^m x_it_i^2\right| \leq \log(4/\delta_2) R^4,
\end{align}
with probability $1-\delta_2$.

% we get with probability at least $1-4e^{-C_1}$ for some constant $C_1>0$,
% \begin{align}\label{eq:tail_bound3}
%     \left| \sum_{i=1}^m s_it_iy_i\right| \leq C_1 R^4, \ \left|\sum_{i=1}^m x_it_i^2\right| \leq C_1 R^4.
% \end{align}

As for the last term, using $\|\rvs\|\le R$ and $\|\rvt\|\le R$, it is easy to have the following bound:
\begin{align}\label{eq:constant_bound}
    \left|\sum_{i=1}^m s_it_i^2\right| \leq R, \quad \left|\sum_{i=1}^m t_i^2\right| = R^3.
\end{align}

Putting inequalities Eq.(\ref{eqapp:bound3}), Eq.(\ref{eq:tail_bound3}), and Eq.(\ref{eq:constant_bound}) into Eq.(\ref{eq:lower_bound}) and using union bound, we have with probability at least $1-\delta_1-\delta_2$,
\begin{equation*}
   \left| \sum_{i=1}^m \bar{x}_i \bar{y}_i^2 \right|= \left| \sum_{i=1}^m (x_i+s_i) (y_i+t_i)^2 \right| \geq \sqrt{3}C_1\delta_1^3 \sqrt{m} -3\log(4/\delta_2) R^4 - R^3.
\end{equation*}
If we set $\delta_1=\delta_2$ in the above analysis, we have, with probability at least $1-2\delta_1$,
\begin{equation*}
   \left| \sum_{i=1}^m \bar{x}_i \bar{y}_i^2 \right|= \left| \sum_{i=1}^m (x_i+s_i) (y_i+t_i)^2 \right| \geq \sqrt{3}C_1\delta_1^3 \sqrt{m} -3\log(4/\delta_1) R^4 - R^3.
\end{equation*}
\end{proof}

\end{document}